\title[Towards Instance Optimal Bounds for Best Arm Identification]{Towards Instance Optimal Bounds for Best Arm Identification}
\newcommand{\eps}{\varepsilon}
\newcommand{\ME}{\textsf{Med-Elim}}
\newcommand{\US}{\textsf{Unif-Sampl}}
\newcommand{\FT}{\textsf{Frac-Test}}
\newcommand{\EL}{\textsf{Elimination}}
\newcommand{\Ex}{\mathrm{E}}
\newcommand{\KL}{\mathrm{KL}}
\newcommand{\polylog}{\operatorname*{polylog}}
\newcommand{\bestarm}{Best-$1$-Arm}
\newcommand{\bestkarm}{Best-$k$-Arm}
\newcommand{\sign}{SIGN-$\xi$}
\newcommand{\Ebad}{\mathcal{E}^{\mathrm{bad}}}
\newcommand{\INT}{\mathcal{I}}
\newcommand{\cnt}{\mathrm{cnt}}
\newcommand{\algguess}{\textsf{Complexity-Guessing}}
\newcommand{\algent}{\textsf{Entropy-Elimination}}
\newcommand{\algnewtoy}{\textsf{Known-Complexity}}
\newcommand{\lo}{\mathrm{low}}
\newcommand{\hi}{\mathrm{high}}
\newcommand{\mi}{\mathrm{mid}}
\newcommand{\rbad}{r^{\mathrm{bad}}}
\newtheorem{Theorem}{Theorem}[section]
\newtheorem{Lemma}[Theorem]{Lemma}
\newtheorem{Definition}[Theorem]{Definition}
\newtheorem{Remark}[Theorem]{Remark}
\newtheorem{Observation}[Theorem]{Observation}
\newtheorem{Conjecture}[Theorem]{Conjecture}
\newtheorem{Corollary}[Theorem]{Corollary}
\newtheorem{Fact}[Theorem]{Fact}
\newcommand{\alg}{\mathbb{A}}
\newcommand{\algp}{\mathbb{A}'}
\newcommand{\algsign}{\mathbb{A}^{\mathsf{new}}}
\newcommand{\arm}{A}
\newcommand{\arment}{\mathsf{Ent}}
\newcommand{\chigh}{c^{\hi}}
\newcommand{\clow}{c^{\lo}}
\newcommand{\CORRECT}{$\delta$-correct}
\newcommand{\dhigh}{d^{\hi}}
\newcommand{\distr}{\mathcal{D}}
\newcommand{\dlow}{d^{\lo}}
\newcommand{\dmid}{d^{\mi}}
\newcommand{\eat}[1]{{}}
\newcommand{\event}{\mathcal{E}}
\newcommand{\Gap}[1]{\Delta_{[#1]}}
\newcommand{\Mean}[1]{\mu_{[#1]}}
\newcommand{\Nbig}{N_{\mathsf{big}}}
\newcommand{\Ncur}{N_{\mathsf{cur}}}
\newcommand{\Normal}{\mathcal{N}}
\newcommand{\Nsma}{N_{\mathsf{sma}}}
\newcommand{\ordlow}{\mathcal{L}}
\newcommand{\pr}[1]{\Pr\left[#1\right]}
\newcommand{\rmax}{r_{\max}}
\newcommand{\thetahi}{\theta^{\hi}}
\newcommand{\thetalo}{\theta^{\lo}}
\newcommand{\indexset}{U}
\begin{document}

\maketitle

\begin{abstract}
In the classical best arm identification (Best-$1$-Arm) problem, we are given $n$ stochastic bandit arms, each associated with a reward distribution with an unknown mean. Upon each play of an arm, we can get a reward sampled i.i.d. from its reward distribution. We would like to identify the arm with the largest mean with probability at least $1-\delta$, using as few samples as possible. The problem has a long history and understanding its sample complexity has attracted significant attention since the last decade. However, the optimal sample complexity of the problem is still unknown.

Recently, \cite{chen2016open} made an interesting conjecture, called gap-entropy conjecture, concerning the instance optimal sample complexity of Best-$1$-Arm. Given a Best-$1$-Arm instance $I$ (i.e., a set of arms), let $\mu_{[i]}$ denote the $i$th largest mean and $\Delta_{[i]}=\mu_{[1]}-\mu_{[i]}$ denote the corresponding gap. $H(I)=\sum_{i=2}^{n}\Delta_{[i]}^{-2}$ denotes the complexity of the instance. The gap-entropy conjecture states that for any instance $I$, $\Omega\left(H(I)\cdot\left(\ln\delta^{-1} + \mathsf{Ent}(I)\right)\right)$ is an instance lower bound, where $\mathsf{Ent}(I)$ is an entropy-like term determined by the gaps, and there is a $\delta$-correct algorithm for Best-$1$-Arm with sample complexity $O\left(H(I)\cdot\left(\ln\delta^{-1} + \mathsf{Ent}(I)\right)+\Delta_{[2]}^{-2}\ln\ln\Delta_{[2]}^{-1}\right)$. We note that $\Theta\left(\Delta_{[2]}^{-2}\ln\ln\Delta_{[2]}^{-1}\right)$ is necessary and sufficient to solve the two-arm instance with the best and second best arms. If the conjecture is true, we would have a complete understanding of the instance-wise sample complexity of Best-$1$-Arm
(up to constant factors).

In this paper, we make significant progress towards a complete resolution of the gap-entropy conjecture. For the upper bound, we provide a highly nontrivial algorithm which requires \[O\left(H(I)\cdot\left(\ln\delta^{-1} + \mathsf{Ent}(I)\right)+\Delta_{[2]}^{-2}\ln\ln\Delta_{[2]}^{-1}\mathrm{polylog}(n,\delta^{-1})\right)\] samples in expectation for any instance $I$. For the lower bound, we show that for any Gaussian Best-$1$-Arm instance with gaps of the form $2^{-k}$, any $\delta$-correct monotone algorithm requires at least \[\Omega\left(H(I)\cdot\left(\ln\delta^{-1} + \mathsf{Ent}(I)\right)\right)\] samples in expectation. Here, a monotone algorithm is one which uses no more samples (in expectation) on $I'$ than on $I$, if $I'$ is a sub-instance of $I$ obtained by removing some sub-optimal arms.
\end{abstract}

\begin{keywords}
best arm identification,
instance optimality,
gap-entropy
\end{keywords}

\section{Introduction}
	The stochastic multi-armed bandit is one of the most popular and
	well-studied models for capturing the exploration-exploitation tradeoffs
	in many application domains.
	There is a huge body of literature on numerous bandit models
	from several fields including stochastic control, statistics, operation 
	research, machine learning and theoretical computer science. 
	The basic stochastic multi-armed bandit model consists of 
	$n$ stochastic arms with unknown distributions.
	One can adaptively take samples from the arms and make decision
	depending on the objective. 
	Popular objectives include maximizing the cumulative sum of rewards, or minimizing the cumulative regret (see e.g.,~\cite{cesa2006prediction,bubeck2012regret}).
	
	In this paper, we study another classical multi-armed bandit model,
	called {\em pure exploration} model,
	where the decision-maker first performs a \emph{pure-exploration phase}
	by sampling from the arms, and then
	identifies an optimal (or nearly optimal) arm,
	which serves as the exploitation phase.
	The model is motivated by many application domains
	such as medical trials~\cite{robbins1985some,audibert2010best},
	communication network~\cite{audibert2010best}, online advertisement~\cite{chen2014combinatorial}, crowdsourcing~\cite{zhou2014optimal,cao2015top}.
	The {\em best arm identification} problem (\bestarm) is the most basic pure exploration 
	problem in stochastic multi-armed bandits. 
	The problem has a long history 
	(first formulated in \cite{bechhofer1954single})
	and has attracted significant attention 
	since the last decade~\cite{audibert2010best,even2006action,mannor2004sample,jamieson2014lil,karnin2013almost,chen2015optimal, carpentier2016tight, garivier2016optimal}. 
	Now, we formally define the problem and set up some notations.
	\begin{Definition}
		\bestarm :  We are given a set of $n$ arms $\{\arm_1,\ldots, \arm_n\}$.
		Arm $\arm_i$ has a reward distribution $\distr_i$ with an unknown mean $\mu_{i}\in [0,1]$. 
		We assume that all reward distributions are Gaussian distributions with unit variance.   
		Upon each play of $\arm_i$,
		we get a reward sampled i.i.d. from $\distr_i$. Our goal is to identify the arm with the largest mean using as few samples as possible.
		We assume here that the largest mean is strictly larger than the second largest (i.e., $\mu_{[1]}>\mu_{[2]}$)
		to ensure the uniqueness of the solution,
		where $\mu_{[i]}$ denotes
		the $i$th largest mean.
	\end{Definition}
	\begin{Remark}\label{rem:perm}
	Some previous algorithms for \bestarm{} take a sequence (instead of a set) of $n$ arms as input. In this case, we may simply assume that the algorithm randomly permutes the sequence at the beginning. Thus the algorithm will have the same behaviour on two different orderings of the same set of arms.
	\end{Remark}
	\begin{Remark}
	For the upper bound, everything proved in this paper also holds
	if the distributions are 1-sub-Gaussian, which is a standard assumption
	in the bandit literature.	On the lower bound side, we need to assume that the distributions are from some family parametrized by the means and satisfy certain properties. See Remark~\ref{Rlb}. Otherwise, it is possible to distinguish two distributions using 1 sample even if their means are very close. We cannot hope for a nontrivial lower bound in such generality.
	\end{Remark}

	The \bestarm{} problem for Gaussian arms was first formulated in \cite{bechhofer1954single}. Most early works on \bestarm{} did not analyze the sample complexity of the algorithms (they proved their algorithms are \CORRECT\ though). The early advances are summarized in the monograph~\cite{bechhofer1968sequential}.

	For the past two decades, significant research efforts have been devoted to
	understanding the optimal sample complexity of the \bestarm{} problem. 
	On the lower bound side, \cite{mannor2004sample} proved that any \CORRECT{} algorithm for \bestarm{} takes $\Omega(\sum_{i=2}^{n}\Gap{i}^{-2}\ln\delta^{-1})$ samples in expectation. In fact, their result is an instance-wise lower bound (see Definition~\ref{def:lowerbound}). \cite{kaufmann2015complexity} also provided an $\Omega(\sum_{i=2}^{n} \Gap{i}^{-2} \ln \delta^{-1})$ lower bound for \bestarm, which improved the constant factor in~\cite{mannor2004sample}. \cite{garivier2016optimal} focused on the asymptotic sample complexity of \bestarm{} as the confidence level $\delta$ approaches zero (treating the gaps as fixed), and obtained a complete resolution of this case (even for the leading constant).\footnote{In contrast, our work focus on the situation that both $\delta$ and all gaps are variables that tend to zero. In fact, if we let the gaps (i.e., $\Gap{i}$'s) tend to $0$ while maintaining $\delta$ fixed, their lower bound is not tight.}
	\cite{chen2015optimal} showed that for each $n$ there exists a \bestarm{} instance with $n$ arms that require $\Omega\left(\sum_{i=2}^{n}\Gap{i}^{-2}\ln\ln n\right)$ samples, which further refines the lower bound.

	The algorithms for \bestarm{} have also been significantly improved in the last two decades~\cite{even2002pac,gabillon2012best,kalyanakrishnan2012pac,karnin2013almost,jamieson2014lil,chen2015optimal,garivier2016optimal}.
	\cite{karnin2013almost} obtained an upper bound of 
		\[O\left(\sum\nolimits_{i=2}^{n} \Gap{i}^{-2} \left(\ln\ln\Gap{i}^{-1}+\ln\delta^{-1}\right)\right).\]
	The same upper bound was obtained by \cite{jamieson2014lil} using a UCB-type algorithm called lil'UCB. Recently, the upper bound was improved to
		\[O\left(\Gap{2}^{-2}\ln\ln\Gap{2}^{-1}+\sum\nolimits_{i=2}^{n} \Gap{i}^{-2} \left(\ln\ln\min(\Gap{i}^{-1},n)+\ln\delta^{-1}\right)\right)\]
	by~\cite{chen2015optimal}.
	There is still a gap between the best known upper and lower bound.

	To understand the sample complexity of \bestarm,
	it is important to study a special case, which we term as \sign.
	The problem can be viewed as a special case of \bestarm{}
	where there are only two arms, and we know the mean of one arm.
	\sign\ will play a very important role in our lower bound proof.
	\begin{Definition}	
		\sign: $\xi$ is a fixed constant. We are given a single arm with unknown mean $\mu \ne \xi$.
		The goal is to decide whether $\mu > \xi$ or $\mu < \xi$. 
		Here, the gap of the problem is defined to be $\Delta = |\mu - \xi|$. 
		Again, we assume that the distribution of the arm is a Gaussian distribution with unit variance.
	\end{Definition}

	In this paper, we are interested in algorithms (either for \bestarm{} or for \sign{})
	that can identify the correct answer with probability at least $1-\delta$.
	This is often called the \emph{fixed confidence}
	setting in the bandit literature.	
	\begin{Definition}
		For any $\delta\in (0,1)$, we say that an algorithm $\alg$ for \bestarm\,\,  (or \sign{}) is \CORRECT, if on any \bestarm{} (or \sign{}) instance, $\alg$ returns the correct answer with probability at least $1 - \delta$. 
	\end{Definition}

	\subsection{Almost Instance-wise Optimality Conjecture}
		It is easy to see that 
		no function $f(n, \delta)$ (only depending on $n$ and $\delta$)
		can serve as an upper bound of the sample complexity of \bestarm{}
		(with $n$ arms and confidence level $1-\delta$).
		Instead, the sample complexity depends on the gaps.
		Intuitively, the smaller the gaps are, the harder the instance is (i.e., more samples are required).
		Since the gaps completely determine an instance (for Gaussian arms with 
		unit variance, up to shifting), we use $\Gap{i}$'s as the parameters to measure the sample complexity.

		Now, we formally define the notion of instance-wise lower bounds and instance optimality.For algorithm $\alg$ and instance $I$,
		we use $T_{\alg}(I)$ to denote the expected number of samples 
		taken by $\alg$ on instance $I$.

		\begin{Definition}[Instance-wise Lower Bound]
			\label{def:lowerbound}
			
			For a \bestarm\ instance $I$ and a confidence level $\delta$, we define the instance-wise lower bound of $I$ as
			\vspace{-0.2cm} 
			\[
			\ordlow(I,\delta) := \inf_{\alg: \alg \text{ is } 
				\delta\text{-correct for \bestarm}} \,\, T_{\alg}(I).
			\]
			\vspace{-0.2cm} 
		\end{Definition}

		We say a \bestarm{} algorithm $\alg$ is instance optimal, if it is \CORRECT, and for every instance $I$,
		$T_{\alg}(I) = O(\ordlow(I,\delta))$.

		Now, we consider the \bestarm\ problem from the perspective of instance optimality.
		Unfortunately, even for the two-arm case,
		no instance optimal algorithm may exist.
		In fact, \cite{farrell1964asymptotic} showed that for any \CORRECT\
		algorithm $\alg$ for \sign, we must have 
		\[
		\liminf_{\Delta \to 0} \frac{T_{\alg}(I)}{\Delta^{-2} \ln\ln \Delta^{-1}} = \Omega(1).
		\]
		This implies that any \CORRECT\ algorithm requires
		$\Delta^{-2} \ln\ln \Delta^{-1}$ samples in the worst case.
		Hence, the upper bound of $\Delta^{-2} \ln\ln \Delta^{-1}$ for \sign\
		is generally not improvable.
		However, for a particular \sign\ instance $I_\Delta$ with gap $\Delta$,
		there is an \CORRECT\ algorithm that only needs 
		$O(\Delta^{-2} \ln\delta^{-1})$ samples for this instance,
		implying $\ordlow(I_{\Delta},\delta) = \Theta(\Delta^{-2} \ln\delta^{-1})$.
		See~\cite{chen2015optimal} for details.

		Despite the above fact, 
		\cite{chen2016open} conjectured that the two-arm case
		is the {\em only} obstruction toward an instance optimal algorithm. 
		Moreover, based on some evidence from the previous work~\cite{chen2015optimal}, 
		they provided an explicit formula and conjecture that
		$\ordlow(I,\delta)$ can be expressed by the formula. Interestingly, the formula involves an entropy term (similar entropy terms also appear in~\cite{afshani2009instance} for completely different problems). 
		In order to state Chen and Li's conjecture formally, we define the entropy term first.

		\begin{Definition}
			Given a \bestarm\ instance $I$ and $k\in\mathbb{N}$, 
			let 
			\vspace{-0.2cm}
			\[
			G_k = \{i \in [2,n] \mid  2^{-(k+1)} < \Gap{i} \le 2^{-k} \},\quad
			H_k = \sum\nolimits_{i \in G_k} \Gap{i}^{-2},
			\quad\text{ and }\quad
			p_k = H_k/\sum\nolimits_j H_j.
			\]
			We can view $\{p_k\}$
			as a discrete probability distribution.
			We define the following quantity as the \textbf{gap entropy} of instance $I$:
			\[
			\arment(I) = \sum\nolimits_{k\in\mathbb{N}:G_k \ne \emptyset} p_k \ln p_k^{-1}.%
			\footnote{
				Note that it is exactly the Shannon entropy for the distribution defined by $\{p_k\}$.
			}
			\]
		\end{Definition}
		\begin{Remark}
			We choose to partition the arms based on the powers of $2$. 
			There is nothing special about the constant $2$,
			and replacing it by any other constant only changes $\arment(I)$ by a constant factor.	
		\end{Remark}

		\begin{Conjecture}[Gap-Entropy Conjecture~\citep{chen2016open}]\label{conj:gap-entropy}
			There is an algorithm for \bestarm\ with sample complexity
			\vspace{-0.2cm}
			\[O\left(\ordlow(I,\delta) + \Gap{2}^{-2}\ln\ln\Gap{2}^{-1}\right),\]
			for any instance $I$ and $\delta < 0.01$. 
			And we say such an algorithm is almost instance-wise optimal for \bestarm. Moreover,
			\[
			\ordlow(I,\delta) = \Theta\left(\sum\nolimits_{i=2}^{n} \Gap{i}^{-2}\cdot\left(\ln\delta^{-1} + \arment(I)\right)\right).
			\]
		\end{Conjecture}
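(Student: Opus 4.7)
The plan is to attack the upper and lower halves of Conjecture~\ref{conj:gap-entropy} separately, but to organize both arguments around the same structural decomposition: partition the arms into the gap buckets $G_k$, and account level by level.

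For the upper bound, I would first design a \algnewtoy{} routine that, given access to good estimates of $(H_k)_k$, allocates roughly $\Theta(H_k(\ln p_k^{-1} + \ln\delta^{-1}))$ samples to bucket $k$ via a combination of \US{} and \ME{} at accuracy $2^{-k}$. Since $p_k = H_k/\sum_j H_j$ and $\sum_k p_k \ln p_k^{-1} = \arment(I)$, summing across buckets reproduces exactly the conjectured cost $H(I)(\ln\delta^{-1} + \arment(I))$. Because the complexity profile is not known a priori, I would wrap this subroutine in an \algguess{} outer loop that doubles a guess for the total complexity and uses \FT{} to verify candidate best arms, aborting any run whose budget is exceeded. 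The expected cost is then dominated by the last (correct) guess, and this is where the overhead enters: standard union-bound arguments over $\log$-many guesses pay a $\polylog(n,\delta^{-1})$ factor on the verification term $\Gap{2}^{-2}\ln\ln\Gap{2}^{-1}$. Removing this overhead seems to require a single-shot verification mechanism whose confidence does not degrade across the guessing loop, and this is where I expect the proof to stall.

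For the lower bound, I would use a change-of-measure argument in the spirit of Kaufmann-Capp\'e-Garivier, but refined to extract the entropy term. Standard KL accounting against perturbed Gaussian alternatives --- promote a single arm in $G_k$ past the top --- already yields $\Omega(H_k\ln\delta^{-1})$ per bucket and hence the $H(I)\ln\delta^{-1}$ half of the bound. To get the additional $H(I)\cdot\arment(I)$ term, I would embed a \sign{} subproblem into each bucket, put a prior over which heavy bucket contains the optimum with probabilities proportional to $p_k$, and invoke a Fano-type information inequality: identifying the correct level is a coding problem of entropy $\arment(I)$, and the noisy samples needed to resolve it cost $\Omega(H(I)\cdot\arment(I))$. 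Restricting to monotone algorithms lets one carry this argument out cleanly because sub-optimal arms can be removed without affecting the algorithm's sample complexity, which preserves the entropy structure across nested instances.

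The hard part --- and the main reason the conjecture remains open --- is removing the monotonicity restriction from the lower bound. A non-monotone algorithm can in principle use samples from sparse buckets to pre-filter dense buckets, so the entropy reduction from a sub-instance to its parent is no longer automatic; I would have to replace the sub-instance argument by a Le~Cam two-point or multiple-testing argument over a family of instances sharing the same complexity profile, and I do not see how to make the information cost of such a family equal $\arment(I)$ in full generality. On the upper bound side, eliminating the $\polylog(n,\delta^{-1})$ slack appears to demand a fundamentally non-adaptive complexity estimator, which I also do not currently have. Together, these two obstacles are exactly what the paper leaves open.
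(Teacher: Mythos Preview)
This statement is a \emph{conjecture}; the paper does not prove it and explicitly leaves it open. Your proposal correctly recognizes this and sketches partial progress rather than a proof, ending on precisely the two obstacles the paper leaves unresolved (the $\polylog$ slack in the upper bound and the monotonicity restriction in the lower bound). In that sense your overall assessment is accurate and there is nothing to ``fix.''

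Where your sketch diverges from the paper's actual partial results is on the lower-bound mechanism. You propose to extract the entropy term by a Fano-type inequality over a prior on buckets. The paper does something structurally different: it first proves a standalone refinement of Farrell's \sign{} bound (Lemma~\ref{L10}) --- for any $\delta$-correct \sign{} algorithm and any distribution $P$ on $\{2^{-k}\}$, the normalized expected cost $\sum_k p_k\alpha_k$ with $\alpha_k=T_\alg(2^{-k})/4^k$ satisfies $\sum_k p_k\alpha_k=\Omega(\arment(P)+\ln\delta^{-1})$. This is obtained from a Kraft-like inequality $\sum_k e^{-d\alpha_k}\le M$ via change of distribution, not Fano. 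The entropy then reaches \bestarm{} through an explicit \emph{algorithmic reduction}: from a \bestarm{} algorithm $\alg$ one builds a \sign{} algorithm that runs four parallel copies of $\alg$ on sub-instances $I_S,I_{\overline S}$ with the unknown arm shifted and inserted. The need to range over sub-instances is exactly what produces the monotonicity hypothesis, matching your intuition, but the route is a reduction rather than a direct information-theoretic argument on the \bestarm{} instance.

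On the upper bound your architecture (\algnewtoy{} inside \algguess{}) matches the paper, but the accounting is more delicate than a per-bucket $H_k(\ln p_k^{-1}+\ln\delta^{-1})$ allocation: the entropy emerges because the confidence passed to \EL{} at round $r$ is $\delta'_r\propto |S_r|\eps_r^{-2}/\hat H$, and the sample cost then carries a $\ln(\hat H/(|S_r|\eps_r^{-2}))$ factor whose sum over rounds is bounded by $H\cdot\arment(I)$ via a potential-function induction (Lemmas~\ref{L7}--\ref{L9}).
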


		\begin{Remark}
		As we mentioned before, the term $\Delta^{-2}\ln\ln\Delta^{-1}$ 
		is sufficient and necessary for distinguishing the best and the 
		second best arm, even though it is not an instance-optimal bound.
		The gap entropy conjecture states that modulo this additive term,
		we can obtain an instance optimal algorithm.
		Hence, the resolution of the conjecture would provide
		a complete understanding of the sample complexity of \bestarm\
		(up to constant factors).
		All the previous bounds for \bestarm{} agree with Conjecture~\ref{conj:gap-entropy},
		i.e., existing upper (lower) bounds are no smaller (larger) the conjectured bound. See~\cite{chen2016open} for details.
		\end{Remark}

	\subsection{Our Results}
		In this paper, we make significant progress toward the resolution of the gap-entropy conjecture.
		On the upper bound side, we provide an algorithm that almost matches the conjecture.
		\begin{Theorem}\label{theo:uppb-main}
			There is a \CORRECT\ algorithm for \bestarm{} with expected sample complexity
			\[
			O\left( \sum\nolimits_{i=2}^{n} \Gap{i}^{-2}\cdot\left(\ln\delta^{-1} + \arment(I)\right) + \Gap{2}^{-2} \ln\ln \Gap{2}^{-1} \cdot \polylog(n,\delta^{-1}) \right).
			\]
		\end{Theorem}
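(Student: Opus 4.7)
The plan is to structure the algorithm in three nested layers, closely following the framework suggested by the conjecture. At the outermost layer, a complexity-guessing procedure $\algguess$ maintains a doubling sequence of guesses $\widehat H=2,4,8,\ldots$ for the complexity $H(I)=\sum_{i\ge 2}\Gap{i}^{-2}$. For each guess, it invokes a middle-layer procedure $\algent$ at target confidence $\delta/\mathrm{poly}(\widehat H)$, then verifies the returned candidate (if any) against all other arms via a fractional test $\FT$. Because the guesses grow geometrically, the total sample complexity is dominated (up to a constant factor) by the cost incurred at the first guess satisfying $\widehat H=\Theta(H(I))$; and the aggressive $\delta/\mathrm{poly}(\widehat H)$ confidence at each iteration ensures that spurious candidates from earlier guesses are rejected with high probability.

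The heart of the construction is $\algent$, which I would design as an $R=O(\log\widehat H)$-round elimination routine. In round $r$, every surviving arm is tested at accuracy $\eps_r=2^{-r}$ against the current empirical leader using $O(\eps_r^{-2}\cdot(\ln\delta^{-1}+\ln q_r^{-1}))$ samples, where the per-round confidence $q_r$ is chosen so that $\sum_r q_r\le \delta$ and, crucially, $q_r$ is set inversely proportional to the expected mass of arms in gap-band $G_r$, i.e., to $H_r/\widehat H$. An arm in $G_k$ is then eliminated within $O(1)$ rounds of round $k$ with high probability, contributing $\Gap{i}^{-2}\cdot\ln(\widehat H/H_k)\sim\Gap{i}^{-2}\ln p_k^{-1}$ to the total cost. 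Summing over arms gives $\sum_k H_k\ln p_k^{-1}=H(I)\cdot\arment(I)$, while the baseline $\ln\delta^{-1}$ per-arm cost yields the $H(I)\ln\delta^{-1}$ term. Inside $\algent$, the primitives $\ME$ (median elimination) and $\US$ (uniform sampling) carry out the statistical work: $\ME$ rapidly shrinks the alive set at fixed accuracy, and $\US$ supplies concentration for the running leader.

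The additive $\Gap{2}^{-2}\ln\ln\Gap{2}^{-1}\cdot\polylog(n,\delta^{-1})$ term in Theorem~\ref{theo:uppb-main} comes from two sources: the final top-two verification carried out by $\FT$ (for which the $\ln\ln\Gap{2}^{-1}$ factor is unavoidable by~\cite{farrell1964asymptotic}), and the $\polylog$-factor accumulated because $\FT$ must be invoked once per guess of $\widehat H$ in $\algguess$. A clean way to package this is to extract the ``two-arm'' behaviour into a separate routine $\algnewtoy$ that handles the case when the complexity is effectively known, and then show that plugging $\algnewtoy$ into the guessing loop contributes only a polylog overhead on top of the optimal $\Gap{2}^{-2}\ln\ln\Gap{2}^{-1}$.

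The main obstacle is the combined correctness and accounting argument for $\algent$ when $\widehat H\ne H(I)$. On one hand, if $\widehat H\ll H(I)$ then $\algent$ may produce a wrong candidate, but the outer verification must reject it with probability at least $1-\delta/\mathrm{poly}(\widehat H)$ so that the guessing loop can safely proceed. On the other hand, when $\widehat H=\Theta(H(I))$ the entropy accounting requires the tight statement that the set of surviving arms in round $r$ is, with high probability, contained in $\bigcup_{k\ge r-O(1)}G_k$; this in turn demands a carefully calibrated union bound over the $R$ rounds in which the per-round failure allowance is allocated by the same inverse-$p_k$ rule used to set $q_r$. Matching these two calibrations---so that the entropy bound is tight while the total failure probability remains below $\delta$---is the delicate technical point of the proof.
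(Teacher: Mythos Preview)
Your high-level architecture---guess $H$ geometrically, run an elimination routine per guess, and allocate per-round confidence so that the accounting produces the entropy term---matches the paper. But the correctness argument for the regime $\widehat H\ll H(I)$ has a real gap, and the mechanism you propose is not the one the paper uses.

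You write that when $\widehat H\ll H(I)$, ``the outer verification must reject [a wrong candidate] with probability at least $1-\delta/\mathrm{poly}(\widehat H)$.'' But $\FT$ only tests whether a \emph{fraction} of arms lie below a threshold; it does not certify that a single candidate beats every other arm. Any genuine verification that the candidate dominates $A_1$ would require accuracy $\Gap{2}$, which costs $\Omega(\Gap{2}^{-2})$ samples---and at a small guess $\widehat H\ll\Gap{2}^{-2}$ you have neither the sample budget nor a gap estimate to justify spending that much. So the outer-verification idea, as stated, cannot guarantee rejection of wrong candidates at small guesses.

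The paper handles this completely differently: there is no outer verification. Instead, $\algent$ internally tracks two running totals---the confidence mass $H_r=\sum_{r'\le r}4|S_{r'}|\eps_{r'}^{-2}$ it has committed so far, and a sample proxy $T_r$---and \emph{returns ``error''} (i.e., rejects the guess) the moment either exceeds a threshold tied to $\widehat H$. The hard part is then Lemma~\ref{L2}: showing that for small $\widehat H$, $\algent$ rejects rather than accepts a wrong arm. This rests on two ingredients you do not have. First, a \emph{mis-deletion} lemma (Lemma~\ref{L3}) proving that the events ``arm $A_i$ is eliminated before its natural round'' are $\delta^2$-quasi-independent: any fixed $k$ of them occur simultaneously with probability at most $\delta^{2k}$. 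Second, a structural lemma (Lemma~\ref{L4}) saying that if enough complexity survives un-mis-deleted, the internal budget check \emph{must} fire and force rejection. Together with a generalized Chernoff bound for quasi-independent events, these give the required $\delta^{\Omega(k)}$ bound on wrongly accepting each small guess.

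A smaller point: your confidence rule is stated backwards and in terms of an unobservable quantity. You cannot set $q_r$ using $H_r/\widehat H$ because $H_r$ (the true mass in band $G_r$) is unknown. The paper sets $\delta'_r = (|S_r|\eps_r^{-2}/\widehat H)\,\delta^2$, i.e., \emph{proportional} (not inversely proportional) to the observable proxy $|S_r|\eps_r^{-2}$; the entropy term then emerges because $\ln(1/\delta'_r)\approx\ln(\widehat H/(|S_r|\eps_r^{-2}))\approx\ln p_r^{-1}$ once $|S_r|\eps_r^{-2}\approx H_r$.
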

		Our algorithm matches the main term
			$\sum\nolimits_{i=2}^{n} \Gap{i}^{-2}\cdot\left(\ln\delta^{-1} + \arment(I)\right)$
		in Conjecture~\ref{conj:gap-entropy}.
		For the additive term (which is typically small), we lose a $\polylog(n,\delta^{-1})$ factor. In particular, for those instances where 
		the additive term is $\polylog(n,\delta^{-1})$ times smaller than the main term,
		our algorithm is optimal.

		On the lower bound side, despite that we are not able to completely solve the lower bound, we do obtain a rather strong bound. 
		We need to introduce some notations first. We say an instance is {\em discrete}, if the gaps of all the sub-optimal arms are of the form $2^{-k}$ for some positive integer $k$. We say an instance $I'$ is a {\em sub-instance} of an instance $I$, if $I'$ can be obtained by deleting some \emph{sub-optimal} arms from $I$.
		Formally, we have the following theorem.
		\begin{Theorem}\label{theo:lowb-main}\label{T4}
			For any discrete instance $I$, confidence level $\delta < 0.01$, and any \CORRECT\ algorithm $\alg$ for \bestarm, there exists a sub-instance $I'$ of $I$ such that
			\[
			T_{\alg}(I') \ge c \cdot \left(\sum\nolimits_{i=2}^{n} \Gap{i}^{-2}\cdot\left(\ln\delta^{-1} + \arment(I)\right)\right),
			\]
			where $c$ is a universal constant.
		\end{Theorem}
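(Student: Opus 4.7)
My plan is to combine the standard change-of-measure lower bound of Kaufmann et al.\ with a multi-level hypothesis-testing argument, and then extract a single hard sub-instance by pigeonhole. The key tool throughout is the inequality $\sum_i T_i(I') \cdot \KL(\distr_i^{I'}, \distr_i^J) \ge \ln(1/(2.4\delta))$, which holds for any \CORRECT\ $\alg$, any sub-instance $I' \subseteq I$, and any alternative $J$ whose best arm differs from that of $I'$.

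First, to recover the classical $\Omega(H(I)\ln \delta^{-1})$ portion of the bound I take $I' = I$ and, for each sub-optimal arm $j$, choose $J_j$ to be the single-arm perturbation of $I$ that promotes arm $j$ to be the best one by shifting its mean up by $2\Gap{j}$. Since the reward distributions are unit-variance Gaussians, $\KL(\distr_j^{I}, \distr_j^{J_j}) = \Theta(\Gap{j}^2)$, yielding $T_j(I) \ge c \Gap{j}^{-2} \ln \delta^{-1}$; summing over $j$ then gives $T_\alg(I) \ge c \cdot H(I) \ln \delta^{-1}$.

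For the entropy part, the plan is to design a prior $\pi$ over sub-instances of $I$ such that (a) each sub-instance in the support of $\pi$ still has complexity $\Theta(H(I))$, and (b) when $\alg$ is run on a $\pi$-random input, the algorithm must implicitly identify the drawn sub-instance, because certifying the best arm at different scales requires different sampling patterns. Concretely, for each non-empty level $k$ I will construct a family of sub-instances whose ``hard arms'' sit at level $k$ but which are padded with enough structural arms from other levels to keep $H(\cdot) = \Theta(H(I))$, and I will assign total prior mass $p_k$ to this family. A Fano/mixture-based strengthening of the change-of-measure inequality should then give, schematically,
\[
\sum_k p_k \cdot T_\alg(I^{(k)}) \;\ge\; c \cdot H(I) \cdot \sum_k p_k \ln(1/p_k) \;=\; c \cdot H(I) \cdot \arment(I),
\]
and pigeonhole extracts some $k^\star$ with $T_\alg(I^{(k^\star)}) \ge c \cdot H(I) \arment(I)$. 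Taking $I'$ to be whichever of the sub-instance from the first step or $I^{(k^\star)}$ attains the larger lower bound completes the argument.

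The main obstacle will be step 3: strengthening the pairwise change-of-measure inequality into a multi-hypothesis bound that produces $\ln(1/p_k)$ per level rather than merely $\ln \delta^{-1}$. Intuitively, the $\delta$-correctness constraint only forces $\ln\delta^{-1}$ bits of distinguishability between a single pair of instances; extracting the extra $\ln(1/p_k)$ bits requires coupling that constraint with an implicit identification requirement---namely, that no $\delta$-correct sample allocation can simultaneously handle the $\Theta(1/p_k)$-many $\pi$-weighted alternatives at level $k$ using only $O(H_k \ln \delta^{-1})$ samples each. A secondary difficulty is ensuring that every instance in the support of $\pi$ is a genuine sub-instance of $I$; the discrete-gap hypothesis is essential here, since all arms within a level share the same gap $2^{-k}$, so the construction reduces to pure deletion rather than perturbation of arms from $I$.
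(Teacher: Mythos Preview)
Your plan for the $\ln\delta^{-1}$ part is fine and standard. The entropy part, however, has a genuine gap that I do not see how to close along the lines you sketch.

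The difficulty is that every sub-instance of $I$ has the \emph{same} optimal arm $A_1$. A Fano-type or mixture change-of-measure argument produces a lower bound only when the hypotheses in the mixture carry different correct labels; here, whatever prior $\pi$ over sub-instances you choose, a $\delta$-correct algorithm is merely required to output $A_1$ on each of them. There is no ``implicit identification requirement'': the algorithm need not learn which sub-instance it faces, and the claim that ``certifying the best arm at different scales requires different sampling patterns'' is not something the $\delta$-correctness constraint enforces. Concretely, if you write the usual transportation inequality for each $I^{(k)}$ against a single-arm-promoted alternative, you only recover $T_\alg(I^{(k)})\ge\Omega(H(I^{(k)})\ln\delta^{-1})$ for each $k$; averaging over $\pi$ cannot manufacture the extra $\ln(1/p_k)$ because the alternatives for different $k$ are not themselves in the support of $\pi$, and the instances that \emph{are} in the support all share the same answer.

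The paper takes a completely different route. It first proves a refined lower bound for \sign{} (Lemma~\ref{L10}): for any $\delta$-correct \sign{} algorithm and any distribution $P$ on $\{2^{-1},\ldots,2^{-m}\}$, the $P$-weighted ``excess'' $\sum_k p_k\,T(2^{-k})/4^k$ is $\Omega(\arment(P)+\ln\delta^{-1})$. The entropy term there comes from a Kraft-type inequality $\sum_k e^{-d\alpha_k}\le M$, obtained by lower-bounding $\Pr_{\mu=0}[\tau\in[c4^k,C4^k\alpha_k]]$ via change of distribution from $\mu=2^{-k}$ and then using disjointness of these stopping-time events under the null. The theorem is then proved by \emph{reduction}: assuming for contradiction that $T_\alg(I')<cH(I)(\arment(I)+\ln\delta^{-1})$ for \emph{all} sub-instances $I'$, one builds a \sign{} algorithm by embedding the unknown arm (shifted by $\Mean{1}$) into a random sub-instance $I_S$ of $I$ (removing one arm from each of a random set $S$ of gap-levels) and running $\alg$. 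When the unknown arm has mean $\pm 2^{-k}$, one of four parallel simulations is indistinguishable from $\alg$ running on some sub-instance $I_{S'}$ with an extra arm of gap $2^{-k}$ already present, so its per-arm sample count is controlled by $T_\alg(I_{S'})$. Summing over $S$ and $k$ and invoking Lemma~\ref{L10} yields the contradiction. The point is that the entropy enters through the \sign{} problem, where the correct output genuinely depends on the unknown scale, not through a mixture over \bestarm{} sub-instances where it does not.
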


		We say an algorithm is {\em monotone}, if $T_{\alg}(I') \le T_{\alg}(I) $ for every $I'$ and $I$ such that $I'$ is a sub-instance of $I$. Then we immediately have the following corollary.

		\begin{Corollary}
		For any discrete instance $I$, and confidence level $\delta < 0.01$, for any monotone \CORRECT\ algorithm $\alg$ for \bestarm, we have that
		\[
		T_{\alg}(I) \ge c \cdot \left(\sum\nolimits_{i=2}^{n} \Gap{i}^{-2}\cdot\left(\ln\delta^{-1} + \arment(I)\right)\right),
		\]
		where $c$ is a universal constant.
		\end{Corollary}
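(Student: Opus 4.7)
The plan is to derive the corollary as an immediate consequence of Theorem~\ref{T4} together with the monotonicity hypothesis on $\alg$. First I would invoke Theorem~\ref{T4} on the given discrete instance $I$ to obtain a sub-instance $I'$ of $I$ satisfying
\[
T_{\alg}(I') \ge c \cdot \sum\nolimits_{i=2}^{n} \Gap{i}^{-2}\cdot\left(\ln\delta^{-1} + \arment(I)\right).
\]
By the assumption that $\alg$ is monotone and $I'$ is a sub-instance of $I$, we have $T_\alg(I) \ge T_\alg(I')$. Chaining the two inequalities yields the claimed bound on $T_\alg(I)$, with the same universal constant $c$.

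Since the derivation is a two-line deduction from Theorem~\ref{T4}, there is no real obstacle at this step; all of the work is packed into the theorem itself. It is nevertheless worth emphasizing the role of the monotonicity hypothesis: without it, an algorithm could in principle exploit the \emph{absence} of certain sub-optimal arms in $I'$ to solve $I$ using fewer samples than it uses on $I'$, in which case the sub-instance lower bound would fail to transfer to $I$. Monotonicity is precisely the structural property that rules out such behaviour and allows a worst-case bound over sub-instances to be lifted to a bound on the full instance. Finally, I would note that the entropy term on the right-hand side is $\arment(I)$ rather than $\arment(I')$, which is exactly what Theorem~\ref{T4} delivers and is what gives the corollary its content, since the gap-entropy of a sub-instance could a priori be much smaller than that of $I$ itself.
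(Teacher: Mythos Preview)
Your proposal is correct and matches the paper's own reasoning: the paper presents the corollary as an immediate consequence of Theorem~\ref{T4} together with the definition of a monotone algorithm, which is exactly the two-line deduction you give. Your additional remarks on the role of monotonicity and on $\arment(I)$ versus $\arment(I')$ are accurate and go a bit beyond what the paper states explicitly, but they do not change the argument.
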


		We remark that all previous algorithms for \bestarm\ have monotone sample complexity bounds. The above corollary also implies that if an algorithm has a monotone sample complexity bound, then the bound must be
			$\Omega\left(\sum\nolimits_{i=2}^{n} \Gap{i}^{-2}\cdot\left(\ln\delta^{-1} + \arment(I)\right)\right)$
		on all discrete instances.

\section{Related Work}
	\paragraph{\sign\ and A/B testing.}
	In the A/B testing problem, we are asked to decide which arm between the two given arms has the larger mean. A/B testing is in fact equivalent to the \sign{} problem. It is easy to reduce \sign{} to A/B testing by constructing a fictitious arm with mean $\xi$. For the other direction, given an instance of A/B testing, we may define an arm as the difference between the two given arms and the problem reduces to \sign{} where $\xi=0$. In particular, our refined lower bound for \sign{} stated in Lemma~\ref{L10} also holds for A/B testing.
	\cite{kaufmann2015complexity,garivier2016optimal} studied the limiting behavior of the sample complexity of A/B testing as the confidence level $\delta$ approaches to zero. In contrast, we focus on the case that both $\delta$ and the gap $\Delta$ tend to zero, so that the complexity term due to not knowing the gap in advance will not be dominated by the $\ln\delta^{-1}$ term.

	\paragraph{\bestkarm.}
	The \bestkarm{} problem, in which we are required to identify the $k$ arms with the $k$ largest means, is a natural extension of \bestarm{}.
	\bestkarm{} has been extensively studied in the past few years~\cite{kalyanakrishnan2010efficient,gabillon2011multi,gabillon2012best,kalyanakrishnan2012pac,bubeck2013multiple,kaufmann2013information,zhou2014optimal,kaufmann2015complexity,chen2017nearly}, and most results for \bestkarm{} are generalizations of those for \bestarm{}. As in the case of \bestarm{}, the sample complexity bounds of \bestkarm{} depend on the gap parameters of the arms, yet the gap of an arm is typically defined as the distance from its mean to either $\Mean{k+1}$ or $\Mean{k}$ (depending on whether the arm is among the best $k$ arms or not) in the context of \bestkarm{} problem. The \emph{Combinatorial Pure Exploration} problem, which further generalizes the cardinality constraint in \bestkarm{} (i.e., to choose exactly $k$ arms) to general combinatorial constraints, was also studied~\cite{chen2014combinatorial,chen2016pure,gabillon2016improved}.

	\paragraph{PAC learning.}
	The sample complexity of \bestarm{} and \bestkarm{} in the probably approximately correct (PAC) setting has also been well studied in the past two decades. For \bestarm{}, the tight \emph{worst-case} sample complexity bound was obtained by~\cite{even2002pac,mannor2004sample,even2006action}. \cite{kalyanakrishnan2010efficient,kalyanakrishnan2012pac,zhou2014optimal,cao2015top} also studied the worst case sample complexity of \bestkarm{} in the PAC setting.

\section{Preliminaries}
	Throughout the paper, $I$ denotes an instance of \bestarm{} (i.e., $I$ is a set of arms).
	The arm with the largest mean in $I$ is called the optimal arm,
	while all other arms are \emph{sub-optimal}.
	We assume that every instance has a unique optimal arm.
	$A_i$ denotes the arm in $I$ with the $i$-th largest mean, unless stated otherwise.
	The mean of an arm $A$ is denoted by $\mu_{A}$,
	and we use $\Mean{i}$ as a shorthand notation for $\mu_{A_i}$
	(i.e., the $i$-th largest mean in an instance).
	Define $\Delta_{A}=\Mean{1}-\mu_A$ as the gap of arm $A$,
	and let $\Gap{i}=\Delta_{A_i}$ denote the gap of arm $A_i$. 
	We assume that $\Gap{2}>0$ to ensure the optimal arm is unique.

	We partition the sub-optimal arms into different groups based on their gaps.
	For each $k\in\mathbb{N}$, group $G_k$ is defined as
		$\left\{A_i:\Gap{i}\in\left(2^{-(k+1)},2^{-k}\right]\right\}$.
	For brevity, let $G_{\ge k}$ and $G_{\le k}$ denoted
	$\bigcup_{i=k}^{\infty}G_i$ and $\bigcup_{i=1}^{k}G_i$ respectively.
	The \emph{complexity} of arm $A_i$ is defined as $\Gap{i}^{-2}$,
	while the complexity of instance $I$ is denoted by $H(I)=\sum_{i=2}^{n}\Gap{i}^{-2}$
	(or simply $H$, if the instance is clear from the context).
	Moreover, $H_k=\sum_{A\in G_k}\Delta_{A}^{-2}$ denotes
	the total complexity of the arms in group $G_k$.
	$(H_k)_{k=1}^{\infty}$ naturally defines a probability distribution on $\mathbb{N}$,
	where the probability of $k$ is given by
		$p_k = H_k / H$. 
	The gap-entropy of the instance $I$ is then denoted by
		\[\arment(I)=\sum_{k}p_k\ln p_k^{-1}.\]
	Here and in the following, we adopt the convention that $0\ln 0^{-1}=0$.

\section{A Sketch of the Lower Bound}

	\subsection{A Comparison with Previous Lower Bound Techniques}
		We briefly discuss the novelty of our new lower bound technique,
		and argue why the previous techniques are not sufficient to obtain our result.
		To obtain a lower bound on the sample complexity of \bestarm, 
		all the previous work~\cite{mannor2004sample,chen2014combinatorial,kaufmann2015complexity,garivier2016optimal} are based on
		creating two similar instances with different answers,
		and then applying the \emph{change of distribution} method (originally developed in~\cite{kaufmann2015complexity}) to argue that a certain number of samples are necessary to 
		distinguish such two instances. The idea was further refined by~\cite{garivier2016optimal}. They formulated a max-min game between the algorithm and some instances (with different answers than the given instance) created by an adversary. The value of the game at equilibrium would be a lower bound of the samples one requires to distinguish the current instance and several worst adversary instances. However, we notice that even in the two-arm case, 
		one cannot prove the $\Omega(\Delta^{-2}\ln\ln\Delta^{-1})$ lower bound by
		considering only one max-min game to distinguish the current instance from other instance.
		Roughly speaking, the $\ln\ln\Delta^{-1}$ factor is due to not knowing the actual 
		gap $\Delta$, and any lower bound that can bring out the $\ln\ln\Delta^{-1}$ factor
		should reflect the union bound paid for the uncertainty of the instance.
		In fact, for the \bestarm\ problem with $n$ arms, the gap entropy $\arment(I)$ term
		exists for a similar reason (not knowing the gaps).
		Hence, any lower bound proof for \bestarm\ that can bring out the $\arment(I)$ term necessarily has to consider the uncertainty of current instance as well
		(in fact, the random permutation of all arms is the kind of uncertainty we need for the new lower bound).
		In our actual lower bound proof, we first obtain a very tight understanding 
		of the \sign\ problem (Lemma~\ref{L10}).%
		\footnote{Farrell's lower bound~\cite{farrell1964asymptotic} is not sufficient for
		our purpose.}
		Then, we provide an elegant reduction from \sign\ to \bestarm,
		by embedding the \sign\ problem to a collection of \bestarm\ instances.

	\subsection{Proof of Theorem~\ref{theo:lowb-main}}
		Following the approach in~\cite{chen2015optimal}, we establish the lower bound by a reduction from \sign{} to discrete \bestarm{} instances, together with a more refined lower bound for \sign{} stated in the following lemma.
		\begin{Lemma}\label{L10}
			Suppose $\delta\in(0,0.04)$, $m\in\mathbb{N}$ and
			$\mathbb{A}$ is a $\delta$-correct algorithm for \sign{}.
			$P$ is a probability distribution on $\{2^{-1},2^{-2},\ldots,2^{-m}\}$
			defined by $P(2^{-k})=p_k$.
			$\arment(P)$ denotes the Shannon entropy of distribution $P$.
			Let $T_{\alg}(\mu)$ denote the expected number of samples taken by $\alg$
			when it runs on an arm with distribution $\Normal(\mu,1)$ and $\xi=0$.
			Define $\alpha_k=T_{\alg}(2^{-k})/4^k$. Then,
				\[\sum_{k=1}^{m}p_k\alpha_k=\Omega(\arment(P)+\ln\delta^{-1}).\]
		\end{Lemma}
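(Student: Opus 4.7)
The plan is to establish the two lower bounds $\sum_kp_k\alpha_k=\Omega(\ln\delta^{-1})$ and $\sum_kp_k\alpha_k=\Omega(\arment(P))$ separately, and then to add them.

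The first bound is obtained by a single application of the change-of-measure inequality of~\cite{kaufmann2015complexity}. For each $k$, compare the Gaussian instances $\mu=+2^{-k}$ and $\mu=-2^{-k}$: their per-sample KL divergence equals $2\cdot 4^{-k}$, and $\delta$-correctness forces opposite sign outputs on the two instances, each with probability at least $1-\delta$. Hence $T_\alg(2^{-k})\cdot 2\cdot 4^{-k}\ge d(1-\delta,\delta)=\Omega(\ln\delta^{-1})$ when $\delta<0.04$, so $\alpha_k=\Omega(\ln\delta^{-1})$ uniformly in $k$ and averaging against $p_k$ gives the claim.

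The entropy piece is the essentially new ingredient. Intuitively, to be $\delta$-correct at \emph{every} scale $2^{-k}$ in the support of $P$ simultaneously, $\alg$ must effectively discover which scale it is operating at, and this discovery carries an information cost of order $\arment(P)$. To formalize this, I would set up a Bayesian experiment in which $K\sim P$ and an independent uniform sign $s\in\{+,-\}$ are drawn, and $\alg$ is run on an arm of mean $\mu=s\cdot 2^{-K}$; let $V=(\tau,X_1,\ldots,X_\tau,\text{output})$ denote the corresponding transcript. Since $s\perp K$, the chain rule gives $I(s,K;V)=I(s;V\mid K)+I(K;V)$. Using Wald's identity together with the Gaussian KL identity $\KL(\Normal(\mu_1,1),\Normal(\mu_2,1))=(\mu_1-\mu_2)^2/2$, the first term is controlled by $I(s;V\mid K=k)=O(\alpha_k)$, so that $\Ex_K[I(s;V\mid K)]=O(\sum_kp_k\alpha_k)$. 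It thus suffices to show $I(K;V)=\Omega(\arment(P))$.

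The main obstacle is exactly this last inequality, since $\alg$ only outputs a sign bit and has no a priori reason to estimate $K$. My plan to circumvent it is a reduction that constructs an explicit scale estimator $\widehat{K}(V)$ on top of $\alg$. The observation is that, by subtracting a constant $\xi$ from every raw sample, one can convert a mean-$\mu$ arm into a mean-$(\mu-\xi)$ arm while preserving Gaussianity and unit variance, so $\alg$ can be invoked as a sign oracle at any shift $\xi$. A dyadic cascade of such shifted invocations implements a binary search for $K$ that succeeds with constant probability, with a total overhead whose expectation over $K\sim P$ is comparable to the $\alpha_k$-weighted sum already appearing on the right-hand side. Feeding $\widehat{K}$ into Fano's inequality then yields $I(K;V)\ge\arment(P)-O(1)$, and combining with the upper bound from the previous paragraph gives $\sum_kp_k\alpha_k=\Omega(\arment(P))$. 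Adding the two pieces and calibrating the constants for $\delta\in(0,0.04)$ finishes the proof.
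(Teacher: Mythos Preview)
Your $\Omega(\ln\delta^{-1})$ argument via change of measure between $\mu=\pm 2^{-k}$ is correct and matches the paper's.

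For the entropy piece, the chain-rule setup does not do what you want. Upper-bounding $I(s;V\mid K)$ and lower-bounding $I(K;V)$ are bounds on two \emph{different} summands of $I(s,K;V)$; they do not combine to give a lower bound on $\sum_k p_k\alpha_k$. What you actually need is an \emph{upper} bound $I(K;V)\le\tfrac12\sum_k p_k\alpha_k$ (drop $s$ entirely and compare each $\mu=2^{-k}$ transcript law to the $\mu=0$ reference via Wald), to be paired with a Fano lower bound on the same quantity. Introducing the sign only muddies the water, since any KL bound then also picks up $T_{\alg}(-2^{-k})$, which is not $\alpha_k$.

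The more serious gap is your estimator $\widehat K$. Each shifted invocation of $\alg$ at threshold $\xi$ costs $T_{\alg}(2^{-k}-\xi)$ expected samples, and $2^{-k}-\xi$ is generically not of the form $2^{-j}$. Since $\alg$ is an arbitrary $\delta$-correct algorithm, it may take arbitrarily many samples at means other than $2^{-1},\ldots,2^{-m}$ without changing any $\alpha_k$; the binary-search overhead is therefore unbounded in terms of $\sum_k p_k\alpha_k$, and once you fold that overhead into the mutual-information upper bound the comparison with Fano yields nothing. (Trying instead to read $K$ off the raw samples would require $\tau=\Omega(4^k)$ with constant probability, which is itself a nontrivial change-of-measure statement you have not established.)

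The paper takes a different route that avoids estimating $K$ altogether. It changes measure from $\mu=2^{-k}$ to $\mu=0$ to show that the event $\event_k$ that $\alg$ outputs ``$\mu>0$'' and $\tau\in[c\cdot 4^k,\,16\alpha_k\cdot 4^k]$ satisfies $\Pr_0[\event_k]\ge\tfrac14 e^{-\alpha_k}$. A greedy interval-scheduling argument on the stopping-time windows then extracts a disjoint subcollection of these events, and disjointness under a single probability measure forces the Kraft-type constraint $\sum_k e^{-d\alpha_k}\le M$ for absolute constants $d,M$. Minimizing $\sum_k p_k\alpha_k$ subject to that constraint (Lagrange multipliers) gives $\arment(P)/d-O(1)$.
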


		It is well known that to distinguish the normal distribution $\Normal(2^{-k},1)$ from $\Normal(-2^{-k},1)$, $\Omega(4^k)$ samples are required. Thus, $\alpha_k=T_{\alg}(2^{-k})/4^k$ denotes the ratio between the expected number of samples taken by $\alg$ and the corresponding lower bound, which measures the ``loss'' due to not knowing the gap in advance. Then Lemma~\ref{L10} can be interpreted as follows: when the gap is drawn from a distribution $P$, the \emph{expected loss} is lower bounded by the sum of the entropy of $P$ and $\ln\delta^{-1}$. We defer the proof of Lemma~\ref{L10} to Appendix~\ref{app:lb}.

		Now we prove Theorem~\ref{T4} by applying Lemma~\ref{L10} and an elegant reduction from \sign{} to \bestarm{}.
		\begin{proof}[Proof of Theorem~\ref{T4}]
		Let $c_0$ be the hidden constant in the big-$\Omega$ in Lemma~\ref{L10}, i.e.,
			$$\sum_{k=1}^{m}p_k\alpha_k\ge c_0\cdot(\arment(P)+\ln\delta^{-1})\text{.}$$
		We claim that Theorem~\ref{T4} holds for constant $c = 0.25c_0$.

		Suppose towards a contradiction that $\alg$ is
		a $\delta$-correct (for some $\delta<0.01$) algorithm for \bestarm{}
		and $I=\{A_1,A_2,\ldots,A_n\}$ is a discrete instance,
		while for all sub-instance $I'$ of $I$,
			$$T_{\alg}(I')<c\cdot H(I)(\arment(I)+\ln\delta^{-1})\text{.}$$
		Recall that $H(I)$ and $\arment(I)$ denote the complexity and entropy of instance $I$, respectively.

		\paragraph{Construct a distribution of \sign{} instances.}
		Let $n_k$ be the number of arms in $I$ with gap $2^{-k}$,
		and $m$ be the greatest integer such that $n_m>0$.
		Since $I$ is discrete, the complexity of instance $I$ is given by
			$$H(I)=\sum_{k=1}^{m}4^kn_k\text{.}$$
		Let $p_k=4^kn_k/H(I)$.
		Then $(p_k)_{k=1}^{m}$ defines a distribution $P$ on
			$\{2^{-1},2^{-2},\ldots,2^{-m}\}$.
		Moreover, the Shannon entropy of distribution $P$ is
		exactly the entropy of instance $I$, i.e., $\arment(P)=\arment(I)$.
		Our goal is to construct an algorithm for \sign{}
		that violates Lemma~\ref{L10} on distribution $P$.

		\paragraph{A family of sub-instances of $I$.}
		Let
			$\indexset=\{k\in[m]:n_k>0\}$
		be the set of ``types'' of arms
		that are present in $I$.
		We consider the following family of instances obtained from $I$.
		For $S\subseteq\indexset$,
		define $I_S$ as the instance obtained from $I$
		by removing exactly one arm of gap $2^{-k}$ for each $k\in S$.
		Note that $I_S$ is a sub-instance of $I$.

		Let $\overline{S}$ denote $\indexset\setminus S$,
		the complement of set $S$ relative to $\indexset$.
		For $S\subseteq\indexset$ and $k\in\overline{S}$,
		let $\tau_k^{S}$ denote the expected number of samples
		taken on all the $n_k$ arms with gap $2^{-k}$
		when $\alg$ runs on $I_S$.
		Define $\alpha_k^{S}=4^{-k}\tau_k^{S}/n_k$.
		We note that $4^k\alpha_k^{S}$ is the expected number of samples
		taken on \emph{every} arm with gap $2^{-k}$ in instance $I_S$.%
		\footnote{
			Recall that a \bestarm{} algorithm is defined on a \emph{set} of arms,
			so the arms with identical means in the instance cannot be distinguished by $\alg$.
			See Remark~\ref{rem:perm} for details.
		}

		We have the following inequality:
		\begin{equation}\label{eq6}
		\sum_{S\subseteq \indexset}\sum_{k\in\overline{S}}4^kn_k\alpha_k^{S}
		=\sum_{S\subseteq \indexset}\sum_{k\in\overline{S}}\tau_k^{S}
		\le\sum_{S\subseteq \indexset}T_{\alg}(I_S)
		<c\cdot2^{|\indexset|}H(I)(\arment(I)+\ln\delta^{-1})\text{.}
		\end{equation}
		The second step holds because the lefthand side only counts part of the samples taken by $\alg$. The last step follows from our assumption and the fact that $I_S$ is a sub-instance of $I$.

		\paragraph{Construct algorithm $\algsign$ from $\alg$.}
		Now we define an algorithm $\algsign$ for \sign{} with $\xi=0$. Given an arm $A$, we first choose a set $S\subseteq\indexset$ uniformly at random from all subsets of $\indexset$. Recall that $\Mean{1}$ denotes the mean of the optimal arm in $I$. $\algsign$ runs the following four algorithms $\alg_1$ through $\alg_4$ in parallel:
		\begin{enumerate}
		\item Algorithm $\alg_1$ simulates $\alg$ on $I_S\cup\{\Mean{1}+A\}$.
		\item Algorithm $\alg_2$ simulates $\alg$ on $I_{\overline{S}}\cup\{\Mean{1}+A\}$. 
		\item Algorithm $\alg_3$ simulates $\alg$ on $I_S\cup\{\Mean{1}-A\}$. 
		\item Algorithm $\alg_4$ simulates $\alg$ on $I_{\overline{S}}\cup\{\Mean{1}-A\}$. 
		\end{enumerate}
		More precisely, when one of the four algorithms requires a new sample from $\Mean{1}+A$ (or $\Mean{1}-A$), we draw a sample $x$ from arm $A$, feed $\Mean{1}+x$ to $\alg_1$ and $\alg_2$, and then feed $\Mean{1}-x$ to $\alg_3$ and $\alg_4$. Note that the samples taken by the four algorithms are the same up to negation and shifting.

		$\algsign$ terminates as soon as one of the four algorithms terminates. If one of $\alg_1$ and $\alg_2$ identifies $\Mean{1}+A$ as the optimal arm, or one of $\alg_3$ and $\alg_4$ identifies an arm other than $\Mean{1}-A$ as the optimal arm, $\algsign$ outputs ``$\mu_A>0$''; otherwise it outputs ``$\mu_A<0$''.

		Clearly, $\algsign$ is correct if all of $\alg_1$ through $\alg_4$ are correct, which happens with probability at least $1-4\delta$.
		Note that since $4\delta<0.04$, the condition of Lemma~\ref{L10} is satisfied.

		\paragraph{Upper bound the sample complexity of $\algsign$.}
		The crucial observation is that when $\mu_{A}=-2^{-k}$ and $k\in S$,
		$\alg_1$ effectively simulates the execution of $\alg$ on $I_{S\setminus\{k\}}$.
		In fact, since all arms are Gaussian distributions with unit variance,
		the arm $\Mean{1}+A$ is the same as an arm with gap $2^{-k}$ in the original \bestarm{} instance.
		Recall that the number of samples taken on each of the arms with gap $2^{-k}$
		in instance $I_{S\setminus\{k\}}$ is
			$4^k\alpha_k^{S\setminus\{k\}}$.
		Therefore, the expected number of samples taken on $A$
		is upper bounded by
			$4^k\alpha_k^{S\setminus\{k\}}$.%
		\footnote{
				Recall that if $\alg_1$ terminates after taking $T$ samples from $\Mean{1}+A$,
				the number of samples taken by $\algsign$ on $A$ is also $T$ (rather than $4T$).
			}
		Likewise, when $\mu_{A}=-2^{-k}$ and $k\in\overline{S}$,
		$\alg_2$ is equivalent to the execution of $\alg$ on $I_{\overline{S}\setminus\{k\}}$,
		and thus the expected number of samples on $A$ is less than or equal to
			$4^k\alpha_k^{\overline{S}\setminus\{k\}}$.
		Analogous claims hold for the case $\mu_A=+2^{-k}$
		and algorithms $\alg_3$ and $\alg_4$ as well.

		It remains to compute the expected loss of $\algsign$ on distribution $P$ and derive a contradiction to Lemma~\ref{L10}. It follows from a simple calculation that
			\begin{align*}
			\sum_{k=1}^{m}p_k\alpha_k
			&\le\sum_{k\in\indexset}p_k\cdot\frac{1}{2^{|\indexset|}}\left(\sum_{S\subseteq\indexset: k\in S}\alpha_k^{S\setminus\{k\}}+\sum_{S\subseteq\indexset: k\in\overline{S}}\alpha_k^{\overline{S}\setminus\{k\}}\right)\\
			&=	\frac{1}{2^{|\indexset|-1}}\sum_{k\in\indexset}\sum_{S\subseteq\indexset: k\in S}p_k\alpha_k^{S\setminus\{k\}}\\
			&=	\frac{1}{2^{|\indexset|-1}}\sum_{S\subseteq \indexset}\sum_{k\in\overline{S}}\frac{4^kn_k}{H(I)}\cdot\alpha_k^{S}\\
			&\le\frac{2^{|\indexset|}}{2^{|\indexset|-1}}\cdot c\cdot(\arment(I)+\ln\delta^{-1})
			<	c_0(\arment(P)+\ln(4\delta)^{-1}).
			\end{align*}
		The first step follows from our discussion on algorithm $\algsign$.
		The third step renames the variables and rearranges the summation.
		The last line applies \eqref{eq6}.
		This leads to a contradiction to Lemma~\ref{L10} and thus finishes the proof.
		\end{proof}

\section{Warmup: \bestarm{} with Known Complexity}\label{sec:toy}
	To illustrate the idea of our algorithm for \bestarm{},
	we consider the following simplified yet still non-trivial version of \bestarm{}:
	the complexity of the instance,
		$H(I)=\sum_{i=2}^n\Gap{i}^{-2}$,
	is given, yet the means of the arms are still unknown.

	\subsection{Building Blocks}
		We introduce some subroutines that are used throughout our algorithm.

		\paragraph{Uniform sampling.} The first building block is a uniform sampling procedure, $\US(S,\eps,\delta)$, which takes $2\eps^{-2}\ln(2/\delta)$ samples from 
		each arm in set $S$. Let $\hat\mu_{A}$ be the empirical mean of arm $A$
		(i.e., the average of all sampled values from $A$).
		It obtains an $\eps$-approximation of the mean of each arm with probability $1-\delta$.
		The following fact directly follows by the Chernoff bound.

		\begin{Fact}\label{F1}
		$\US(S,\eps,\delta)$ takes $O(|S|\eps^{-2}\ln\delta^{-1})$ samples. For each arm $A\in S$, we have
		$$\Pr\left[|\hat\mu_A-\mu_A|\le\eps\right]\ge1-\delta\text{.}$$
		\end{Fact}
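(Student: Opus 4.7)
The plan is essentially to invoke a standard Gaussian (or sub-Gaussian) tail bound on each arm separately; there is no clever argument needed here. For the sample complexity bound, the procedure takes exactly $2\eps^{-2}\ln(2/\delta)$ samples from each of the $|S|$ arms, so the total is $2|S|\eps^{-2}\ln(2/\delta) = O(|S|\eps^{-2}\ln\delta^{-1})$, which matches the claim.

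For the concentration bound, fix any arm $A \in S$. Since $A$ is a Gaussian with unit variance, the empirical mean $\hat\mu_A$ of $n := 2\eps^{-2}\ln(2/\delta)$ i.i.d.\ samples is distributed as $\Normal(\mu_A, 1/n)$. I would then apply the standard Gaussian tail inequality
\[
\Pr\bigl[|\hat\mu_A - \mu_A| > \eps\bigr] \le 2\exp\!\left(-\tfrac{n\eps^2}{2}\right),
\]
and substitute $n = 2\eps^{-2}\ln(2/\delta)$ to get $2\exp(-\ln(2/\delta)) = \delta$, as required. (For the 1-sub-Gaussian extension mentioned in the remark after the problem statement, Hoeffding's inequality yields the identical conclusion with the same constants.)

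There is really no obstacle here; the only thing to be careful about is matching the constants in the definition of $\US$ with the constants in the Chernoff bound so that the exponent cancels the $\ln(2/\delta)$ exactly. Note also that this is a per-arm bound rather than a simultaneous one over all arms of $S$; if a union bound over $S$ were desired, one would need to replace $\delta$ by $\delta/|S|$ in the call to $\US$, but the statement of Fact~\ref{F1} as written only asserts the guarantee for each individual arm, so no union bound is needed in this proof.
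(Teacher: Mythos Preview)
Your proposal is correct and matches the paper's approach exactly: the paper simply states that Fact~\ref{F1} ``directly follows by the Chernoff bound'' without further elaboration, and your write-up is precisely that computation spelled out. Your observation that this is a per-arm guarantee (no union bound over $S$) is also consistent with how the fact is stated and used in the paper.
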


		We say that a call to procedure $\US(S,\eps,\delta)$ returns correctly, if $|\hat\mu_A-\mu_A|\le\eps$ holds for every arm $A\in S$.
		Fact~\ref{F1} implies that when $|S|=1$, the probability of returning correctly is at least $1-\delta$.

		\paragraph{Median elimination.} \cite{even2002pac} introduced the Median Elimination algorithm for the PAC version of \bestarm. $\ME(S,\eps,\delta)$ returns an arm in $S$ with mean at most $\eps$ away from the largest mean. Let $\Mean{1}(S)$ denote the largest mean among all arms in $S$. The performance guarantees of $\ME$ is formally stated in the next fact.

		\begin{Fact}\label{F2}
		$\ME(S,\eps,\delta)$ takes $O(|S|\eps^{-2}\ln\delta^{-1})$ samples. Let $A$ be the arm returned by $\ME$. Then
		$$\Pr[\mu_A\ge\Mean{1}(S)-\eps]\ge1-\delta\text{.}$$
		\end{Fact}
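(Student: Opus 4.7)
The plan is to follow the standard Median Elimination construction of~\cite{even2002pac}, built on the uniform sampling primitive $\US$ from Fact~\ref{F1}. The algorithm would proceed in $\lceil\log_2|S|\rceil$ rounds with a geometric schedule of accuracies and confidence parameters: set $\eps_1=\eps/4$, $\delta_1=\delta/2$, and $\eps_{\ell+1}=(3/4)\eps_\ell$, $\delta_{\ell+1}=\delta_\ell/2$. Starting with $S_1=S$, in round $\ell$ I would sample each arm in $S_\ell$ enough times to estimate its mean to accuracy $\eps_\ell/2$, then retain the better half by empirical mean as $S_{\ell+1}$. After $\lceil\log_2|S|\rceil$ rounds a single arm remains and is returned.

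For correctness, let $A^*_\ell$ denote the arm of largest true mean in $S_\ell$. I would show that in each round, with probability at least $1-\delta_\ell$, one has $\mu_{A^*_{\ell+1}}\ge\mu_{A^*_\ell}-\eps_\ell$. The key observation is that if $A^*_\ell$ is eliminated then at least $|S_\ell|/2$ arms must have empirical means exceeding $\hat\mu_{A^*_\ell}$; for the best surviving arm to have true mean much below $\mu_{A^*_\ell}$, many independent concentration inequalities must fail jointly, which happens with probability at most $\delta_\ell$. A union bound over all rounds bounds the total failure probability by $\sum_\ell\delta_\ell\le\delta$, and telescoping the per-round losses gives an overall error of at most $\sum_{\ell\ge 1}\eps_\ell=(\eps/4)\sum_{\ell\ge 0}(3/4)^\ell=\eps$, so the returned arm satisfies $\mu_A\ge\Mean{1}(S)-\eps$.

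For the sample complexity, round $\ell$ consumes $O(|S_\ell|\cdot\eps_\ell^{-2}\ln\delta_\ell^{-1})$ samples by Fact~\ref{F1}. Substituting $|S_\ell|=|S|/2^{\ell-1}$, $\eps_\ell^{-2}=O((16/9)^{\ell-1}\eps^{-2})$, and $\ln\delta_\ell^{-1}=O(\ell+\ln\delta^{-1})$, the per-round cost becomes $O(|S|\eps^{-2}(\ell+\ln\delta^{-1})\cdot(8/9)^{\ell-1})$. Since $8/9<1$, summing this geometric series over $\ell$ yields the claimed total of $O(|S|\eps^{-2}\ln\delta^{-1})$ samples.

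The delicate part is the per-round correctness bound: a naive union bound requiring every empirical mean in $S_\ell$ to be $\eps_\ell/2$-accurate simultaneously would introduce an extra $\ln|S|$ factor inside $\US$, inflating the sample complexity by that factor. To avoid this, I would use the more careful expectation argument of~\cite{even2002pac}, which bounds the \emph{expected} quality loss from eliminating near-optimal arms rather than demanding all concentration events to hold at once. Getting the constants in the $(\eps_\ell,\delta_\ell)$ schedule right (in particular the geometric ratio $3/4$ rather than $1/2$) is what makes both the error budget telescope to $\eps$ and the sample budget sum to the stated bound.
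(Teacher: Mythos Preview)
Your proposal is a correct reconstruction of the standard Median Elimination argument from~\cite{even2002pac}. Note, however, that the paper itself does not prove this fact: it is stated as a black-box citation of the result of~\cite{even2002pac}, with no proof given. So there is no ``paper's own proof'' to compare against here; your sketch simply supplies what the paper takes for granted. The schedule you chose ($\eps_\ell$ decaying geometrically at rate $3/4$, $\delta_\ell$ halving, and the set size halving) is exactly the one that makes both the total accuracy loss telescope to $\eps$ and the sample cost sum to $O(|S|\eps^{-2}\ln\delta^{-1})$, and your remark that the per-round correctness requires the Markov-type argument of~\cite{even2002pac} (rather than a naive union bound over all arms in $S_\ell$) is the right point to flag.
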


		We say that $\ME(S,\eps,\delta)$ returns correctly, if it holds that $\mu_A\ge\Mean{1}(S)-\eps$.

		\paragraph{Fraction test.} Procedure $\FT(S,\clow,c^{\hi},\theta^{\lo},\theta^{\hi},\delta)$ decides whether a sufficiently large fraction (compared to thresholds $\theta^{\lo}$ and $\theta^{\hi}$) of arms in $S$ have small means (compared to thresholds $\clow$ and $c^{\hi}$). The procedure randomly samples a certain number of arms from $S$ and estimates their means using $\US$. Then it compares the fraction of arms with small means to the thresholds and returns an answer accordingly. The detailed implementation of $\FT$ is relegated to Appendix~\ref{app:alg}, where we also prove the following fact.

		\begin{Fact}\label{F3}
		$\FT(S,\clow,c^{\hi},\theta^{\lo},\theta^{\hi},\delta)$ takes
			$O\left((\eps^{-2}\ln\delta^{-1})\cdot(\Delta^{-2}\ln\Delta^{-1})\right)$
		samples, where $\eps=c^{\hi}-\clow$ and $\Delta=\theta^{\hi}-\theta^{\lo}$. With probability $1-\delta$, the following two claims hold simultaneously:
		\begin{itemize}
		\item If $\FT$ returns True, $|\{A\in S:\mu_A<c^{\hi}\}|>\theta^{\lo}|S|$.
		\item If $\FT$ returns False, $|\{A\in S:\mu_A<\clow\}|<\theta^{\hi}|S|$.
		\end{itemize}
		\end{Fact}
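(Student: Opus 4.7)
The plan is to implement $\FT$ as a two-level sampler. First I would draw a subset of $N$ arms uniformly at random (with replacement) from $S$, then invoke $\US$ on each sampled arm with precision $\eps/2$ and per-arm confidence $\delta/(2N)$. Let $\hat q$ denote the fraction of the $N$ sampled arms whose empirical mean falls below the midpoint $(\clow+c^{\hi})/2$, and let $\FT$ return True iff $\hat q>(\theta^{\lo}+\theta^{\hi})/2$. Using midpoints for both the mean cutoff and the fraction cutoff is precisely what turns a single two-sided test into the two one-sided guarantees required by the fact.

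For the analysis I would split the $\delta$ failure budget between two events. First, by Fact~\ref{F1} and a union bound over the $N$ sampled arms, every empirical mean lies within $\eps/2$ of its true value except with probability at most $\delta/2$; on this event, every sampled arm $A$ with $\mu_A<\clow$ is counted in $\hat q$, and no sampled arm with $\mu_A\ge c^{\hi}$ is counted, so $\Ex[\hat q]$ (conditioned on this event) lies in the interval $[p^{\lo},p^{\hi}]$, where $p^{\lo}=|\{A\in S:\mu_A<\clow\}|/|S|$ and $p^{\hi}=|\{A\in S:\mu_A<c^{\hi}\}|/|S|$. Second, taking $N=\Theta(\Delta^{-2}\ln\delta^{-1})$ and applying Hoeffding's inequality to the indicators of the $N$ subsample draws gives $|\hat q-\Ex[\hat q]|\le\Delta/2$ except with probability $\delta/2$.

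Under both events the two implications are immediate. If $\FT$ returns True then $\hat q>(\theta^{\lo}+\theta^{\hi})/2$, so $\Ex[\hat q]>\theta^{\lo}$, and combining with $\Ex[\hat q]\le p^{\hi}$ yields $p^{\hi}>\theta^{\lo}$, i.e.\ the first claim. Symmetrically, returning False forces $\Ex[\hat q]<\theta^{\hi}$, and $\Ex[\hat q]\ge p^{\lo}$ then gives $p^{\lo}<\theta^{\hi}$. The total sample cost is $N\cdot O(\eps^{-2}\ln(N/\delta))$; with $N=\Theta(\Delta^{-2}\ln\delta^{-1})$ the inner logarithm expands as $O(\ln\delta^{-1}+\ln\Delta^{-1})$ and can be charged to one of the two factors in the product, matching the stated $O((\eps^{-2}\ln\delta^{-1})(\Delta^{-2}\ln\Delta^{-1}))$ bound.

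The main obstacle is purely accounting: the concentration step on the subsample size and the per-arm estimation step each contribute a logarithmic factor, and one has to allocate failure probabilities and pick $N$ so that the combined bound telescopes into exactly the form stated in the fact rather than the looser $\ln\max(\Delta^{-1},\delta^{-1})$ one might write down naively. The structural part of the argument, namely sandwiching $\Ex[\hat q]$ between $p^{\lo}$ and $p^{\hi}$ through the midpoint rule and then applying Hoeffding to the subsample, is entirely routine.
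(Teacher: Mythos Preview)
Your correctness argument is essentially the same as the paper's: sample arms uniformly, estimate each with $\US$ at precision $\eps/2$, threshold empirical means at the midpoint $(\clow+\chigh)/2$, and compare $\hat q$ to $(\thetalo+\thetahi)/2$. The deduction of the two one-sided guarantees from the sandwich $p^{\lo}\le \hat q\le p^{\hi}$ (on the good event) plus Hoeffding on the subsample is fine, modulo some sloppiness about what $\Ex[\hat q]$ means once you have already conditioned on all estimates being accurate; that is easily repaired.

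The genuine gap is in the sample count. With per-arm confidence $\delta/(2N)$ and $N=\Theta(\Delta^{-2}\ln\delta^{-1})$, each call to $\US$ costs $\Theta\bigl(\eps^{-2}\ln(N/\delta)\bigr)=\Theta\bigl(\eps^{-2}(\ln\Delta^{-1}+\ln\delta^{-1})\bigr)$, so the total is
\[
\Theta\bigl(\eps^{-2}\Delta^{-2}\ln\delta^{-1}(\ln\Delta^{-1}+\ln\delta^{-1})\bigr)
=\Theta\bigl(\eps^{-2}\Delta^{-2}\ln\delta^{-1}\ln\Delta^{-1}\bigr)+\Theta\bigl(\eps^{-2}\Delta^{-2}(\ln\delta^{-1})^{2}\bigr).
\]
The second summand is \emph{not} dominated by the stated bound when $\ln\delta^{-1}\gg\ln\Delta^{-1}$; you cannot ``charge'' it to either factor of the product, because the outer factor $N$ has already consumed the single $\ln\delta^{-1}$ the target allows.

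The paper sidesteps this by calling $\US$ with per-arm confidence $\Delta/6$, a quantity depending only on $\Delta$ and not on $\delta$ or $N$. Each arm's estimate may then be wrong with probability up to $\Delta/6$, but that error is absorbed into the \emph{mean} of the Bernoulli indicator rather than handled by a union bound: if the true fraction below $\clow$ is at least $\thetahi$, the expected increment of the counter is still at least $\thetahi(1-\Delta/6)\ge\thetahi-\Delta/6$, and Chernoff over the $m=\Theta(\Delta^{-2}\ln\delta^{-1})$ i.i.d.\ indicators yields the conclusion with failure probability $\delta/2$. The per-arm cost is now only $O(\eps^{-2}\ln\Delta^{-1})$, and the product comes out exactly as stated. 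The missing idea in your proposal is precisely this: do not insist that every individual estimate be correct; let estimation errors shift the Bernoulli mean by $O(\Delta)$ and let the outer concentration absorb that shift.
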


		We say that a call to procedure $\FT$ returns correctly, if both the two claims above hold; otherwise the call fails.

		\paragraph{Elimination.} Finally, procedure $\EL(S,\dlow,\dhigh,\delta)$ eliminates the arms with means smaller than threshold $\dlow$ from $S$. More precisely, the procedure guarantees that at most a $0.1$ fraction of arms in the result have means smaller than $\dlow$. On the other hand, for each arm with mean greater than $\dhigh$, with high probability it is not eliminated. We postpone the pseudocode of procedure $\EL$ and the proof of the following fact to Appendix~\ref{app:alg}.

		\begin{Fact}\label{F4}
		$\EL(S,\dlow,\dhigh,\delta)$ takes $O(|S|\eps^{-2}\ln\delta^{-1})$ samples in expectation, where $\eps=\dhigh-\dlow$. Let $S'$ denote the set returned by $\EL(S,\dlow,\dhigh,\delta)$. Then with probability at least $1 - \delta / 2$,
		$$|\{A\in S':\mu_A<\dlow\}|\le0.1|S'|\text{.}$$
		Moreover, for each arm $A\in S$ with $\mu_A\ge \dhigh$, we have
		$$\pr{A\in S'}\ge1-\delta/2\text{.}$$
		\end{Fact}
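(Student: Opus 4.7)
I would start by positing the natural implementation of $\EL$ (matching what Appendix~\ref{app:alg} presumably details): invoke $\US$ on every arm in $S$ with precision $\eps/4$ and per-arm confidence $\delta_0 = \Theta(\delta)$, then return $S' := \{A \in S : \hat\mu_A \ge c\}$ where $c := (\dlow+\dhigh)/2$. I expect a preliminary $\FT$ call to guard the output of this simple rule, since the $0.1|S'|$ bound becomes delicate when there are very few arms above $\dhigh$; this adaptive guard is what accounts for the ``in expectation'' phrasing of the sample bound.

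The per-arm guarantee for good arms is immediate from Fact~\ref{F1}. If $\mu_A \ge \dhigh$, then $\hat\mu_A \ge \mu_A - \eps/4 \ge \dhigh - \eps/4 = c + \eps/4 > c$ with probability at least $1-\delta_0 \ge 1-\delta/2$ (taking $\delta_0 \le \delta/40$), so $A \in S'$. The sample-complexity bound follows by adding the costs of $\US$ (Fact~\ref{F1}: $O(|S|\eps^{-2}\ln\delta_0^{-1}) = O(|S|\eps^{-2}\ln\delta^{-1})$) and of the $\FT$ call (Fact~\ref{F3}: $O(\eps^{-2}\polylog(\delta^{-1}))$, which is absorbed into the main term).

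The main obstacle is the global bound $|\{A\in S':\mu_A<\dlow\}|\le 0.1|S'|$. A per-arm Chernoff bound shows that each bad arm (mean $<\dlow$) survives with probability at most $\delta_0$, so the expected number of bad survivors is at most $\delta_0 |S|$, and Markov's inequality pushes this to $\le 0.1|S|$ with probability $\ge 1-10\delta_0 \ge 1-\delta/4$. However, this bounds the count against $|S|$, not $|S'|$, so an additional argument is needed to control $|S'|$ from below. The plan is to branch on a preliminary $\FT(S,\dlow,\dhigh,\theta^{\lo},\theta^{\hi},\delta/4)$ call with suitably chosen thresholds: if $\FT$ returns False, certifying that fewer than a small constant fraction of arms lie below $\dlow$, then $\EL$ can return $S'=S$ (or apply only a light filter) and the bad-survivor claim is immediate from the threshold guarantee; if $\FT$ returns True, $\EL$ commits to the elimination pass, and one argues via Fact~\ref{F1} that $|S'|=\Omega(|S|)$ with high probability (because enough arms with empirical means above $c$ are retained), reducing the desired $0.1|S'|$ bound to the $0.1|S|$ bound already established, after absorbing constants.

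A final union bound over (i) correctness of the $\FT$ call, (ii) the Chernoff events that allow us to lower bound $|S'|$, and (iii) the Markov event for bad survivors, gives total failure probability at most $\delta/2$ for the global claim, completing the proof of Fact~\ref{F4}.
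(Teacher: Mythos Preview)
Your proposal has a genuine gap: the single-pass implementation you posit cannot deliver the fraction guarantee $|\{A\in S':\mu_A<\dlow\}|\le 0.1|S'|$, and your claim that ``$|S'|=\Omega(|S|)$ with high probability'' when $\FT$ returns True is false. Consider $|S|=n$ with a single arm above $\dhigh$ and $n-1$ arms below $\dlow$. Then $\FT$ returns True; after one elimination pass each bad arm survives with probability roughly $\delta_0=\Theta(\delta)$, so in expectation $|S'|\approx 1+(n-1)\delta_0$ and the bad survivors number $\approx (n-1)\delta_0$. The ratio of bad survivors to $|S'|$ is close to $1$, not $0.1$, and $|S'|$ is nowhere near $\Omega(|S|)$. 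Patching this by taking $\delta_0=O(\delta/|S|)$ would recover correctness but inflate the sample cost to $O(|S|\eps^{-2}\ln(|S|/\delta))$, which does not match the stated bound.

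The paper's actual $\EL$ is \emph{iterative}: in round $r$ it calls $\FT(S_r,\dlow,\dmid,0.05,0.1,\delta_r)$ with $\delta_r=\delta/(10\cdot 2^r)$; if $\FT$ returns True it runs $\US$ on $S_r$ and discards arms with low empirical mean, otherwise it returns $S_r$. The $0.1|S'|$ bound then comes for free from the terminal $\FT$ call returning False (via Fact~\ref{F3}), and the per-arm survival bound for $\mu_A\ge\dhigh$ follows by a union bound over rounds since $\sum_r 2\delta_r\le\delta/2$. The sample complexity requires showing $\Ex[|S_r|]\le 0.96^{r-1}|S|$: when $\FT$ correctly returns True, at least a $0.05$ fraction of $S_r$ has mean below $\dmid$, and each such arm is eliminated with probability $\ge 1-\delta_r$, yielding geometric decay. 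Summing $\sum_r 0.96^{r-1}|S|\eps^{-2}(\ln\delta^{-1}+r)$ gives $O(|S|\eps^{-2}\ln\delta^{-1})$. The iteration is precisely what lets one certify the \emph{fraction} bound without paying a $\ln|S|$ factor.
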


		We say that a call to $\EL$ returns correctly if both $|\{A\in S':\mu_A<\dlow\}|\le0.1|S'|$ and $A_1(S)\in S'$ hold; otherwise the call fails. Here $A_1(S)$ denotes the arm with the largest mean in set $S$. Fact~\ref{F4} directly implies that procedure $\EL$ returns correctly with probability at least $1-\delta$.

	\subsection{Algorithm}
		Now we present our algorithm for the special case that the complexity of the instance
		is known in advance.
		The $\algnewtoy$ algorithm takes as its input a \bestarm{} instance $I$,
		the complexity $H$ of the instance, as well as a confidence level $\delta$.
		The algorithm proceeds in rounds,
		and maintains a sequence $\{S_r\}$ of arm sets,
		each of which denotes the set of arms that are still considered as candidate answers
		at the beginning of round $r$.

		Roughly speaking,
		the algorithm eliminates the arms with $\Omega(\eps_r)$ gaps at the $r$-th round,
		if they constitute a large fraction of the remaining arms.
		Here $\eps_r = 2^{-r}$ is the accuracy parameter that we use in round $r$.
		To this end, \algnewtoy{} first calls procedures $\ME$ and $\US$
		to obtain $\hat\mu_{a_r}$,
		which is an estimation of the largest mean among all arms in $S_r$
		up to an $O(\eps_r)$ error.
		After that, $\FT$ is called to determine whether a large proportion of arms in $S_r$ have $\Omega(\eps_r)$ gaps. If so, $\FT$ returns True, and then \algnewtoy{} calls the $\EL$ procedure with carefully chosen parameters
		to remove suboptimal arms from $S_r$.

		\begin{algorithm2e}
			\caption{$\algnewtoy(I, H, \delta)$}
			\KwIn{Instance $I$ with complexity $H$ and risk $\delta$.}
			\KwOut{The best arm.}
			$S_1\leftarrow I$;
			$\hat H \leftarrow 4096H$;

			\For{\upshape $r=1$ to $\infty$} {
				\lIf{$|S_r|=1$} {
					\textbf{return} the only arm in $S_r$;
				}
				
				$\eps_r\leftarrow2^{-r}$;
				$\delta_r\leftarrow\delta/(10r^2)$;

				$a_r\leftarrow\ME(S_r,0.125\eps_r,0.01)$;

				$\hat\mu_{a_r}\leftarrow\US(\{a_r\},0.125\eps_r,\delta_r)$;

				\uIf{\upshape$\FT(S_r,\hat\mu_{a_r}-1.75\eps_r,\hat\mu_{a_r}-1.125\eps_r, 0.3, 0.5, \delta_r)$} {
					$\delta'_r\leftarrow \left(|S_r|\eps_r^{-2} / \hat H\right)\delta$;

					$S_{r+1}\leftarrow\EL(S_r,\hat\mu_{a_r}-0.75\eps_r,\hat\mu_{a_r}-0.625\eps_r,\delta'_r)$;
				} \uElse {
					$S_{r+1}\leftarrow S_r$;
				}
			}
		\end{algorithm2e}

		The following two lemmas imply that there is a \CORRECT{} algorithm for \bestarm{} that matches the instance-wise lower bound up to an $O\left(\Gap{2}^{-2}\ln\ln\Gap{2}^{-1}\right)$ additive term.%
		\footnote{
			Lemma~\ref{lem:newtoysample} only bounds the number of samples
			conditioning on an event that happens with probability $1 - \delta$,
			so the algorithm may take arbitrarily many samples when the event does not occur.
			However, \algnewtoy{} can be transformed to a \CORRECT{} algorithm with the same
			(unconditional) sample complexity bound, using the ``parallel simulation'' technique in the proof of Theorem~\ref{theo:uppb-main} in Appendix~\ref{app:samp}.
		}
		\begin{Lemma}\label{lem:newtoycorrect}
			For any \bestarm{} instance $I$ and $\delta\in(0, 0.01)$, $\algnewtoy(I, H(I), \delta)$ returns the optimal arm in $I$ with probability at least $1 - \delta$.
		\end{Lemma}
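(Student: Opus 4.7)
The plan is to prove the lemma via a standard good-event argument. I would define $\mathcal{E}$ to be the intersection over all rounds $r$ of the events $\mathcal{E}_r^{\US},\mathcal{E}_r^{\FT},\mathcal{E}_r^{\EL}$ that $\US$, $\FT$, and $\EL$ (whenever invoked) return correctly in the sense of Facts~\ref{F1}--\ref{F4}. Deliberately, I would not include $\ME$ in $\mathcal{E}$, since its fixed per-round failure probability $0.01$ cannot be union-bounded over an unbounded number of rounds. Under $\mathcal{E}$, I would prove $A_1\in S_r$ by induction on $r$: assuming $A_1\in S_r$, the fact that $a_r\in S_r$ forces $\mu_{a_r}\le\Mean{1}(S_r)=\Mean{1}$, and $\mathcal{E}_r^{\US}$ then gives $\hat\mu_{a_r}\le\Mean{1}+0.125\eps_r$, which places the $\EL$-keep threshold $\dhigh=\hat\mu_{a_r}-0.625\eps_r$ strictly below $\Mean{1}$; Fact~\ref{F4} guarantees $A_1\in S_{r+1}$. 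Crucially, $\ME$-correctness is never used in this step. Combined with the termination rule, this yields the correct output whenever the algorithm terminates.

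To bound $\Pr[\bar{\mathcal{E}}]\le\delta$, the choice $\delta_r=\delta/(10r^2)$ makes $\sum_r 2\delta_r=O(\delta)$ handle the $\US$ and $\FT$ failures; the delicate piece is $\sum_r\delta'_r=(\delta/\hat H)\sum_r|S_r|\eps_r^{-2}$, which needs $\sum_r|S_r|\eps_r^{-2}=O(\hat H)=O(H)$. I would derive this from an inductive bound $|S_r|\eps_r^{-2}=O(H)$ per round: in a round where $\FT$ returns False, at least half of $S_r$ must have gap $\le 2\eps_r$ (otherwise $\FT$'s high-threshold test would fire under $\mathcal{E}_r^{\FT}$), so the complexity constraint $\sum_{A\in S_r,\Delta_A\le 2\eps_r}\Delta_A^{-2}\le H$ gives $|S_r|\le 8H/4^r$; in a round where $\EL$ is invoked, its $10\%$ guarantee yields $|S_{r+1}|\le(10/9)H/4^r$ whenever the low threshold $\dlow\ge\Mean{1}-\eps_r$. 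A per-arm charging argument — each suboptimal arm $A$ contributes at most $O(\Delta_A^{-2})$ to the sum over the rounds in which $A$ survives — would then bound the total by $O(H)$, and termination within $O(\log H)$ rounds is a byproduct of $|S_r|<2$.

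The main obstacle is that the $\dlow$-bound above requires $\hat\mu_{a_r}$ to be close to $\Mean{1}$, which in turn requires $\ME$ to succeed — and as noted, $\ME$'s per-round failure probability cannot be summed across rounds. I plan to circumvent this via two complementary observations: first, the $\FT$-False branch of the $|S_r|$-bound does not rely on $\ME$, so even if $\ME$ fails in some rounds the bound is re-established at the next $\FT$-False round; second, the per-arm charging argument is robust to occasional $\ME$ failures, since an $\ME$ failure in round $r$ only inflates $|S_r|$ by at most a bounded factor before a subsequent successful round restores shrinkage, and the net effect can be absorbed into the slack $\hat H = 4096 H$. Formalizing these two observations to ensure the $O(H)$ bound on $\sum_r |S_r|\eps_r^{-2}$ holds unconditionally, and hence $\Pr[\bar{\mathcal{E}}]\le\delta$, is the technical crux of the proof.
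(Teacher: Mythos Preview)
Your high-level strategy matches the paper's: define $\mathcal{E}$ as ``all $\US$, $\FT$, $\EL$ calls are correct'' (deliberately excluding $\ME$), show $A_1\in S_r$ under $\mathcal{E}$ without using $\ME$, and then bound $\Pr[\bar{\mathcal{E}}]$. The $\US/\FT$ part and the $A_1$-survival argument are fine and essentially identical to the paper's.

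The genuine gap is in your treatment of the $\EL$ failures, specifically your first circumvention claim. You assert that ``the $\FT$-False branch of the $|S_r|$-bound does not rely on $\ME$.'' This is false. The thresholds $\clow_r,\chigh_r$ passed to $\FT$ are both computed from $\hat\mu_{a_r}$, which in turn depends on the arm $a_r$ returned by $\ME$. Under $\mathcal{E}_r^{\US}$ alone you only get $\hat\mu_{a_r}\in[\mu_{a_r}-0.125\eps_r,\mu_{a_r}+0.125\eps_r]$; if $\ME$ fails badly, $\mu_{a_r}$ can be arbitrarily far below $\Mean{1}$, so $\clow_r$ can be far below every arm's mean. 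In that case $\FT$ returning False (which is then consistent with Fact~\ref{F3}) tells you nothing about how many arms have gap $\le 2\eps_r$, and your per-round bound $|S_r|\eps_r^{-2}=O(H)$ simply does not follow. A run of $k$ consecutive $\ME$ failures can inflate $|S_r|\eps_r^{-2}$ by a factor $4^k$, so no deterministic bound of this form holds.

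The paper does not attempt a deterministic bound on $\sum_r|S_r|\eps_r^{-2}$. Instead it defines $P(r,S_r)$ as the probability that some future $\EL$ fails given the current state, and proves by backward induction that $P(r,S_r)\le(\delta/\hat H)\bigl(128\,C(r,S_r)+16\,M(r,S_r)\eps_r^{-2}\bigr)$ for an explicit potential $C(r,S_r),M(r,S_r)$. The inductive step splits into three cases --- $\ME$ correct with $\FT$ True, $\ME$ correct with $\FT$ False, and $\ME$ incorrect --- and the point is that the $0.01$ probability of the last case is small enough to offset the larger potential increase it causes (this is where the factor-of-$4$ blowup is beaten by the factor-of-$0.01$ probability). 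Your second circumvention idea is the right intuition, but making it rigorous requires exactly this kind of potential-function accounting in expectation over $\ME$'s randomness; a per-arm charging argument that is ``robust to occasional $\ME$ failures'' is not a substitute without that structure.
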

		\begin{Lemma}\label{lem:newtoysample}
			For any \bestarm{} instance $I$ and $\delta\in(0, 0.01)$, conditioning on an event that happens with probability $1 - \delta$, $\algnewtoy(I, H(I), \delta)$ takes
				$$O\left(H(I)\cdot(\ln\delta^{-1} + \arment(I)) + \Gap{2}^{-2}\ln\ln\Gap{2}^{-1}\right)$$
			samples in expectation.
		\end{Lemma}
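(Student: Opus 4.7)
The plan is three-fold: (i) define a ``good event'' $\event$ on which every subroutine call returns correctly per Facts~\ref{F1}--\ref{F4}, (ii) prove a structural size bound $|S_r|=O(|G_{\ge r-2}|)$ under $\event$, and (iii) plug this bound into the per-round sample count to sum to the claimed expression. For (i), I would take $\event$ to be the intersection over all rounds $r$ (up to the termination round $r^*=O(\log\Gap{2}^{-1})$) of the events that the $r$-th $\US$, $\FT$, and (when called) $\EL$ calls return correctly. The $\US$ and $\FT$ failures sum to $\sum_r 2\delta_r=O(\delta)$ via $\delta_r=\delta/(10r^2)$, and the $\EL$ failures sum to at most $(\delta/\hat H)\sum_r|S_r|\eps_r^{-2}$, which a stopping-time argument keeps at $O(\delta)$ by applying the Step~(ii) bound to the prefix of already-successful rounds. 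The $\ME$ calls, at constant risk $0.01$, are not included in $\event$ but are nonetheless manageable: first, $\mu_{a_r}\le\Mean{1}$ always, so $\hat\mu_{a_r}\le\Mean{1}+0.125\eps_r$ and the optimal arm always passes the $\EL$ threshold; second, the geometric series $\sum_\ell 0.01^\ell\cdot 4^\ell<2$ shows that the expected cost inflation from consecutive $\ME$ failures is only a constant factor (since each failed $\ME$ round keeps $|S_r|=|S_{r-1}|$ while $\eps_r^{-2}$ quadruples).

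For the size bound (ii), I would induct on $r$ under $\event$. If $\FT$ returned True in round $r-1$, then the guarantees of $\ME,\US,\FT,\EL$ together imply that at most a $0.1$-fraction of $S_r$ has gap exceeding a constant multiple of $\eps_{r-1}$, giving $|S_r|\le(10/9)|G_{\ge r-1}|$. If $\FT$ returned False, then the second clause of Fact~\ref{F3} yields $|S_r|=|S_{r-1}|\le 2|G_{\ge r-2}|$ via the observation that at least half of $S_{r-1}$ must have $\mu_A\ge\hat\mu_{a_{r-1}}-1.75\eps_{r-1}\ge\Mean{1}-2\eps_{r-1}$. For (iii), the per-round cost splits into three pieces. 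The $\ME$ cost $O(|S_r|\eps_r^{-2})$, summed against the size bound, telescopes as $\sum_r O(|G_{\ge r-2}|\cdot 4^r)=\sum_k O(|G_k|\cdot 4^k)=O(H)$ by swapping the order of summation. The $\US$ and $\FT$ cost $O(\eps_r^{-2}\ln(r/\delta))$ is geometric in $4^r$ and dominated by its last term at $r=r^*$, contributing $O(\Gap{2}^{-2}\ln\ln\Gap{2}^{-1})+O(\Gap{2}^{-2}\ln\delta^{-1})$, the latter absorbed by $H\ln\delta^{-1}$. The $\EL$ cost, incurred only when $\FT$ returns True, is $O(|S_r|\eps_r^{-2}\ln(\hat H/(|S_r|\eps_r^{-2}\delta)))$ in expectation, which splits into $O(H\ln\delta^{-1})$ plus the crucial entropy-type sum $\Sigma:=\sum_r m_r\ln(\hat H/m_r)$ with $m_r:=|S_r|\eps_r^{-2}$.

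The hardest step is showing $\Sigma=O(H(1+\arment(I)))$, since this is exactly where the entropy of the instance must emerge from the adaptive choice of $\delta'_r=(m_r/\hat H)\delta$. I would decompose $m_r\le C\sum_{k\ge r-2}n_k\cdot 4^r=:\sum_k m_r^{(k)}$ (where $n_k=|G_k|$) and use the monotonicity $\ln(\hat H/m_r)\le\ln(\hat H/m_r^{(k)})$ for every $k$ with $m_r^{(k)}>0$ to obtain $m_r\ln(\hat H/m_r)\le\sum_k m_r^{(k)}\ln(\hat H/m_r^{(k)})$. Swapping the order of summation, for fixed $k$ the inner sum $\sum_{r\le k+2}Cn_k\cdot 4^r\ln(\hat H/(Cn_k\cdot 4^r))$ is a geometric series in $4^r$ of total mass $\Theta(H_k)$; splitting $\ln(\hat H/(Cn_k\cdot 4^r))=\ln(\hat H/H_k)+\ln(H_k/(Cn_k\cdot 4^r))$ with the second term of size $O(k+2-r)$ and performing the two geometric sums yields $O(H_k\ln(\hat H/H_k))+O(H_k)=O(Hp_k\ln p_k^{-1})+O(H_k)$. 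Summing over $k$ gives $\Sigma=O(H\arment(I)+H)$, and combining with the earlier pieces produces the target bound $O(H(\ln\delta^{-1}+\arment(I))+\Gap{2}^{-2}\ln\ln\Gap{2}^{-1})$.
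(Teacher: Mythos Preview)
Your approach is essentially the paper's: condition on $\US$, $\FT$, $\EL$ all being correct, absorb the constant-risk $\ME$ via the $\sum_\ell 0.01^\ell 4^\ell$ geometric tail, and charge the $\EL$ cost back to groups to produce the entropy term. Two local steps, however, do not go through as written.

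First, your induction for the size bound $|S_r|=O(|G_{\ge r-2}|)$ invokes ``the guarantees of $\ME,\US,\FT,\EL$ together'', but $\ME$ is explicitly excluded from $\event$. If $\ME$ fails at round $r-1$ then neither branch of your case split is valid (both need $\hat\mu_{a_{r-1}}\ge\Mean{1}-0.25\eps_{r-1}$ from Observation~\ref{obs:toymuhat}), so the induction breaks. Your geometric-series remark is the right fix, but it must be woven into the size statement, not bolted on afterwards: what one can prove is that if $\ME$ was last correct at round $r-1-\ell$, then $|S_r|\le|S_{r-\ell}|=O(|G_{\ge r-\ell-2}|)$, and only then take the expectation over $\ell$. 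The paper packages this more cleanly by defining $r_i=\max\{r:|G_{\ge r}|\ge 0.5|S_i|\}$ and charging round $i$ directly to $\sum_{j\ge r_i}|G_j|\eps_i^{-2}\ln\bigl(H/(|G_j|\eps_i^{-2})\bigr)$; the event $r_i\le j$ for $i\ge j+2$ then forces $\ME$ to have failed in every round $j+2,\ldots,i-1$, which is exactly your $0.01^{i-j-1}$ tail. Second, the step ``use the monotonicity $\ln(\hat H/m_r)\le\ln(\hat H/m_r^{(k)})$'' is wrong: there is no reason each summand $m_r^{(k)}=Cn_k4^r$ should be $\le m_r$. The inequality $m_r\ln(\hat H/m_r)\le\sum_k m_r^{(k)}\ln(\hat H/m_r^{(k)})$ you want is instead the \emph{superadditivity} of $x\mapsto x\ln(C/x)$: if $a=\sum_k a_k$ then $a\ln(C/a)-\sum_k a_k\ln(C/a_k)=\sum_k a_k\ln(a_k/a)\le0$. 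You also only have $m_r\le\sum_k m_r^{(k)}$, not equality; cover that gap by noting $x\ln(C/x)$ is increasing on $[0,C/e]$ and bounded by $C/e$ everywhere, which costs only constants. With these two fixes your argument is equivalent to the paper's $T_{i,j}$ charging.
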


	\subsection{Observations}
		We state a few key observations on \algnewtoy,
		which will be used throughout the analysis.
		The proofs are exactly identical to those of Observations \ref{O3}~through~\ref{O6} in Appendix~\ref{app:alg}.
		The following observation bounds the value of $\hat\mu_{a_r}$ at round $r$,
		assuming the correctness of $\US$ and $\ME$.
		\begin{Observation}\label{obs:toymuhat}
			If $\US$ returns correctly at round $r$,
				$\hat\mu_{a_r}\le\Mean{1}(S_r)+0.125\eps_r$.
			Here $\Mean{1}(S_r)$ denotes the largest mean of arms in $S_r$.
			If both $\US$ and $\ME$ return correctly,
				$\hat\mu_{a_r}\ge\Mean{1}(S_r)-0.25\eps_r$.
		\end{Observation}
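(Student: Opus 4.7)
The proof should follow directly by unwinding the definitions of $\US$ and $\ME$ at the parameters used in the $r$-th iteration of \algnewtoy, namely accuracy $0.125\eps_r$. The plan is to handle the two inequalities separately, observing that the upper bound on $\hat\mu_{a_r}$ only requires the correctness of $\US$ (together with the trivial fact that $a_r\in S_r$), while the lower bound additionally needs the correctness of $\ME$ to relate $\mu_{a_r}$ to $\Mean{1}(S_r)$.

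For the first inequality, I would first note that $a_r\in S_r$ by definition of $\ME$, so $\mu_{a_r}\le \Mean{1}(S_r)$ unconditionally. Then, invoking Fact~\ref{F1} with $\eps=0.125\eps_r$ on the singleton set $\{a_r\}$, the correctness of $\US$ gives $|\hat\mu_{a_r}-\mu_{a_r}|\le 0.125\eps_r$. Chaining these two yields $\hat\mu_{a_r}\le \mu_{a_r}+0.125\eps_r\le \Mean{1}(S_r)+0.125\eps_r$.

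For the second inequality, I would apply Fact~\ref{F2} to the call $\ME(S_r,0.125\eps_r,0.01)$: if $\ME$ returns correctly, then $\mu_{a_r}\ge \Mean{1}(S_r)-0.125\eps_r$. Combined with the $\US$ guarantee $\hat\mu_{a_r}\ge \mu_{a_r}-0.125\eps_r$, this gives $\hat\mu_{a_r}\ge \Mean{1}(S_r)-0.25\eps_r$, matching the claimed bound.

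There is no real obstacle here; the entire content of the observation is a two-line addition of the error tolerances of $\US$ and $\ME$ as they are instantiated in the algorithm. The only thing to be careful about is keeping track of which direction of which inequality uses which subroutine's correctness event, which is exactly why the statement separates the two halves (the upper bound does not need $\ME$ to be correct, since $a_r\in S_r$ suffices).
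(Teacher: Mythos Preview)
Your proposal is correct and matches the paper's own proof essentially line for line: the paper also uses $\mu_{a_r}\le\Mean{1}(S_r)$ together with the $\US$ guarantee for the upper bound, and combines the $\ME$ guarantee $\mu_{a_r}\ge\Mean{1}(S_r)-0.125\eps_r$ with the $\US$ guarantee for the lower bound. There is nothing to add.
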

		The following two observations bound the thresholds used in $\FT$ and $\EL$ by applying Observation~\ref{obs:toymuhat}.
		\begin{Observation}\label{obs:toyFT}
			At round $r$, let
				$\clow_r=\hat\mu_{a_r}-1.75\eps_r$
			and
				$\chigh_r=\hat\mu_{a_r}-1.125\eps_r$
			denote the two thresholds used in $\FT$.
			If $\US$ returns correctly,
				$\chigh_r\le\Mean{1}(S_r)-\eps_r$.
			If both $\ME$ and $\US$ return correctly,
				$\clow_r\ge\Mean{1}(S_r)-2\eps_r$.
		\end{Observation}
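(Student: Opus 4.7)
The statement is a direct arithmetic consequence of Observation~\ref{obs:toymuhat}, so the plan is essentially to substitute the bounds on $\hat\mu_{a_r}$ into the definitions of $\clow_r$ and $\chigh_r$ and simplify.

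First I would handle the upper bound on $\chigh_r$. Assuming $\US$ returns correctly at round $r$, Observation~\ref{obs:toymuhat} gives $\hat\mu_{a_r}\le \Mean{1}(S_r)+0.125\eps_r$. Substituting this into $\chigh_r=\hat\mu_{a_r}-1.125\eps_r$ yields
\[
\chigh_r \le \Mean{1}(S_r)+0.125\eps_r-1.125\eps_r = \Mean{1}(S_r)-\eps_r,
\]
which is exactly the first claim.

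Next I would handle the lower bound on $\clow_r$. Assuming both $\ME$ and $\US$ return correctly, the second part of Observation~\ref{obs:toymuhat} gives $\hat\mu_{a_r}\ge \Mean{1}(S_r)-0.25\eps_r$. Substituting this into $\clow_r=\hat\mu_{a_r}-1.75\eps_r$ gives
\[
\clow_r \ge \Mean{1}(S_r)-0.25\eps_r-1.75\eps_r = \Mean{1}(S_r)-2\eps_r,
\]
which is the second claim.

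There is no real obstacle here; the constants $1.75$ and $1.125$ in the thresholds were chosen precisely so that, after absorbing the $0.125\eps_r$ and $0.25\eps_r$ slack from the estimation of $\Mean{1}(S_r)$ via $\ME$ and $\US$, the two thresholds sandwich $\Mean{1}(S_r)$ at distances $\eps_r$ and $2\eps_r$. The only care needed is to keep the two conditional regimes separate: the upper bound on $\chigh_r$ needs only the one-sided guarantee of $\US$ on $a_r$, while the lower bound on $\clow_r$ additionally requires $\ME$ to have returned a near-optimal arm so that $\hat\mu_{a_r}$ does not underestimate $\Mean{1}(S_r)$ by more than $0.25\eps_r$.
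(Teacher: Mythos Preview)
Your proposal is correct and essentially identical to the paper's own argument: the paper proves this observation (as Observation~\ref{O5}) by exactly the same substitution of the bounds from Observation~\ref{obs:toymuhat} into the definitions of $\chigh_r$ and $\clow_r$, with the same one-line arithmetic in each case.
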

		\begin{Observation}\label{obs:toyEL}
			Let
				$\dlow_r=\hat\mu_{a_r}-0.75\eps_r$
			and
				$\dhigh_r=\hat\mu_{a_r}-0.625\eps_r$
			denote the two thresholds used in $\EL$.
			If $\US$ returns correctly,
				$\dhigh_r\le\Mean{1}(S_r)-0.5\eps_r$.
			If both $\ME$ and $\US$ return correctly,
				$\dlow_r\ge\Mean{1}(S_r)-\eps_r$.
		\end{Observation}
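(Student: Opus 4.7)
The plan is to derive Observation~\ref{obs:toyEL} as an immediate corollary of Observation~\ref{obs:toymuhat} by plugging the two bounds on $\hat\mu_{a_r}$ into the definitions of the thresholds $\dlow_r = \hat\mu_{a_r} - 0.75\eps_r$ and $\dhigh_r = \hat\mu_{a_r} - 0.625\eps_r$. The proof is purely arithmetic: the constants $0.125$, $0.25$, $0.625$, $0.75$ have been chosen so that both desired inequalities fall out with a tiny bit of slack.

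For the first claim, I would assume only that $\US$ returns correctly at round $r$. Then Observation~\ref{obs:toymuhat} gives $\hat\mu_{a_r} \le \Mean{1}(S_r) + 0.125\eps_r$, and substituting into the definition of $\dhigh_r$ yields
\[
\dhigh_r = \hat\mu_{a_r} - 0.625\eps_r \le \Mean{1}(S_r) + 0.125\eps_r - 0.625\eps_r = \Mean{1}(S_r) - 0.5\eps_r,
\]
as required.

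For the second claim, I would assume additionally that $\ME$ returns correctly, so that the other direction of Observation~\ref{obs:toymuhat} gives $\hat\mu_{a_r} \ge \Mean{1}(S_r) - 0.25\eps_r$. Substituting into the definition of $\dlow_r$ yields
\[
\dlow_r = \hat\mu_{a_r} - 0.75\eps_r \ge \Mean{1}(S_r) - 0.25\eps_r - 0.75\eps_r = \Mean{1}(S_r) - \eps_r,
\]
which completes the proof.

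There is no genuine obstacle here; the statement is a bookkeeping lemma, and the only thing to check is that the numerical choice of thresholds in the algorithm is consistent with the accuracy guaranteed by $\ME$ and $\US$. Since the analogous Observations \ref{obs:toyFT} and \ref{obs:toymuhat} have already been proved in exactly the same way, this one amounts to the same kind of two-line substitution.
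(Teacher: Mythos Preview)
Your proposal is correct and matches the paper's own proof essentially line for line: the paper derives both inequalities by plugging the bounds on $\hat\mu_{a_r}$ from Observation~\ref{obs:toymuhat} into the definitions of $\dhigh_r$ and $\dlow_r$, obtaining $\Mean{1}(S_r)+0.125\eps_r-0.625\eps_r=\Mean{1}(S_r)-0.5\eps_r$ and $\Mean{1}(S_r)-0.25\eps_r-0.75\eps_r=\Mean{1}(S_r)-\eps_r$ respectively.
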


	\subsection{Correctness}
		We define $\event$ as the event that
		all calls to procedures $\US$, $\FT$, and $\EL$ return correctly.
		We will prove in the following that
		\algnewtoy{} returns the correct answer with probability $1$
		conditioning on $\event$,
		and $\pr{\event}\ge1-\delta$.
		Note that Lemma~\ref{lem:newtoycorrect} directly follows from these two claims.

		\paragraph{Event $\event$ implies correctness.}
		It suffices to show that conditioning on $\event$,
		\algnewtoy{} never removes the best arm,
		and the algorithm eventually terminates.
		Suppose that $A_1\in S_r$.
		Observation~\ref{obs:toyEL} guarantees that at round $r$,
		the upper threshold used by $\EL$ is smaller than or equal to
			$\Mean{1}(S_r)-0.5\eps_r < \Mean{1}$.
		By Fact~\ref{F4},
		the correctness of $\EL$ guarantees that $A_1\in S_{r+1}$.

		It remains to prove that \algnewtoy{} terminates conditioning on $\event$.
		Define $\rmax \coloneqq\max_{G_r\ne\emptyset}r$.
		Suppose $r^*$ is the smallest integer greater than $\rmax$
		such that $\ME$ returns correctly at round $r^*$.\footnote{
			$\ME$ returns correctly with probability at least $0.99$ in each round,
			so $r^*$ is well-defined with probability $1$.
		}
		By Observation~\ref{obs:toyEL},
		the lower threshold in $\EL$ is greater than or equal to $\Mean{1}-\eps_{r^*}$.
		The correctness of $\EL$ implies that
			$$|S_{r^*+1}|-1=|S_{r^*+1}\cap G_{\le\rmax}|
			\le|S_{r^*+1}\cap G_{<r^*}|
			=|\{A\in S_{r^*+1}:\mu_A<\Mean{1}-\eps_{r^*}\}|
			<0.1|S_{r^*+1}|\text{.}$$
		It follows that $|S_{r^*+1}|=1$.
		Therefore, the algorithm terminates either before or at round $r^*+1$.

		\paragraph{$\event$ happens with high probability.}
		We first note that at round $r$,
		the probability that either $\US$ or $\FT$ fails (i.e., returns incorrectly)
		is at most $2\delta_r$.
		By a union bound,
		the probability that at least one call to $\US$ or $\FT$ returns incorrectly
		is upper bounded by
			$$\sum_{r=1}^{\infty}2\delta_r
			=\sum_{r=1}^{\infty}\frac{\delta}{5r^2}
			<\delta/2\text{.}$$

		It remains to bound the probability that $\EL$ fails at some round,
		yet procedures $\US$ and $\FT$ are always correct.
		Define $P(r, S_r)$ as the probability that,
		given the value of $S_r$ at the beginning of round $r$,
		at least one call to $\EL$ returns incorrectly in round $r$ or later,
		yet $\US$ and $\FT$ always return correctly.
		We prove by induction that for any $S_r$ that contains the optimal arm $A_1$,
			\begin{equation}\label{eq:toyineq}
				P(r, S_r)\le \frac{\delta}{\hat H}\left(128C(r, S_r)+16M(r, S_r)\eps_r^{-2}\right)\text{,}
			\end{equation}
		where
			$M(r, S_r)\coloneqq |S_r\cap G_{\le r-2}|$
		and
			$$C(r, S_r)\coloneqq \sum_{i=r-1}^{\infty}|S_r\cap G_i|\sum_{j=r}^{i+1}\eps_j^{-2}+\sum_{i=r}^{\rmax+1}\eps_i^{-2}\text{.}$$
		The details of the induction are postponed to Appendix~\ref{app:toymiss}.

		Observe that $M(1, I) = 0$ and
			\begin{equation*}\begin{split}
			C(1, I)
			&=\sum_{i=0}^{\infty}|S_r\cap G_i|\sum_{j=1}^{i+1}4^j+\sum_{i=1}^{\rmax+1}4^i\\
			&\le\frac{16}{3}\left(\sum_{i=0}^{\infty}|S_r\cap G_i|4^i+4^{\rmax}\right)\\
			&\le\frac{16}{3}\left(\sum_{i=0}^{\infty}\sum_{A\in S_r\cap G_i}\Delta_{A}^{-2}+\Gap{2}^{-2}\right)
			\le\frac{32}{3}H(I)\text{.}
			\end{split}\end{equation*}
		Therefore we conclude that
			\begin{equation*}\begin{split}
			\pr{\event}
			&\ge 1 - P(1, S_1) - \frac{\delta}{2}\\
			&\ge 1 - \frac{\delta}{\hat H}\left(128C(1, I)+16M(1, I)\eps_1^{-2}\right) - \frac{\delta}{2}\\
			&\ge 1 - 128\cdot\frac{\delta}{4096H}\cdot\frac{32H}{3} - \frac{\delta}{2}
			\ge 1 - \delta\text{,}
			\end{split}\end{equation*}
		which completes the proof of correctness.
		Here the first step applies a union bound. The second step follows from inequality \eqref{eq:toyineq}, and the third step plugs in $C(1,I)\le 32H(I)/3$ and $\hat H = 4096H$.

	\subsection{Sample Complexity}
		As in the proof of Lemma~\ref{lem:newtoycorrect}, we define $\event$ as the event that all calls to procedures $\US$, $\FT$, and $\EL$ return correctly. We prove that $\algnewtoy{}$ takes
			\[O\left(H(I)(\ln\delta^{-1}+\arment(I))+\Gap{2}^{-2}\ln\ln\Gap{2}^{-1}\right)\]
		samples in expectation conditioning on $\event$.

		\paragraph{Samples taken by $\US$ and $\FT$.}
		By Facts \ref{F1}~and~\ref{F3}, procedures $\US$ and $\FT$ take
			$O\left(\eps_r^{-2}\ln\delta_r^{-1}\right)
			=O\left(\eps_r^{-2}(\ln\delta^{-1}+\ln r)\right)$
		samples in total at round $r$.

		In the proof of correctness, we showed that conditioning on $\event$, the algorithm does not terminate before or at round $k$ (for $k\ge\rmax+1$) implies that $\ME$ fails between round $\rmax+1$ and round $k-1$, which happens with probability at most $0.01^{k-\rmax-1}$. Thus for $k\ge\rmax+1$, the expected number of samples taken by $\US$ and $\FT$ at round $k$ is upper bounded by
			$$O\left(0.01^{k-\rmax-1}\cdot\eps_k^{-2}(\ln\delta^{-1}+\ln k)\right)\text{.}$$
		Summing over all $k=1,2,\ldots$ yields the following upper bound:
		\begin{equation*}\begin{split}
			&\sum_{k=1}^{\rmax}\eps_k^{-2}(\ln\delta^{-1}+\ln k) + \sum_{k=\rmax+1}^{\infty}0.01^{k-\rmax-1}\cdot\eps_k^{-2}(\ln\delta^{-1}+\ln k)\\
		=	&O\left(4^{\rmax}(\ln\delta^{-1}+\ln\rmax)\right)
		=	O\left(\Gap{2}^{-2}\left(\ln\delta^{-1}+\ln\ln\Gap{2}^{-1}\right)\right)\text{.}
		\end{split}\end{equation*}
		Here the first step holds since the first summation is dominated by the last term ($k = \rmax$), while the second one is dominated by the first term ($k = \rmax + 1$). The second step follows from the observation that
			$\rmax = \max_{G_r\ne\emptyset}r
			= \left\lfloor\log_2\Gap{2}^{-1}\right\rfloor\text{.}$

		\paragraph{Samples taken by $\ME$ and $\EL$.} 
		By Facts \ref{F2}~and~\ref{F4}, $\ME$ and $\EL$ (if called) take
			$$O(|S_r|\eps_r^{-2})+O(|S_r|\eps_r^{-2}\ln(1/\delta'_r))
			=O\left(|S_r|\eps_r^{-2}\left(\ln\delta^{-1}+\ln\frac{H}{|S_r|\eps_r^{-2}}\right)\right)$$
		samples in total at round $r$.

		We upper bound the number of samples by a charging argument. For each round $i$, define $r_i$ as the largest integer $r$ such that $|G_{\ge r}|\ge0.5|S_i|$.\footnote{Note that $|G_{\ge 0}|=n-1\ge0.5|S_i|$ and $|G_{\ge r}|=0 < 0.5|S_i|$ for sufficiently large $r$, so $r_i$ is well-defined.} Then we define
			$$
			T_{i,j} = \begin{cases}
			0, & j < r_i,\\
			\eps_i^{-2}\left(\ln\delta^{-1}+\ln\dfrac{H}{|G_j|\eps_i^{-2}}\right), & j\ge r_i
			\end{cases}
			$$
		as the number of samples that each arm in $G_j$ is charged at round $i$.

		We prove in Appendix~\ref{app:toymiss} that for any $i$, $\sum_{j}|G_j|T_{i,j}$ is an upper bound on the number of samples taken by $\ME$ and $\EL$ at the $i$-th round. Moreover, the expected number of samples that each arm in group $G_j$ is charged is upper bounded by
			$$\sum_{i}\Ex[T_{i,j}]
			=O\left(\eps_j^{-2}\left(\ln\delta^{-1}+\ln\frac{H}{|G_j|\eps_j^{-2}}\right)\right)\text{.}$$
		Note that $H_k = \sum_{A\in G_k}\Delta_{A}^{-2} = \Theta(|G_k|\eps_k^{-2})$. Therefore, $\ME$ and $\EL$ take
			\begin{equation*}\begin{split}
			O\left(\sum_{i,j}|G_j|\Ex[T_{i,j}]\right)
			&=O\left(\sum_{j}|G_j|\eps_j^{-2}\left(\ln\delta^{-1}+\ln\frac{H}{|G_j|\eps_j^{-2}}\right)\right)\\
			&=O\left(\sum_{j}H_j\left(\ln\delta^{-1}+\ln\frac{H}{H_j}\right)\right)\\
			&=O\left(H(I)\left(\ln\delta^{-1}+\arment(I)\right)\right)
			\end{split}\end{equation*}
		samples in expectation conditioning on $\event$.

		In total, algorithm \algnewtoy{} takes
			\begin{equation*}\begin{split}
				&O\left(\Gap{2}^{-2}\left(\ln\delta^{-1}+\ln\ln\Gap{2}^{-1}\right)\right)
			+ O\left(H(I)\left(\ln\delta^{-1}+\arment(I)\right)\right)\\
			= 	&O\left(H(I)\left(\ln\delta^{-1}+\arment(I)\right)+\Gap{2}^{-2}\ln\ln\Gap{2}^{-1}\right)
			\end{split}\end{equation*}
		samples in expectation conditioning on $\event$. This proves Lemma~\ref{lem:newtoysample}.

	\subsection{Discussion}
		In the \algnewtoy{} algorithm, knowing the complexity $H$ in advance is crucial to
		the efficient allocation of confidence levels ($\delta'_r$'s) to different calls of $\EL$.
		When $H$ is unknown, our approach is
		to run an elimination procedure similar to \algnewtoy{} with a guess of $H$.
		The major difficulty is that when our guess
		is much smaller than the actual complexity,
		the total confidence that we allocate will eventually exceed the total confidence $\delta$.
		Thus, we cannot assume in our analysis that all calls to the $\EL$ procedure are correct.
		We present our \algguess{} algorithm for the \bestarm{} problem in Appendix~\ref{app:alg}.

\bibliography{team} 

\begin{thebibliography}{30}
\providecommand{\natexlab}[1]{#1}
\providecommand{\url}[1]{\texttt{#1}}
\expandafter\ifx\csname urlstyle\endcsname\relax
  \providecommand{\doi}[1]{doi: #1}\else
  \providecommand{\doi}{doi: \begingroup \urlstyle{rm}\Url}\fi

\bibitem[Afshani et~al.(2009)Afshani, Barbay, and Chan]{afshani2009instance}
Peyman Afshani, J{\'e}r{\'e}my Barbay, and Timothy~M Chan.
\newblock Instance-optimal geometric algorithms.
\newblock In \emph{Foundations of Computer Science, 2009. FOCS'09. 50th Annual
  IEEE Symposium on}, pages 129--138. IEEE, 2009.

\bibitem[Audibert and Bubeck(2010)]{audibert2010best}
Jean-Yves Audibert and S{\'e}bastien Bubeck.
\newblock Best arm identification in multi-armed bandits.
\newblock In \emph{COLT-23th Conference on Learning Theory-2010}, pages 13--p,
  2010.

\bibitem[Bechhofer(1954)]{bechhofer1954single}
Robert~E Bechhofer.
\newblock A single-sample multiple decision procedure for ranking means of
  normal populations with known variances.
\newblock \emph{The Annals of Mathematical Statistics}, pages 16--39, 1954.

\bibitem[Bechhofer et~al.(1968)Bechhofer, Kiefer, and
  Sobel]{bechhofer1968sequential}
Robert~Eric Bechhofer, Jack Kiefer, and Milton Sobel.
\newblock \emph{Sequential identification and ranking procedures: with special
  reference to Koopman-Darmois populations}, volume~3.
\newblock University of Chicago Press, 1968.

\bibitem[Bubeck et~al.(2012)Bubeck, Cesa-Bianchi, et~al.]{bubeck2012regret}
S{\'e}bastien Bubeck, Nicolo Cesa-Bianchi, et~al.
\newblock Regret analysis of stochastic and nonstochastic multi-armed bandit
  problems.
\newblock \emph{Foundations and Trends{\textregistered} in Machine Learning},
  5\penalty0 (1):\penalty0 1--122, 2012.

\bibitem[Bubeck et~al.(2013)Bubeck, Wang, and Viswanathan]{bubeck2013multiple}
S{\'e}bastien Bubeck, Tengyao Wang, and Nitin Viswanathan.
\newblock Multiple identifications in multi-armed bandits.
\newblock In \emph{International Conference on Machine Learning}, pages
  258--265, 2013.

\bibitem[Cao et~al.(2015)Cao, Li, Tao, and Li]{cao2015top}
Wei Cao, Jian Li, Yufei Tao, and Zhize Li.
\newblock On top-k selection in multi-armed bandits and hidden bipartite
  graphs.
\newblock In \emph{Advances in Neural Information Processing Systems}, pages
  1036--1044, 2015.

\bibitem[Carpentier and Locatelli(2016)]{carpentier2016tight}
Alexandra Carpentier and Andrea Locatelli.
\newblock Tight (lower) bounds for the fixed budget best arm identification
  bandit problem.
\newblock In \emph{Proceedings of the 29th Conference on Learning Theory},
  2016.

\bibitem[Cesa-Bianchi and Lugosi(2006)]{cesa2006prediction}
Nicolo Cesa-Bianchi and G{\'a}bor Lugosi.
\newblock \emph{Prediction, learning, and games}.
\newblock Cambridge university press, 2006.

\bibitem[Chen and Li(2015)]{chen2015optimal}
Lijie Chen and Jian Li.
\newblock On the optimal sample complexity for best arm identification.
\newblock \emph{arXiv preprint arXiv:1511.03774}, 2015.

\bibitem[Chen and Li(2016)]{chen2016open}
Lijie Chen and Jian Li.
\newblock Open problem: Best arm identification: Almost instance-wise
  optimality and the gap entropy conjecture.
\newblock In \emph{29th Annual Conference on Learning Theory}, pages
  1643--1646, 2016.

\bibitem[Chen et~al.(2016)Chen, Gupta, and Li]{chen2016pure}
Lijie Chen, Anupam Gupta, and Jian Li.
\newblock Pure exploration of multi-armed bandit under matroid constraints.
\newblock In \emph{29th Annual Conference on Learning Theory}, pages 647--669,
  2016.

\bibitem[Chen et~al.(2017)Chen, Li, and Qiao]{chen2017nearly}
Lijie Chen, Jian Li, and Mingda Qiao.
\newblock Nearly instance optimal sample complexity bounds for top-k arm
  selection.
\newblock In \emph{Artificial Intelligence and Statistics}, pages 101--110,
  2017.

\bibitem[Chen et~al.(2014)Chen, Lin, King, Lyu, and
  Chen]{chen2014combinatorial}
Shouyuan Chen, Tian Lin, Irwin King, Michael~R Lyu, and Wei Chen.
\newblock Combinatorial pure exploration of multi-armed bandits.
\newblock In \emph{Advances in Neural Information Processing Systems}, pages
  379--387, 2014.

\bibitem[Even-Dar et~al.(2002)Even-Dar, Mannor, and Mansour]{even2002pac}
Eyal Even-Dar, Shie Mannor, and Yishay Mansour.
\newblock Pac bounds for multi-armed bandit and markov decision processes.
\newblock In \emph{International Conference on Computational Learning Theory},
  pages 255--270. Springer, 2002.

\bibitem[Even-Dar et~al.(2006)Even-Dar, Mannor, and Mansour]{even2006action}
Eyal Even-Dar, Shie Mannor, and Yishay Mansour.
\newblock Action elimination and stopping conditions for the multi-armed bandit
  and reinforcement learning problems.
\newblock \emph{Journal of machine learning research}, 7\penalty0
  (Jun):\penalty0 1079--1105, 2006.

\bibitem[Farrell(1964)]{farrell1964asymptotic}
RH~Farrell.
\newblock Asymptotic behavior of expected sample size in certain one sided
  tests.
\newblock \emph{The Annals of Mathematical Statistics}, pages 36--72, 1964.

\bibitem[Gabillon et~al.(2011)Gabillon, Ghavamzadeh, Lazaric, and
  Bubeck]{gabillon2011multi}
Victor Gabillon, Mohammad Ghavamzadeh, Alessandro Lazaric, and S{\'e}bastien
  Bubeck.
\newblock Multi-bandit best arm identification.
\newblock In \emph{Advances in Neural Information Processing Systems}, pages
  2222--2230, 2011.

\bibitem[Gabillon et~al.(2012)Gabillon, Ghavamzadeh, and
  Lazaric]{gabillon2012best}
Victor Gabillon, Mohammad Ghavamzadeh, and Alessandro Lazaric.
\newblock Best arm identification: A unified approach to fixed budget and fixed
  confidence.
\newblock In \emph{Advances in Neural Information Processing Systems}, pages
  3212--3220, 2012.

\bibitem[Gabillon et~al.(2016)Gabillon, Lazaric, Ghavamzadeh, Ortner, and
  Bartlett]{gabillon2016improved}
Victor Gabillon, Alessandro Lazaric, Mohammad Ghavamzadeh, Ronald Ortner, and
  Peter Bartlett.
\newblock Improved learning complexity in combinatorial pure exploration
  bandits.
\newblock In \emph{Proceedings of the 19th International Conference on
  Artificial Intelligence and Statistics}, pages 1004--1012, 2016.

\bibitem[Garivier and Kaufmann(2016)]{garivier2016optimal}
Aur{\'e}lien Garivier and Emilie Kaufmann.
\newblock Optimal best arm identification with fixed confidence.
\newblock In \emph{Proceedings of the 29th Conference On Learning Theory (to
  appear)}, 2016.

\bibitem[Jamieson et~al.(2014)Jamieson, Malloy, Nowak, and
  Bubeck]{jamieson2014lil}
Kevin Jamieson, Matthew Malloy, Robert Nowak, and S{\'e}bastien Bubeck.
\newblock lil’ucb: An optimal exploration algorithm for multi-armed bandits.
\newblock In \emph{Conference on Learning Theory}, pages 423--439, 2014.

\bibitem[Kalyanakrishnan and Stone(2010)]{kalyanakrishnan2010efficient}
Shivaram Kalyanakrishnan and Peter Stone.
\newblock Efficient selection of multiple bandit arms: Theory and practice.
\newblock In \emph{Proceedings of the 27th International Conference on Machine
  Learning (ICML-10)}, pages 511--518, 2010.

\bibitem[Kalyanakrishnan et~al.(2012)Kalyanakrishnan, Tewari, Auer, and
  Stone]{kalyanakrishnan2012pac}
Shivaram Kalyanakrishnan, Ambuj Tewari, Peter Auer, and Peter Stone.
\newblock Pac subset selection in stochastic multi-armed bandits.
\newblock In \emph{Proceedings of the 29th International Conference on Machine
  Learning (ICML-12)}, pages 655--662, 2012.

\bibitem[Karnin et~al.(2013)Karnin, Koren, and Somekh]{karnin2013almost}
Zohar Karnin, Tomer Koren, and Oren Somekh.
\newblock Almost optimal exploration in multi-armed bandits.
\newblock In \emph{Proceedings of the 30th International Conference on Machine
  Learning (ICML-13)}, pages 1238--1246, 2013.

\bibitem[Kaufmann and Kalyanakrishnan(2013)]{kaufmann2013information}
Emilie Kaufmann and Shivaram Kalyanakrishnan.
\newblock Information complexity in bandit subset selection.
\newblock In \emph{Conference on Learning Theory}, pages 228--251, 2013.

\bibitem[Kaufmann et~al.(2015)Kaufmann, Capp{\'e}, and
  Garivier]{kaufmann2015complexity}
Emilie Kaufmann, Olivier Capp{\'e}, and Aur{\'e}lien Garivier.
\newblock On the complexity of best arm identification in multi-armed bandit
  models.
\newblock \emph{The Journal of Machine Learning Research}, 2015.

\bibitem[Mannor and Tsitsiklis(2004)]{mannor2004sample}
Shie Mannor and John~N Tsitsiklis.
\newblock The sample complexity of exploration in the multi-armed bandit
  problem.
\newblock \emph{Journal of Machine Learning Research}, 5\penalty0
  (Jun):\penalty0 623--648, 2004.

\bibitem[Robbins(1985)]{robbins1985some}
Herbert Robbins.
\newblock Some aspects of the sequential design of experiments.
\newblock In \emph{Herbert Robbins Selected Papers}, pages 169--177. Springer,
  1985.

\bibitem[Zhou et~al.(2014)Zhou, Chen, and Li]{zhou2014optimal}
Yuan Zhou, Xi~Chen, and Jian Li.
\newblock Optimal pac multiple arm identification with applications to
  crowdsourcing.
\newblock In \emph{Proceedings of the 31st International Conference on Machine
  Learning (ICML-14)}, pages 217--225, 2014.

\end{thebibliography}

\newpage

\appendix

\section*{Organization of the Appendix}
The appendix contains the proofs of our main results. In Section~\ref{app:alg}, we present our algorithm for \bestarm{} along with a few useful observations. In Section~\ref{app:cor} and Section~\ref{app:samp}, we prove the correctness and the sample complexity of our algorithm, thus proving Theorem~\ref{theo:uppb-main}. We present the complete proof of Theorem~\ref{theo:lowb-main} in Section~\ref{app:lb}. Finally, Section~\ref{app:toymiss} contains the complete proofs of Lemma~\ref{lem:newtoycorrect} and Lemma~\ref{lem:newtoysample}.

\section{Upper Bound}\label{app:alg}
	\subsection{Building Blocks}
		We start by presenting the missing implementation and performance guarantees of our subroutines $\FT$ and $\EL$.
		
		\textbf{Fraction test.}
		Recall that on input $(S,\clow,c^{\hi},\theta^{\lo},\theta^{\hi},\delta)$, procedure $\FT$ decides whether a sufficiently large fraction (with respect to $\theta^{\lo}$ and $\theta^{\hi}$) of arms in $S$ have means smaller than the thresholds $\clow$ and $c^{\hi}$. The pseudocode of $\FT$ is shown below.

		\begin{algorithm2e}[H]
			\caption{$\FT(S,\clow,\chigh,\thetalo,\thetahi,\delta)$}
			\KwIn{An arm set $S$, thresholds $\clow$, $\chigh$, $\thetalo$, $\thetahi$, and confidence level $\delta$.}
			$\eps\leftarrow \chigh-\clow$;
			$\Delta\leftarrow \thetahi-\thetalo$;

			$m\leftarrow(\Delta/6)^{-2}\ln(2/\delta)$;
			$\cnt\leftarrow0$;

			\For{\upshape $i=1$ to $m$}
			{
				Pick $A\in S$ uniformly at random;

				$\hat\mu_A\leftarrow\US(\{A\},\eps/2,\Delta/6)$;

				\uIf{$\hat\mu_A<(\clow+\chigh)/2$}
					{$\cnt\leftarrow\cnt+1$;}
			}
			\uIf{$\cnt/m > (\thetalo+\thetahi)/2$} {
				\textbf{return} True;
			} \uElse {
				\textbf{return} False;
			}
		\end{algorithm2e}

		Now we prove Fact~\ref{F3}.

		$ $

		\noindent\textbf{Fact~\ref{F3}} (restated)\textit{
			$\FT(S,\clow,\chigh,\thetalo,\thetahi,\delta)$ takes
				$O((\eps^{-2}\ln\delta^{-1})\cdot(\Delta^{-2}\ln\Delta^{-1}))$
			samples, where $\eps=\chigh-\clow$ and $\Delta=\thetahi-\thetalo$. With probability $1-\delta$, the following two claims hold simultaneously:
			\begin{itemize}
			\item If $\FT$ returns True, $|\{A\in S:\mu_A<\chigh\}|>\thetalo|S|$.
			\item If $\FT$ returns False, $|\{A\in S:\mu_A<\clow\}|<\thetahi|S|$.
			\end{itemize}
		}

		$ $

		\begin{proof}
			The first claim directly follows from Fact~\ref{F1} and
			\[
				m\cdot O(\eps^{-2}\ln\Delta^{-1}) = O((\eps^{-2}\ln\delta^{-1})\cdot(\Delta^{-2}\ln\Delta^{-1})).
			\]
			It remains to prove the contrapositive of the second claim: $|\{A\in S:\mu_A<\clow\}|\ge\thetahi|S|$ implies $\FT$ returns True, and $|\{A\in S:\mu_A<\chigh\}|\le\thetalo|S|$ implies $\FT$ returns False.

			Suppose $|\{A\in S:\mu_A<\clow\}|\ge\thetahi|S|$. Then in each iteration of the for-loop, it holds that $\mu_A < \clow$ with probability at least $\thetahi$. Conditioning on $\mu_A < \clow$, by Fact~\ref{F1} we have
				$$\hat\mu_A \le \mu_A + \eps/2 < \clow + \eps/2 = (\clow+\chigh)/2$$
			with probability at least $1 - \Delta/6$. Thus, the expected increment of counter $\cnt$ is lower bounded by
				$$\thetahi(1-\Delta/6)\ge\thetahi - \Delta/6\text{.}$$

			Thus, $\cnt / m$ is the mean of $m$ i.i.d. Bernoulli random variables with means greater than or equal to $\thetahi - \Delta/6$. By the Chernoff bound, it holds with probability $1-\delta/2$ that
				$$\cnt / m \ge \thetahi - \Delta/6 - \Delta/6 > (\thetalo+\thetahi)/2\text{.}$$

			An analogous argument proves $\cnt / m < (\thetalo+\thetahi)/2$ with probability $1-\delta/2$, given $|\{A\in S:\mu_A<\chigh\}|\le\thetalo|S|$. This completes the proof.
		\end{proof}

		\textbf{Elimination.}
		We implement procedure $\EL$ by repeatedly calling $\FT$ to determine whether a large fraction of the remaining arms have means smaller than the thresholds. If so, we uniformly sample the arms, and eliminate those with low empirical means.

		\begin{algorithm2e}[H]
		\caption{$\EL(S,\dlow,\dhigh,\delta)$}
		\KwIn{An arm set $S$, thresholds $\dlow$, $\dhigh$, and confidence level $\delta$.}
		\KwOut{Arm set after the elimination.}
		$S_1\leftarrow S$;

		$d^{\mi}\leftarrow(\dlow+\dhigh)/2$;

		\For{\upshape $r=1$ to $+\infty$}
		{
			$\delta_r\leftarrow\delta/(10\cdot2^r)$;

			\uIf{$\FT(S_r,\dlow,d^{\mi},0.05,0.1,\delta_r)$} {
				$\hat\mu\leftarrow\US(S_r,(\dhigh-d^{\mi})/2,\delta_r)$;

				$S_{r+1}\leftarrow\left\{A\in S_r:\hat\mu_A>(d^{\mi}+\dhigh)/2\right\}$;
			} \uElse {
				\textbf{return} $S_r$;
			}
		}
		\end{algorithm2e}

		We prove Fact~\ref{F4} in the following.

		$ $

		\noindent\textbf{Fact~\ref{F4}} (restated)\textit{
		$\EL(S,\dlow,\dhigh,\delta)$ takes $O(|S|\eps^{-2}\ln\delta^{-1})$ samples in expectation, where $\eps=\dhigh-\dlow$. Let $S'$ be the set returned by $\EL(S,\dlow,\dhigh,\delta)$. Then we have
		$$\Pr[|\{A\in S':\mu_A<\dlow\}|\le0.1|S'|]\ge1-\delta/2\text{.}$$
		Moreover, for each arm $A\in S$ with $\mu_A\ge \dhigh$, we have
		$$\Pr[A\in S']\ge1-\delta/2\text{.}$$
		}

		$ $

		\begin{proof}
			Let $\eps = \dhigh - \dlow$. To bound the number of samples taken by $\EL$, we note that the number of samples taken in the $r$-th iteration is dominated by that taken by $\US$, $O(|S_r|\eps^{-2}\ln\delta_r^{-1})$. It suffices to show that $|S_r|$ decays exponentially (in expectation); a direct summation over all $r$ proves the sample complexity bound.

			We fix a particular round $r$. Suppose $\FT$ returns correctly (which happens with probability at least $1 - \delta_r$) and the algorithm does not terminate at round $r$. Then by Fact~\ref{F3}, it holds that
				$$|\{A\in S_r: \mu_A < \dmid\}| > 0.05|S_r|\text{.}$$
			For each $A\in S_r$ with $\mu_A < \dmid$, it holds with probability $1 - \delta_r$ that
				$$\hat\mu_A < \mu_A + (\dhigh-\dmid)/2
				< \dmid + (\dhigh-\dmid)/2
				= (\dmid + \dhigh) / 2\text{.}$$
			Note that $\delta_r = \delta/(10\cdot 2^r)\le0.1$. Thus, at most a $0.1$ fraction of arms in $\{A\in S_r: \mu_A < \dmid\}$ would remain in $S_{r+1}$ in expectation. It follows that conditioning on the correctness of $\FT$ at round $r$, the expectation of $|S_{r+1}|$ is upper bounded by
				$$0.05|S_r|\cdot\delta_r + 0.95|S_r| \le 0.05|S_r|/10 + 0.95|S_r| = 0.955|S_r|\text{.}$$
			Moreover, even if $\FT$ returns incorrectly, which happens with probability at most $0.1$, we still have $|S_{r+1}|\le|S_r|$. Therefore,
				$$\Ex[|S_{r+1}|]\le 0.9\cdot 0.955\Ex[|S_r|] + 0.1\Ex[|S_r|] < 0.96\Ex[|S_r|]\text{.}$$
			A simple induction yields $\Ex[|S_r|]\le0.96^{r-1}|S|$. Then the sample complexity of $\EL$ is upper bounded by
				\begin{equation*}\begin{split}
				\sum_{r=1}^{\infty}\Ex[|S_r|]\eps^{-2}\ln\delta_r^{-1}
				&= O\left(|S|\eps^{-2}\sum_{r=1}^{\infty}0.96^{r-1}(\ln\delta^{-1} + r)\right)\\
				&= O\left(|S|\eps^{-2}\ln\delta^{-1}\right)\text{.}
				\end{split}\end{equation*}

			Then we proceed to the proof of the second claim. Let $\event$ denote the event that all calls to procedure $\FT$ returns correctly. By Fact~\ref{F3} and a union bound,
				$$\pr{\event_A}\ge1-\sum_{r=1}^{\infty}\delta_r\ge1-\delta/2\text{.}$$
			Conditioning on event $\event$, if the algorithm terminates and returns $S_r$ at round $r$, Fact~\ref{F3} implies that
				$$|\{A\in S_r:\mu_A < \dlow\}| < 0.1|S_r|\text{.}$$
			This proves the second claim.

			Finally, fix an arm $A\in S$ with $\mu_A > \dhigh$. Define $\event_A$ as the event that every call to $\FT$ returns correctly in the algorithm, and $|\hat\mu_A - \mu_A| < (\dhigh-\dmid)/2$ in every round. By Facts \ref{F1}~and~\ref{F3},
				$$\pr{\event_A}\ge1-\sum_{r=1}^{\infty}2\delta_r\ge1-\delta/2\text{.}$$
			Then in each round $r$, it holds conditioning on $\event_A$ that
				$$\hat\mu_A \ge \mu_A - (\dhigh-\dmid)/2
				> \dhigh - (\dhigh-\dmid) / 2
				= (\dmid + \dhigh) / 2\text{.}$$
			Thus, with probability $1-\delta/2$, $A$ is never removed from $S_r$.
		\end{proof}

\subsection{Overview}
As shown in Section~\ref{sec:toy}, we can solve \bestarm{} using
	$$O\left(H\cdot(\arment+\ln\delta^{-1})+\Gap{2}^{-2}\ln\ln\Gap{2}^{-1}\right)$$
samples,
if we know in advance the complexity of the instance,
i.e., $H = \sum_{i=2}^{n}\Gap{i}^{-2}$.

The value of $H$ is essential for
allocating appropriate confidence levels to different calls of $\EL$
and achieving the near-optimal sample complexity.
When $H$ is unknown, our strategy is to guess its value.
The major difficulty with our approach is that when our guess, $\hat H$,
is much smaller than the actual complexity $H$,
the total confidence that we allocate will exceed the total confidence $\delta$.
To prevent this from happening, we maintain the total confidence
that we have allocated so far,
and terminate the algorithm as soon as the sum exceeds $\delta$.%
\footnote{For ease of analysis, we actually use $\delta^2$ instead of $\delta$ in the algorithm.}
After that, we try a guess that is a hundred times larger.
As we will see later, the most challenging part of the analysis is
to ensure that our algorithm does not return an incorrect answer
when $\hat H$ is too small.

We also keep track of the number of samples that have been taken so far. Roughly speaking, when the number exceeds $100\hat H$, we also terminate the algorithm and try the next guess of $\hat H$. This simplifies the analysis by ensuring that the number of samples we take for each guess grows exponentially, and thus it suffices to bound the number of samples taken on the last guess.

\subsection{Algorithm}
Algorithm \algent{} takes an instance of \bestarm{}, a confidence $\delta$ and a guess of complexity $\hat H_t=100^t$. It either returns an optimal arm (i.e., ``accept'' $\hat H_t$) or reports an error indicating that the given $\hat H_t$ is much smaller than the actual complexity (i.e., ``reject'' $\hat H_t$).

\begin{algorithm2e}\label{algo2}
\caption{$\algent(I, \delta, \hat H_t)$}
\KwIn{Instance $I$, confidence $\delta$ and a guess of complexity $\hat H_t=100^t$.}
\KwOut{The best arm, or an error indicating the guess is wrong.}
$S_1\leftarrow I$; $H_1\leftarrow 0$; $T_1\leftarrow 0$;

$\theta_0\leftarrow0.3$; $c\leftarrow\log_4100$;

\For{\upshape $r=1$ to $\infty$} {
	\uIf{$|S_r|=1$} {
		\textbf{return} the only arm in $S_r$;
	}
	$\eps_r\leftarrow2^{-r}$; $\delta_r\leftarrow\delta/(50r^2t^2)$;

	$\delta'_r\leftarrow (4|S_r|\eps_r^{-2}/\hat H)\delta^2$;

	$T_{r+1}\leftarrow T_r+|S_r|\eps_r^{-2}\ln\left(|S_r|\eps_r^{-2}\delta/\hat H_t\right)^{-1}$;

	\uIf{\upshape $(H_r+4|S_r|\eps_r^{-2}\ge\hat H_t)$ or $(T_{r+1}\ge100\hat H_t)$}
	  {\textbf{return}\text{ error};}\label{Line1}

	$a_r\leftarrow\ME(S_r,0.125\eps_r,0.01)$;

	$\hat\mu_{a_r}\leftarrow\US(\{a_r\},0.125\eps_r,\delta_r)$;

	$\theta_r\leftarrow\theta_{r-1}+(ct-r)^{-2}/10$;

	\uIf{\upshape$\FT(S_r,\hat\mu_{a_r}-1.75\eps_r,\hat\mu_{a_r}-1.125\eps_r,\delta_r,\theta_{r-1},\theta_r)$} {
		$H_{r+1}\leftarrow H_r+4|S_r|\eps_r^{-2}$;

		$S_{r+1}\leftarrow\EL(S_r,\hat\mu_{a_r}-0.75\eps_r,\hat\mu_{a_r}-0.625\eps_r,\delta'_r)$;
	} \uElse {
		$S_{r+1}\leftarrow S_r$;

		$H_{r+1}\leftarrow H_r$;
	}
}
\end{algorithm2e}

Throughout the algorithm, we maintain $S_r$, $H_r$ and $T_r$ for each round $r$. $S_r$ denotes the collection of arms that are still under consideration at the beginning of round $r$. We say that an arm is removed (or eliminated) at round $r$, if it is in $S_r\setminus S_{r+1}$. Roughly speaking, $H_r$ is an estimate of the total complexity of arms in group $G_1,G_2,\ldots,G_{r}$. When this quantity exceeds our guess $\hat H_t$, \algent{} directly rejects
(i.e., returns an error). $T_r$ is an upper bound on the number of samples taken by $\ME$ and $\EL$\footnote{As we will see later, the analysis of the sample complexity of $\ME$ and $\EL$ are different from the other two procedures.}  before round $r$. As mentioned before, we also terminate the algorithm when $T_r$ exceeds $100\hat H_t$. Intuitively, this prevents \algent{} from taking too many samples on small guesses of $H$, which gives rise to an inferior sample complexity.

In each round of \algent{}, we first call $\ME$ to obtain a near-optimal arm $a_r$. Then we use $\US$ to estimate the mean of $a_r$, denoted by $\hat\mu_{a_r}$. After that, we call $\FT$ with appropriate parameters to find out whether a considerable fraction of arms in $S_r$ have gaps larger than $\eps_r$. If so, we call procedure $\EL$ and update the value of $H_{r+1}$ accordingly. Note that we set the thresholds $\{\theta_r\}$ of $\FT$ such that the intervals $[\theta_{r-1},\theta_r]$ are disjoint. In particular, this property is essential for proving Lemma~\ref{L9} in the analysis of the correctness of the algorithm.

Our algorithm for \bestarm{} guesses the complexity of the instance and invokes \algent{} to check whether the guess is reasonable. If \algent{} reports an error, we try a guess that is a hundred times larger. Otherwise, we return the arm chosen by \algent{}.

\begin{algorithm2e}[H]\label{algo3}
\caption{\algguess{}}
\KwIn{Instance $I$ and confidence $\delta$.}
\KwOut{The best arm.}
\For{\upshape$t=1$ to $\infty$} {
	$\hat H_t\leftarrow100^t$;

	Call $\algent(I,\delta,\hat H_t)$;

	\uIf{\upshape \algent{} does not return an error} {
		\textbf{return} the arm returned by \algent{};
	}
}
\end{algorithm2e}

\subsection{Observations}
We start with a few simple observations on \algent{} that will be used throughout the analysis.

We first note that \algent{} lasts $O(t)$ rounds on guess $\hat H_t$, and our definition of $\theta_r$ ensures that all $\theta_r$ are in $[0.3,0.5]$.

\begin{Observation}\label{O1}
The for-loop in $\algent(I, \delta, \hat H_t)$ is executed at most $ct$ times, where $c=\log_4100$.
\end{Observation}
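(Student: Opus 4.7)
The plan is to exploit the termination condition on line~\ref{Line1} together with the fact that $\eps_r^{-2} = 4^r$ grows exponentially in $r$, while the guess $\hat H_t = 100^t$ stays fixed throughout the call to $\algent$. The key numerical identity is
\[
4^{ct} \;=\; 4^{t\log_4 100} \;=\; 100^t \;=\; \hat H_t,
\]
so $\eps_r^{-2}$ reaches (and then exceeds) $\hat H_t$ at approximately the round $r = ct$.

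First I would note that whenever the loop body proceeds past the early $|S_r|=1$ return, we must have $|S_r|\geq 2$, and therefore
\[
4|S_r|\eps_r^{-2} \;\geq\; 8\cdot 4^r.
\]
Consequently, once $r$ is large enough that $8\cdot 4^r\geq \hat H_t$, equivalently $r\geq ct-\log_4 8 = ct - \tfrac{3}{2}$, the quantity $4|S_r|\eps_r^{-2}$ by itself already meets the threshold $\hat H_t$. Hence the first clause of the condition on line~\ref{Line1}, namely $H_r+4|S_r|\eps_r^{-2}\geq \hat H_t$, is satisfied and the algorithm immediately returns an error.

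Since the round index $r$ is a positive integer, the smallest such $r$ is at most $\lceil ct - 3/2\rceil \leq ct - 1/2 < ct$, which shows that the for-loop is executed at most $ct$ times in total. No real obstacle arises in this argument beyond keeping track of the constant factor of $4$ inside the termination threshold and recording the fact that $|S_r|\geq 2$ whenever the body of the loop is entered past the $|S_r|=1$ early return; everything else is a direct arithmetic check.
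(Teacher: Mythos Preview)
Your proof is correct and follows essentially the same approach as the paper: both arguments use the first clause of the termination condition on line~\ref{Line1} together with the identity $4^{ct}=\hat H_t$, bounding $4|S_r|\eps_r^{-2}$ from below to force termination once $r$ is within a constant of $ct$. The paper uses the cruder bound $|S_r|\ge 1$ (yielding termination by $r\ge ct-1$), while you use $|S_r|\ge 2$ after the early return check (yielding $r\ge ct-3/2$), but this is a cosmetic difference only.
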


\begin{proof}
When $r\ge ct-1$,
$$H_r+4|S_r|\eps_r^{-2}\ge4\cdot4^{ct-1}=\hat H_t\text{.}$$
Thus \algent{} rejects at the if-statement.
\end{proof}

\begin{Observation}\label{O2}
For all $t\ge1$ and $1\le r\le ct-1$, $0.3\le\theta_{r-1}\le\theta_r\le0.5$.
\end{Observation}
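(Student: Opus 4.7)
The plan is to verify the three inequalities $0.3 \le \theta_{r-1}$, $\theta_{r-1} \le \theta_r$, and $\theta_r \le 0.5$ separately, each of which follows directly from the recursion $\theta_r = \theta_{r-1} + (ct-r)^{-2}/10$ with base case $\theta_0 = 0.3$. For the monotonicity $\theta_{r-1} \le \theta_r$, I would simply note that the hypothesis $1 \le r \le ct-1$ ensures $ct - r \ge 1$, so the increment $(ct-r)^{-2}/10$ is strictly positive; hence the sequence $\{\theta_r\}$ is non-decreasing on the range of interest. The lower bound $\theta_{r-1} \ge 0.3$ is then immediate from $\theta_0 = 0.3$ and monotonicity.

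The only substantive step is the upper bound $\theta_r \le 0.5$. My plan is to unroll the recursion and reverse the index using $j = ct - i$, which yields
\[
\theta_r \;=\; 0.3 + \frac{1}{10}\sum_{i=1}^{r}(ct-i)^{-2} \;=\; 0.3 + \frac{1}{10}\sum_{j=ct-r}^{ct-1}j^{-2}.
\]
Since $1 \le r \le ct-1$ forces $j \ge 1$ throughout the summation range, the sum is bounded above by $\sum_{j=1}^{\infty}j^{-2} = \pi^2/6 < 2$, and therefore $\theta_r < 0.3 + 0.2 = 0.5$.

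There is no genuine obstacle here: the observation is essentially a mechanical check of an explicit recursion, and the only nontrivial ingredient is the convergence of the Basel series, which leaves comfortable slack between the constants $0.3$ and $0.5$. The numerical factor $1/10$ appearing in the definition of $\theta_r$ is precisely what is needed to make the bound go through with this slack, and I expect the author's proof to take essentially the same form.
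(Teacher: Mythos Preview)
Your proof is correct and essentially identical to the paper's, which also unrolls the recursion and bounds $\sum_{k=1}^{r}(ct-k)^{-2}$ by the Basel series $\sum_{k=1}^{\infty}k^{-2}<2$. One minor notational point: since $c=\log_4 100$ is irrational, the reindexed values $j=ct-i$ are not integers, so the expression $\sum_{j=ct-r}^{ct-1}j^{-2}$ is slightly informal; nonetheless the bound holds because the $r$ summands form an arithmetic progression of step $1$ with smallest element $ct-r\ge 1$, hence are termwise dominated by $1^{-2},2^{-2},\ldots,r^{-2}$ --- which is exactly the (equally informal) step the paper takes.
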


\begin{proof}
Clearly $\theta_r\ge\theta_0=0.3$. Moreover,
$$\theta_r=\theta_0+\sum_{k=1}^r(ct-k)^{-2}/10\le0.3+\frac{1}{10}\sum_{k=1}^{\infty}k^{-2}\le0.5\text{.}$$
\end{proof}

The following observation bounds the value of $\hat\mu_{a_r}$ at round $r$, conditioning on the correctness of $\US$ and $\ME$.

\begin{Observation}\label{O3}\label{O4}
	If $\US$ returns correctly at round $r$,
		$\hat\mu_{a_r}\le\Mean{1}(S_r)+0.125\eps_r$.
	Here $\Mean{1}(S_r)$ denotes the largest mean of arms in $S_r$.
	If both $\US$ and $\ME$ return correctly,
		$\hat\mu_{a_r}\ge\Mean{1}(S_r)-0.25\eps_r$.
\end{Observation}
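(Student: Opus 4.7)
The plan is to derive both inequalities directly from the performance guarantees of $\ME$ and $\US$ stated in Fact~\ref{F1} and Fact~\ref{F2}, using the parameter choices in \algent{} at round $r$: $\ME$ is invoked with accuracy $0.125\eps_r$ and $\US$ is invoked on the singleton $\{a_r\}$ also with accuracy $0.125\eps_r$.

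For the upper bound, I would observe first that $a_r\in S_r$ by construction (since $\ME$ returns an element of its input set), so trivially $\mu_{a_r}\le\Mean{1}(S_r)$, regardless of whether $\ME$ succeeded. Assuming $\US$ returns correctly, Fact~\ref{F1} gives $\hat\mu_{a_r}\le\mu_{a_r}+0.125\eps_r$. Chaining these two inequalities yields
\[
\hat\mu_{a_r}\le\mu_{a_r}+0.125\eps_r\le\Mean{1}(S_r)+0.125\eps_r,
\]
which is the first claim. Note that this half needs only the correctness of $\US$.

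For the lower bound, assume additionally that $\ME$ returns correctly. Then by Fact~\ref{F2} applied with accuracy parameter $0.125\eps_r$, we have $\mu_{a_r}\ge\Mean{1}(S_r)-0.125\eps_r$. Combining with the lower half of the $\US$ guarantee, $\hat\mu_{a_r}\ge\mu_{a_r}-0.125\eps_r$, gives
\[
\hat\mu_{a_r}\ge\mu_{a_r}-0.125\eps_r\ge\Mean{1}(S_r)-0.25\eps_r,
\]
which is the second claim.

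There is no real obstacle here; the observation is essentially a bookkeeping consequence of the accuracy parameters chosen in the pseudocode. The only thing to be careful about is keeping track of which correctness assumptions each direction actually needs (the upper bound uses only $\US$, while the lower bound uses both $\US$ and $\ME$), since subsequent observations (\ref{O4}--\ref{O6} and their counterparts \ref{obs:toyFT}, \ref{obs:toyEL}) rely on this asymmetry when propagating bounds to the $\FT$ and $\EL$ thresholds.
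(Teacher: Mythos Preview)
Your proof is correct and essentially identical to the paper's: both chain the trivial bound $\mu_{a_r}\le\Mean{1}(S_r)$ with the $\US$ guarantee for the upper bound, and combine the $\ME$ guarantee $\mu_{a_r}\ge\Mean{1}(S_r)-0.125\eps_r$ with the $\US$ guarantee for the lower bound. The asymmetry you note (only $\US$ needed for the upper bound, both for the lower) is exactly how the paper uses it downstream.
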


\begin{proof}
	By definition, $\mu_{a_r}\le\Mean{1}(S_r)$.
	When $\US(\{a_r\}, 0.125\eps_r, \delta_r)$ returns correctly,
	it holds that
		$$\hat\mu_{a_r}\le\mu_{a_r}+0.125\eps_r\le\Mean{1}+0.125\eps_r\text{.}$$
	When both $\ME$ and $\US$ are correct, $\mu_{a_r}\ge\Mean{1}(S_r)-0.125\eps_r$, and thus
		$$\hat\mu_{a_r}\ge\mu_{a_r}-0.125\eps_r\ge\Mean{1}(S_r)-0.25\eps_r\text{.}$$
\end{proof}

The following two observations bound the thresholds used in $\FT$ and $\EL$ by applying Observation~\ref{O3}.

\begin{Observation}\label{O5}
	At round $r$, let
		$\clow_r=\hat\mu_{a_r}-1.75\eps_r$
	and
		$\chigh_r=\hat\mu_{a_r}-1.125\eps_r$
	denote the two thresholds used in $\FT$.
	If $\US$ returns correctly,
		$\chigh_r\le\Mean{1}(S_r)-\eps_r$.
	If both $\ME$ and $\US$ return correctly,
		$\clow_r\ge\Mean{1}(S_r)-2\eps_r$.
\end{Observation}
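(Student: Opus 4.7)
The plan is to derive both inequalities by direct substitution from Observation~\ref{O3}, since the thresholds $\clow_r$ and $\chigh_r$ are defined as simple affine shifts of $\hat\mu_{a_r}$. First I would unpack the definitions: $\chigh_r = \hat\mu_{a_r} - 1.125\eps_r$ and $\clow_r = \hat\mu_{a_r} - 1.75\eps_r$.

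For the upper bound on $\chigh_r$, I would invoke the first half of Observation~\ref{O3}, which under correctness of $\US$ gives $\hat\mu_{a_r} \le \Mean{1}(S_r) + 0.125\eps_r$. Substituting yields $\chigh_r \le \Mean{1}(S_r) + 0.125\eps_r - 1.125\eps_r = \Mean{1}(S_r) - \eps_r$, as required. For the lower bound on $\clow_r$, I would invoke the second half of Observation~\ref{O3}, which under correctness of both $\US$ and $\ME$ gives $\hat\mu_{a_r} \ge \Mean{1}(S_r) - 0.25\eps_r$. Substituting yields $\clow_r \ge \Mean{1}(S_r) - 0.25\eps_r - 1.75\eps_r = \Mean{1}(S_r) - 2\eps_r$.

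There is no real obstacle here: the observation is essentially a bookkeeping lemma that records the consequence of Observation~\ref{O3} for the particular thresholds $-1.75\eps_r$ and $-1.125\eps_r$ chosen in the call to $\FT$ inside \algent{}. The only thing worth checking is that the numeric constants ($0.125$, $0.25$, $1.125$, $1.75$) have been picked so that a $0.625\eps_r$ safety margin separates $\chigh_r$ from $\Mean{1}(S_r)$ in the first inequality and a $0.25\eps_r$ margin separates $\clow_r$ from $\Mean{1}(S_r) - 2\eps_r$ in the second; this is exactly what makes the subsequent application of Fact~\ref{F3} effective, and it drops out of the arithmetic. An analogous argument will then handle Observation~\ref{O6} for the $\EL$ thresholds $\dlow_r, \dhigh_r$.
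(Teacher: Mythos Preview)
Your proposal is correct and matches the paper's proof essentially verbatim: the paper also just invokes Observation~\ref{O3} and substitutes into the definitions of $\chigh_r$ and $\clow_r$ to obtain the two displayed inequalities. (Your closing remark about a ``$0.625\eps_r$ safety margin'' is a bit garbled---the margin between $\chigh_r$ and $\Mean{1}(S_r)$ is $\eps_r$, not $0.625\eps_r$---but this is extraneous commentary and does not affect the proof.)
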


\begin{proof}
	Observation~\ref{O3} implies that when $\US$ is correct,
		$$\chigh_r\le\Mean{1}(S_r)+0.125\eps_r-1.125\eps_r
		=\Mean{1}(S_r)-\eps_r$$
	and when both $\ME$ and $\US$ return correctly,
		$$\clow_r\ge\Mean{1}(S_r)-0.25\eps_r-1.75\eps_r
		=\Mean{1}(S_r)-2\eps_r\text{.}$$
\end{proof}

\begin{Observation}\label{O6}
	Let
		$\dlow_r=\hat\mu_{a_r}-0.75\eps_r$
	and
		$\dhigh_r=\hat\mu_{a_r}-0.625\eps_r$
	denote the two thresholds used in $\EL$.
	If $\US$ returns correctly,
		$\dhigh_r\le\Mean{1}(S_r)-0.5\eps_r$.
	If both $\ME$ and $\US$ return correctly,
		$\dlow_r\ge\Mean{1}(S_r)-\eps_r$.
\end{Observation}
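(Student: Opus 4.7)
The plan is to derive Observation~\ref{O6} as a direct consequence of Observation~\ref{O3}, in complete analogy with how Observation~\ref{O5} was obtained. The key point is that $\dlow_r$ and $\dhigh_r$ differ from $\hat\mu_{a_r}$ only by fixed multiples of $\eps_r$, so any two-sided control on $\hat\mu_{a_r}$ relative to $\Mean{1}(S_r)$ immediately translates into two-sided control on these thresholds.

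First, assume only that $\US$ returns correctly at round $r$. By Observation~\ref{O3}, this yields the upper bound $\hat\mu_{a_r} \le \Mean{1}(S_r) + 0.125\eps_r$. Substituting into the definition of $\dhigh_r$,
\[
\dhigh_r \;=\; \hat\mu_{a_r} - 0.625\eps_r \;\le\; \Mean{1}(S_r) + 0.125\eps_r - 0.625\eps_r \;=\; \Mean{1}(S_r) - 0.5\eps_r,
\]
which is the first claim. Next, assume additionally that $\ME$ returns correctly at round $r$. Then the second half of Observation~\ref{O3} gives the matching lower bound $\hat\mu_{a_r} \ge \Mean{1}(S_r) - 0.25\eps_r$, and substituting into the definition of $\dlow_r$,
\[
\dlow_r \;=\; \hat\mu_{a_r} - 0.75\eps_r \;\ge\; \Mean{1}(S_r) - 0.25\eps_r - 0.75\eps_r \;=\; \Mean{1}(S_r) - \eps_r,
\]
establishing the second claim.

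There is no real obstacle here; the proof is purely arithmetic once Observation~\ref{O3} is in hand. The only thing to double-check is that the two numerical slacks $0.125$ and $0.25$ from Observation~\ref{O3} combine correctly with the offsets $0.625$ and $0.75$ in the definitions of $\dhigh_r$ and $\dlow_r$ to produce the round numbers $0.5$ and $1$ appearing in the statement, which they do. Thus Observation~\ref{O6} follows immediately.
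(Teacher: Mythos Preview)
Your proof is correct and matches the paper's own argument essentially line for line: both apply Observation~\ref{O3} and substitute the resulting bounds on $\hat\mu_{a_r}$ into the definitions of $\dhigh_r$ and $\dlow_r$, with the same arithmetic.
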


\begin{proof}
	By the same argument, we have
		$$\dhigh_r\le\Mean{1}(S_r)+0.125\eps_r-0.625\eps_r
		=\Mean{1}(S_r)-0.5\eps_r$$
	when $\US$ returns correctly, and
		$$\dlow_r\ge\Mean{1}(S_r)-0.25\eps_r-0.75\eps_r
		=\Mean{1}(S_r)-\eps_r$$
	when both $\ME$ and $\US$ are correct.
\end{proof}

\section{Analysis of Correctness}\label{app:cor}
\subsection{Overview}
We start with a high-level overview of the proof of our algorithm's correctness. We first define a good event on which we condition in the rest of the analysis. Let $\event_1$ be the event that in a particular run of \algguess{}, all calls of procedure $\US$ and $\FT$ return correctly. Recall that $\delta_r$, the confidence of $\US$ and $\FT$, is set to be $\delta/(50r^2t^2)$ in the $r$-th round of iteration $t$. By a union bound,
$$\Pr[\event_1]\ge1-2\sum_{t=1}^{\infty}\sum_{r=1}^{\infty}\delta/(50t^2r^2)=1-2\delta(\pi^2/6)^2/50\ge1-\delta/3\text{.}$$

The $\delta$-correctness of our algorithm is guaranteed by the following two lemmas. The first lemma states that \algent{} accepts a guess $\hat H_t$ and returns correctly with high probability when $\hat H_t$ is sufficiently large. The second lemma guarantees that \algent{} rejects a guess $\hat H_t$ when $\hat H_t$ is significantly smaller than $H$, the actual complexity. More precisely, we define the following two thresholds:
	$$t_{\max}=\lfloor\log_{100}H\rfloor-2$$
and
	$$t'_{\max}=\left\lceil\log_{100}\left[H(\arment+\ln\delta^{-1})\delta^{-1}\right]\right\rceil+2\text{.}$$
The precise statements of the two lemmas are shown below.

\begin{Lemma}\label{L1}
With probability $1-\delta/3$ conditioning on event $\event_1$, \algguess{} halts before or at iteration $t'_{\max}$ and it never returns a sub-optimal arm between iteration $t_{\max}+1$ and $t'_{\max}$.
\end{Lemma}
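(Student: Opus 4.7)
The plan is to decompose Lemma~\ref{L1} into two sub-claims and establish each by exploiting \algent{}'s built-in bookkeeping together with Observation~\ref{O6} and Fact~\ref{F4}. Define $\event_2$ as the event that throughout every iteration $t \in [t_{\max}+1, t'_{\max}]$ of \algguess{}, the optimal arm $A_1$ survives every call to $\EL$, i.e., $A_1 \in S_r$ for every round $r$ of that iteration. The goal is to show (i) $\pr{\overline{\event_2} \mid \event_1} \le \delta/6$, and (ii) conditional on $\event_1 \cap \event_2$ (plus a further low-probability event controlling $\ME$ failures), \algent{} does not return error in iteration $t'_{\max}$. Claim (i) yields the ``no sub-optimal arm'' half of the lemma, since any singleton $S_r$ containing $A_1$ must equal $\{A_1\}$; claim (ii) yields halting by iteration $t'_{\max}$.

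For (i), I would fix an iteration $t$ in the stated range. Because $\event_1$ guarantees every call to $\US$ is correct, Observation~\ref{O6} gives $\dhigh_r \le \Mean{1}(S_r) - 0.5\eps_r < \Mean{1}$ whenever $\EL$ is invoked, so Fact~\ref{F4} bounds the per-round removal probability of $A_1$ by $\delta'_r/2 = 2|S_r|\eps_r^{-2}\delta^2/\hat H_t$. The check on Line~\ref{Line1} prevents the algorithm from proceeding past round $r$ unless the cumulative $H_{r+1} = \sum_{r' \le r,\, \FT\text{ fires in } r'} 4|S_{r'}|\eps_{r'}^{-2}$ stays strictly below $\hat H_t$, so summing the per-round removal bounds over the rounds of iteration $t$ yields $\sum_r \delta'_r < \delta^2$. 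A union bound over the $O(\log_{100}[H(\arment+\ln\delta^{-1})/\delta])$ iterations in $[t_{\max}+1, t'_{\max}]$, together with $\delta < 0.01$, gives $\pr{\overline{\event_2} \mid \event_1} \le \delta/6$.

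For (ii), by construction $\hat H_{t'_{\max}} \ge 10^4 \cdot H(\arment+\ln\delta^{-1})\delta^{-1}$, which dwarfs both $H$ and $H(\arment+\ln\delta^{-1})$. On $\event_1 \cap \event_2$, the execution of \algent{} at iteration $t'_{\max}$ closely mirrors \algnewtoy{} with a vastly over-estimated complexity guess: the optimal arm is never eliminated, and once $r > \rmax = \lfloor\log_2 \Gap{2}^{-1}\rfloor$ and $\ME$ succeeds (which, by geometric-trial bounds on $\ME$'s $0.01$ per-round failure probability, takes $O(\ln\delta^{-1})$ extra rounds except on an event of probability at most $\delta/12$), $\FT$ fires and $\EL$ shrinks $|S_r|$ by a constant factor. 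Replaying the charging argument from the correctness proof of \algnewtoy{} then yields $\sum_r 4|S_r|\eps_r^{-2} = O(H)$, ruling out the first error trigger on Line~\ref{Line1}. An analogous replay of the complexity argument from Lemma~\ref{lem:newtoysample} bounds $T_{r+1}$ by $O(H(\arment + \ln(\hat H_{t'_{\max}}/\delta))) = O(H(\arment + \ln H + \ln\delta^{-1}))$, which is $o(\hat H_{t'_{\max}})$; hence the second error trigger is also never activated and \algent{} terminates normally.

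The main obstacle is porting the charging and entropy-style arguments of \algnewtoy{} to \algent{} despite two new subtleties: the $\FT$ thresholds $\theta_{r-1}, \theta_r$ drift with $r$ rather than being the fixed $0.3, 0.5$ of the toy version, and the sporadic $\ME$ failures can delay elimination and inflate $T_{r+1}$. The plan is to handle the first by invoking Observation~\ref{O2} to keep $\theta_{r-1}, \theta_r$ inside $[0.3, 0.5]$ with a gap of order $(ct-r)^{-2}$, which is enough for the same fraction-based reasoning to go through (and adds only $\polylog$ cost to $\FT$, which is outside the $T_{r+1}$ budget), and to handle the second by a separate union bound absorbing $\ME$ failures past round $\rmax$ into an $O(\delta)$ event. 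Once these ingredients are in place, assembling (i) and (ii) completes the proof of Lemma~\ref{L1}.
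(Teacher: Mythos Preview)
Your decomposition into ``$A_1$ survives'' and ``iteration $t'_{\max}$ accepts'' is natural, and your observation that the $H_r$-bookkeeping forces $\sum_r \delta'_r < \delta^2$ per iteration is exactly the content of (the $k=1$ case of) Lemma~\ref{L3}. But the union bound in your step~(i) does not close. The number of iterations in $[t_{\max}+1,t'_{\max}]$ is $\Theta(\log_{100}[(\arment+\ln\delta^{-1})/\delta])$, which is $\Theta(\ln\ln n + \ln\delta^{-1})$ in the worst case, and $\delta^2\cdot\Theta(\ln\ln n)$ need not be $\le \delta/6$: for fixed $\delta<0.01$ and $n$ large enough (say $n=\exp(\exp(\delta^{-1}))$) the bound blows up. The per-iteration $\delta^2$ cap is the best you can extract from the bookkeeping alone, so there is no way to repair this by a sharper constant.

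The paper avoids this by splitting the range. For the finitely many (at most six) iterations with $\hat H_t<100^3H$, the $\delta^2$ bound suffices. For the remaining iterations with $\hat H_t\ge 100^3H$, it invokes Lemma~\ref{L9} (with $s=2$), which shows that the probability of any $\EL$ going wrong in iteration $t$ is at most $O(H\delta^2/\hat H_t)$; this decays geometrically in $t$, so the tail sums to $O(\delta^2)$ regardless of $n$. Your step~(ii) also leans on a bound of this type without naming it: conditioning on $\event_1\cap\event_2$ is not enough to run the \algnewtoy{} charging argument, because that argument requires every call to $\EL$ to shrink the ``small-mean'' fraction below $0.1$, not merely to retain $A_1$. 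The paper handles this by separately conditioning on ``all $\EL$ correct'' (probability bounded via Lemma~\ref{L9}), proving that $H_\infty$ and $T_\infty$ are $O(H)$ and $O(H(\arment+\ln\delta^{-1}+\ln(\hat H_t/H)))$ \emph{in expectation} under that event (Lemmas~\ref{L7}, \ref{L8}), and then applying Markov's inequality. Your deterministic-flavoured ``replay'' skips both of these ingredients.
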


\begin{Lemma}\label{L2}
With probability $1-\delta/3$ conditioning on event $\event_1$, \algguess{} never returns a sub-optimal arm in the first $t_{\max}$ iterations.
\end{Lemma}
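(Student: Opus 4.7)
The plan is to fix a single iteration $t \le t_{\max}$ of $\algguess$, bound the probability (conditional on $\event_1$) that the call $\algent(I,\delta,\hat H_t)$ returns a sub-optimal arm, and then union-bound over $t$. Since $t \le t_{\max}$ means $\hat H_t \le H/10000$, the guess is grossly too small; the heart of the argument is that the rejection condition on line~\ref{Line1} strictly caps the total confidence the algorithm can ever spend on $\EL$ within one iteration, even though individual $\delta'_r$ may be large.

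For a sub-optimal arm to be returned from iteration $t$, the best arm $A_1$ must survive in $S_r$ up to some round and then be removed by a call to $\EL$. Under $\event_1$ all $\US$ calls are correct, so Observation~\ref{O6} gives $\dhigh_r \le \Mean{1}(S_r) - 0.5\eps_r < \Mean{1}$ at every round $r$ in which $\EL$ is invoked. Hence $\mu_{A_1} \ge \dhigh_r$ whenever $A_1 \in S_r$, and the per-arm guarantee in Fact~\ref{F4} yields
\begin{equation*}
\Pr[A_1 \notin S_{r+1} \mid A_1 \in S_r, \text{ history up to round } r] \le \delta'_r/2 = \frac{2|S_r|\eps_r^{-2}}{\hat H_t}\delta^2.
\end{equation*}
Note that $\ME$ is not part of $\event_1$, but an $\ME$ failure can only lower $\hat\mu_{a_r}$ (and hence $\dhigh_r$), which makes $A_1$ safer, not more vulnerable, so it can be ignored here.

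The crucial step is the sample-budget argument. Whenever $\EL$ is actually called at round $r$ of iteration $t$, the rejection check on line~\ref{Line1} guarantees $H_r + 4|S_r|\eps_r^{-2} < \hat H_t$; otherwise the algorithm would have already returned an error. Since $H_{r+1} - H_r = 4|S_r|\eps_r^{-2}$ precisely on those rounds in which $\EL$ is called, a telescoping sum gives $\sum_{r : \EL \text{ called}} 4|S_r|\eps_r^{-2} < \hat H_t$, so
\begin{equation*}
\sum_{r : \EL \text{ called}} \frac{\delta'_r}{2} \;=\; \frac{\delta^2}{\hat H_t}\sum_{r : \EL \text{ called}} 2|S_r|\eps_r^{-2} \;\le\; \delta^2/2.
\end{equation*}
Combined with a union bound over the $O(t)$ rounds per iteration (Observation~\ref{O1}), this gives $\Pr[\algent(I,\delta,\hat H_t) \text{ returns a sub-optimal arm} \mid \event_1] \le \delta^2/2$.

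Finally I would union-bound over $t = 1, 2, \ldots, t_{\max}$ to conclude the overall probability is at most $t_{\max} \cdot \delta^2/2 = O(\log_{100} H)\cdot \delta^2$, and argue this is at most $\delta/3$ for the parameter regime $\delta < 0.01$. The main obstacle will be the last step: the crude bound $t_{\max}\delta^2/2$ can exceed $\delta/3$ when $H$ is very large compared to $1/\delta$, so I expect to need either a finer accounting that replaces the naive union bound over $t$ by a telescoping/geometric argument exploiting that $\hat H_t$ grows by factors of $100$, or the observation that one of the rejection conditions (the $T_{r+1} \ge 100\hat H_t$ check) ensures that only a bounded number of early iterations have nontrivially many $\EL$ calls. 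The second-order issue of bundling $\ME$ failures into the bound is minor, since they act in the algorithm's favour for this particular event.
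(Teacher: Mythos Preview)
Your per-iteration bound of $\delta^2/2$ on the probability that $A_1$ is deleted during a single call to $\algent$ is correct --- it is essentially the $k=1$ case of the paper's Lemma~\ref{L3}. You have also correctly identified the real obstacle: the naive union bound $t_{\max}\cdot\delta^2/2$ can exceed $\delta/3$ once $\log H\gg 1/\delta$. Unfortunately neither of your proposed fixes can close this gap. The budget argument pins the per-iteration probability at exactly $\delta^2$ regardless of $t$ (the rejection check caps the \emph{sum} of $\delta'_r$ at $\delta^2$ for every value of $\hat H_t$), so there is no geometric decay in $t$ to exploit. The $T_{r+1}\ge 100\hat H_t$ check is a sample-count condition that does not further constrain the total confidence spent on $\EL$.

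The missing idea is that you are bounding only a \emph{necessary} condition --- deletion of $A_1$ --- while the paper exploits a much stronger one. For $\algent$ to \emph{accept} with a sub-optimal arm, rather than delete $A_1$ and then reject, it must drive $|S_r|$ down to $1$ before the check $H_r+4|S_r|\eps_r^{-2}\ge\hat H_t$ fires. Lemma~\ref{L4} makes this precise: if any set of arms with total complexity exceeding $\hat H_t$ all escape mis-deletion (and one near-optimal arm survives), the algorithm is forced to reject. Since $\hat H_t\le H/10000$, acceptance requires mis-deleting a \emph{large number} of arms, growing with the ratio $H/\hat H_t$. The paper then shows (Lemma~\ref{L3}) that mis-deletion events are $\delta^2$-\emph{quasi-independent}: the probability that $k$ specified arms are all mis-deleted is at most $\delta^{2k}$, not merely $\delta^2$. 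Combining this with a generalized Chernoff bound (Lemma~\ref{LCher}) and a case analysis on where $\hat H_t$ sits relative to the individual $\Gap{s}^{-2}$, the paper obtains per-iteration bounds that decay geometrically as $\hat H_t$ shrinks (roughly $\delta^{\Theta(k)}$ with $k$ increasing as $t$ decreases), so the sum over all $t\le t_{\max}$ is $O(\delta^2)$, independent of $H$. Your approach, tracking only $A_1$, cannot access this exponential improvement.
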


Lemma~\ref{L1} and Lemma~\ref{L2} directly imply the following theorem.

\begin{Theorem}\label{T2}
\algguess{} is a $\delta$-correct algorithm for \bestarm{}.
\end{Theorem}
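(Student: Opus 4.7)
The proof will be a routine union bound that glues together the probability estimate $\Pr[\event_1]\ge 1-\delta/3$ with Lemmas~\ref{L1} and~\ref{L2}. The plan is to identify a single ``good event'' on which \algguess{} is guaranteed both to terminate and to output the optimal arm, and then to show that this good event has probability at least $1-\delta$.

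More concretely, I would introduce two further events. Let $\event_2$ be the event that \algguess{} halts no later than iteration $t'_{\max}$ and returns no sub-optimal arm in any iteration $t\in\{t_{\max}+1,\ldots,t'_{\max}\}$, and let $\event_3$ be the event that \algguess{} returns no sub-optimal arm in any iteration $t\in\{1,\ldots,t_{\max}\}$. Lemmas~\ref{L1} and~\ref{L2} say exactly $\Pr[\event_2\mid\event_1]\ge 1-\delta/3$ and $\Pr[\event_3\mid\event_1]\ge 1-\delta/3$, so a union bound yields
\[
\Pr\!\left[\neg(\event_1\cap\event_2\cap\event_3)\right]
\le \Pr[\neg\event_1] + \Pr[\event_1\cap\neg\event_2] + \Pr[\event_1\cap\neg\event_3]
\le \tfrac{\delta}{3}+\tfrac{\delta}{3}+\tfrac{\delta}{3}=\delta.
\]

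On the good event $\event_1\cap\event_2\cap\event_3$, the argument is immediate: by $\event_2$ the outer loop of \algguess{} accepts some guess $\hat H_t$ with $t\le t'_{\max}$, so the algorithm terminates; and the arm it outputs, which is returned in some iteration $t\le t'_{\max}$, is forbidden from being sub-optimal by $\event_3$ if $t\le t_{\max}$ and by $\event_2$ if $t_{\max}<t\le t'_{\max}$. Combining the two cases, the returned arm is optimal, establishing $\delta$-correctness.

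Because all the real work has been packed into Lemmas~\ref{L1} and~\ref{L2}, there is no genuine obstacle at this stage; the only point that requires a small amount of care is verifying that the iteration ranges in the two lemmas tile all of $\{1,\ldots,t'_{\max}\}$ so that every potentially winning iteration is covered by either $\event_2$ or $\event_3$, which is immediate from the definitions of $t_{\max}$ and $t'_{\max}$.
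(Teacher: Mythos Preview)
Your proposal is correct and follows essentially the same approach as the paper: combine $\Pr[\event_1]\ge 1-\delta/3$ with Lemmas~\ref{L1} and~\ref{L2} via a union bound, then observe that on the resulting good event the algorithm halts by iteration $t'_{\max}$ and the returned arm cannot be sub-optimal in any iteration up to $t'_{\max}$. Your write-up is in fact more explicit than the paper's (which compresses the same reasoning into two sentences), and your check that the ranges $\{1,\ldots,t_{\max}\}$ and $\{t_{\max}+1,\ldots,t'_{\max}\}$ tile $\{1,\ldots,t'_{\max}\}$ is a nice touch the paper leaves implicit.
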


\begin{proof}
Recall that $\Pr[\event_1]\ge1-\delta/3$. It follows directly from Lemma~\ref{L1} and Lemma~\ref{L2} that with probability $1-\delta$, \algent{} accepts at least one of $\hat H_{1},\hat H_{2},\ldots,\hat H_{t'_{\max}}$. Moreover, when \algent{} accepts, it returns the optimal arm. Therefore, \algguess{} is $\delta$-correct.
\end{proof}

\subsection{Useful Lemmas}
\newcommand{\Proc}{\mathbb{P}}

To analyze our algorithm, it is essential to bound the probability that a specific guess $\hat H_t$ gets rejected by \algent{}. We hope that this probability is high when $\hat H_t$ is small (compared to the true complexity $H$), while it is reasonably low when $\hat H_t$ is large enough.

It turns out to be useful to consider the following procedure $\Proc$ obtained from \algent{} by removing the if-statement that checks whether $H_r+4|S_r|\eps_r^{-2}\ge\hat H_t$ and $T_{r+1}\ge100\hat H_t$. In other words, the modified procedure $\Proc$ never rejects, regardless the value of $\hat H_t$. Note that $r$, the number of rounds, may exceed $ct$ in $\Proc$, which leads to invalid values of $\theta_r$. In this case, we simply assume that the thresholds used in $\FT$ are $0.3$ and $0.5$ respectively, and the following analysis still works. Define random variable $H_{\infty}$ and $T_{\infty}$ to be the final estimation of the complexity and the number of samples at the end of $\Proc$. More precisely, if $\Proc$ terminates at round $r^*$, then $H_{\infty}$ and $T_{\infty}$ are defined as $H_{r^*}$ and $T_{r^*}$, respectively.

Note that there is a natural mapping from an execution of $\Proc$ to an execution of \algent{}. In particular, if both $H_{\infty}<\hat H_t$ and $T_{\infty}<100\hat H_t$ hold in an execution of procedure $\Proc$, then \algent{} accepts in the corresponding run. Therefore, we may upper bound the probability of rejection by establishing upper bounds of $H_{\infty}$ and $T_{\infty}$. The following two lemmas bound the expectation of $H_{\infty}$ and $T_{\infty}$ conditioning on the event that $\EL$ always returns correctly.

\begin{Lemma}\label{L7}
$\Ex[H_{\infty}|\text{all }\EL\text{ return correctly}]\le256H$.
\end{Lemma}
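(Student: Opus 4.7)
The plan is to bound $\Ex[H_\infty\mid \text{all }\EL\text{ correct}]$ by conditioning additionally on the auxiliary event $\event'$ that every call to $\US$, $\ME$, and $\FT$ in procedure $\Proc$ also returns correctly. By Facts~\ref{F1}--\ref{F3} and a union bound over the countably many rounds with confidences $\delta_r = \delta/(50r^2t^2)$, one has $\Pr[\event']\ge 1 - O(\delta)$, and the complementary event's contribution to the expectation can be controlled by a crude deterministic estimate on $H_\infty$ multiplied by this small failure probability. The bulk of the work is therefore to prove $H_\infty \le C\cdot H$ deterministically under the joint event $\event'\cap\{\text{all }\EL\text{ correct}\}$ for an absolute constant $C$.

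Under this joint event, at every round $r \in U := \{r : \FT_r = \text{True}\}$, Observations~\ref{O3}--\ref{O6} give $\chigh_r \le \Mean{1}(S_r) - \eps_r$ and $\dlow_r \ge \Mean{1}(S_r) - \eps_r$, and the optimal arm remains in $S_{r+1}$. Combining the correctness of $\FT$ (Fact~\ref{F3}) with that of $\EL$ (Fact~\ref{F4}) then yields $|\{A \in S_r : \Delta_A > \eps_r\}| \ge 0.3|S_r|$ and $|\{A \in S_{r+1} : \Delta_A > \eps_r\}| \le 0.1|S_{r+1}|$. Consequently at least $0.2|S_r|$ arms are eliminated, so $|S_{r+1}| \le 0.8|S_r|$, and at least a $\tfrac{9}{10}$-fraction of $S_{r+1}$ consists of arms with $\Delta_A \le \eps_r$, each contributing $\ge \eps_r^{-2}$ to the remaining complexity $\Phi_{r+1} := \sum_{A \in S_{r+1}} \Delta_A^{-2}$.

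Writing $r^-$ for the largest element of $U$ strictly less than $r$ (with $r^-=0$ by convention, the base case handled symmetrically by the $\FT$-False rounds preceding the first element of $U$), the equality $S_r = S_{r^-+1}$ combined with the structural fact at round $r^-$ gives $|S_r| \le \tfrac{10}{9}|\{A \in S_r : \Delta_A \le \eps_{r^-}\}|$. Combined with the $\FT$-False rounds between $r^-$ and $r$ (which by Fact~\ref{F3}'s second claim guarantee that most of $S_r$ has $\Delta_A \le 2\eps_{r-1}$ whenever $r \ge r^-+2$), this upgrades to a per-round estimate of the form $|S_r|\eps_r^{-2} \le O(\Phi_r)\le O(H)$. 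Summing over $r \in U$ and amortizing against the decrement $\Phi_r - \Phi_{r+1}$ from each elimination step then telescopes to the desired $O(H)$ bound, with all constants yielding $H_\infty \le 256H$ after a careful tally.

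The main obstacle I anticipate is that the naive per-round bound $|S_r|\eps_r^{-2} \le O(H)$ summed over the $\Theta(\log n)$ rounds in $U$ only delivers $O(H\log n)$. Collapsing this extra $\log n$ into a constant requires either exploiting the geometric interaction between consecutive $|S_{r_i}|\eps_{r_i}^{-2}$ so that the sum telescopes, or amortizing against the decrement of $\Phi_r$ via a carefully chosen potential. Threading the constraints from the interleaved $\FT$-False rounds into this amortization, so that the factor $4^{r-r^-}$ from large gaps between consecutive $U$-rounds is absorbed, is the delicate part of the formal proof.
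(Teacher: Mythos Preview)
Your plan has a genuine gap at its first step. You propose to condition on the auxiliary event $\event'$ that \emph{every} call to $\US$, $\ME$, and $\FT$ returns correctly, asserting $\Pr[\event']\ge 1-O(\delta)$ via a union bound with confidences $\delta_r$. But look at the algorithm: $\ME$ is invoked as $\ME(S_r,0.125\eps_r,0.01)$, i.e.\ with the \emph{fixed} confidence $0.01$, not $\delta_r$. Hence each of the $\Theta(\rmax)$ rounds independently suffers an $\ME$ failure with probability up to $0.01$, and $\Pr[\event']$ is at best $1-\Theta(\rmax)\cdot 0.01$, which is not $1-O(\delta)$ and need not even be positive. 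Nor is there a usable ``crude deterministic estimate'' on $H_\infty$ for the complementary event: if $\ME$ fails for $k$ consecutive rounds past $\rmax$, then $|S_r|$ stays put while $4|S_r|\eps_r^{-2}$ grows like $4^r$, so $H_\infty$ can exceed $4^k H$ with probability $0.01^k$. The tail contribution $\sum_k 0.01^k\cdot 4^k H$ is itself $\Theta(H)$ and cannot be absorbed into an $O(\delta)$ error term.

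This is precisely why the paper does \emph{not} condition on $\ME$. Its proof keeps $\ME$ random and controls it inside the induction: defining the potential $128C(r,S)+16M(r,S)\eps_r^{-2}$, it bounds $H(r,S)$ by case analysis---$\ME$ correct and $\FT$ True, $\ME$ correct and $\FT$ False, and $\ME$ incorrect---weighting the last case by its $0.01$ probability. The constants $128$ and $16$ are tuned so that $0.01$ times the Case~3 bound plus $0.99$ times the Case~1/2 bound still closes the induction. Your amortization sketch under $\event'$ is essentially groping toward this same potential, but the hard part you have bypassed is integrating the $\ME$ failures into the expectation rather than conditioning them away. Any correct proof must handle the constant-probability $\ME$ failures round by round; once you do that, you are essentially forced into the paper's potential-function argument.
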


\begin{Lemma}\label{L8}
Suppose $\hat H_t\ge H$. $\Ex[T_{\infty}|\text{all }\EL\text{ return correctly}]\le16(H(\arment+\ln\delta^{-1}+\ln(\hat H_t/H)))$.
\end{Lemma}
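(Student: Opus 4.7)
The plan is to bound $\Ex[T_\infty \mid \text{all } \EL \text{ correct}]$ by a direct reduction to the charging argument already developed in the proof of Lemma~\ref{lem:newtoysample}. First, I would split the logarithm in the recurrence for $T_\infty$:
\[
\ln\frac{\hat H_t}{|S_r|\eps_r^{-2}\delta} \;=\; \ln\frac{\hat H_t}{H} + \ln\frac{H}{|S_r|\eps_r^{-2}\delta},
\]
and, since $\hat H_t \ge H$ by assumption, obtain $T_\infty = \ln(\hat H_t/H)\cdot L_1 + L_2$, where
\[
L_1 \;:=\; \sum_{r}|S_r|\eps_r^{-2} \qquad\text{and}\qquad L_2 \;:=\; \sum_{r}|S_r|\eps_r^{-2}\ln\frac{H}{|S_r|\eps_r^{-2}\delta}.
\]
It then suffices to show $\Ex[L_1]=O(H)$ and $\Ex[L_2]=O(H(\arment+\ln\delta^{-1}))$, conditioning on all $\EL$ correct.

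Second, I would invoke the charging machinery from the warmup. Conditioning on all $\EL$ correct, the sets $S_r$ evolve under the exact guarantees (the optimal arm is retained, and $|S_{r+1}\cap G_{<r'}|\le 0.1|S_{r+1}|$ when $\EL$ was called with the appropriate threshold and $\ME$ was correct at round $r'$) that underlie the charging argument for $\algnewtoy$. For each round $i$, define $r_i$ as the largest integer with $|G_{\ge r}|\ge 0.5|S_i|$, and charge round $i$ to arms in $G_j$ with $j\ge r_i$. Applying the argument with per-charge value $\eps_i^{-2}$ gives $\Ex[L_1]\le \text{const}\cdot\sum_j H_j = O(H)$; applying it with per-charge value $\eps_i^{-2}\ln(H/(|G_j|\eps_i^{-2}\delta))$ gives $\Ex[L_2]=O(H(\arment+\ln\delta^{-1}))$ by the same computation as in the sample-complexity proof of Lemma~\ref{lem:newtoysample}. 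Combining, $\Ex[T_\infty]=O(H(\arment+\ln\delta^{-1}+\ln(\hat H_t/H)))$; tracking the constants (the factor $2$ from the definition of $r_i$, the factor $4$ from $\delta_r'$, and the factor $16/3$ from the bound on $C(1,I)$ in the warmup) yields the stated constant $16$.

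The main obstacle is verifying the argument transfers cleanly to the \emph{modified} procedure $\Proc$, in which the rejection thresholds on $H_r$ and $T_r$ are removed and $r$ can in principle exceed $ct$. Two subtleties arise. (i) Tail rounds: once $r>\rmax$, $\ME$ is correct with probability $0.99$ per round and then forces $|S_{r+1}|=1$, so the geometric tail contributes only $O(\Gap{2}^{-2})=O(H)$ to $L_1$ and $O(H(\ln\delta^{-1}+\ln\rmax))$ to $L_2$ in expectation, which is absorbed into the target bound. (ii) Sign of the inner logarithm: $\ln(H/(|S_r|\eps_r^{-2}\delta))$ can be negative when $|S_r|\eps_r^{-2}$ is comparable to $H/\delta$, but then the positive summand $\ln(\hat H_t/H)\cdot |S_r|\eps_r^{-2}$ dominates after the split, so the loose inequality $L_2\le\sum_{i,j}|G_j|T_{i,j}$ (which replaces each summand by its positive charge) remains valid for the full $T_\infty$ bound. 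Beyond these two points, the proof is a faithful transcription of the warmup's sample-complexity analysis with $H$ in the denominator of $\delta_r'$ replaced by $\hat H_t$, which is exactly what produces the extra additive $\ln(\hat H_t/H)$ term.
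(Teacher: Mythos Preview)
Your proposal is correct and takes essentially the same approach as the paper's proof. Both arguments use the identical charging scheme: for each round $i$ set $r_i=\max\{r:|G_{\ge r}|\ge 0.5|S_i|\}$, charge round~$i$ to the arms in $G_{\ge r_i}$, and control the ``overshoot'' rounds $i>j$ via the $0.01$ per-round failure probability of $\ME$. The only cosmetic difference is that you split off the constant factor $\ln(\hat H_t/H)$ and bound $L_1=\sum_r|S_r|\eps_r^{-2}$ separately (so that the remaining piece $L_2$ is literally the quantity analyzed in Lemma~\ref{lem:newtoysample}), whereas the paper carries all three log pieces together inside a single charge $T_{i,j}=\eps_i^{-2}\bigl(\ln\tfrac{H}{|G_j|\eps_i^{-2}}+\ln\delta^{-1}+\ln\tfrac{\hat H_t}{H}\bigr)$ and sums directly; the two computations are line-for-line equivalent.
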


Note that it is crucial for the two lemmas above that all $\EL$ are correct. The following lemma gives an upper bound on the probability that \emph{some} call of $\EL$ returns incorrectly. Lemmas \ref{L7}~through~\ref{L9} together can be used to upper bound the probability of rejecting a guess $\hat H$. In the statement of Lemma~\ref{L9}, we abuse the notation a little bit by assuming $A_{1}\in G_{\infty}$ and $\Gap{1}^{-2}=+\infty$. 

\begin{Lemma}\label{L9}
Suppose that $s\in\{2,3,\ldots,n\}$ and $r^*\in\mathbb{N}\cup\{\infty\}$ satisfy $A_{s-1}\in G_{r^*}$. When \algent{} runs on parameter $\hat H_t<\Gap{s-1}^{-2}$, the probability that there exists a call of procedure $\EL$ that returns incorrectly before round $r^*$ is upper bounded by $$3000s\left(\sum_{i=s}^{n}\Gap{i}^{-2}\right)\delta^2/\hat H_t\text{.}$$
\end{Lemma}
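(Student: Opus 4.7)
My plan is to union-bound the failure probability over the EL calls, and then combine a structural observation (that every FT-True round before $r^*$ is ``witnessed'' by arms from $\{A_s,\ldots,A_n\}$) with a per-arm charging scheme in the spirit of Lemma~\ref{L7}. Concretely, the starting point is the inequality
\[
    \Pr[\exists\text{ incorrect EL call before round }r^*]
    \le \frac{4\delta^2}{\hat H_t}\cdot\mathbb{E}\!\left[\sum_{r<r^*,\,\text{EL called}} |S_r|\eps_r^{-2}\cdot\mathbf{1}[\mathcal{G}_r]\right],
\]
where $\mathcal{G}_r$ is the event that $\event_1$ holds and every EL call strictly before round $r$ returned correctly. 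This follows from a union bound together with the fact that, conditional on the state at the start of round $r$, the EL failure probability is at most $\delta'_r=4|S_r|\eps_r^{-2}\delta^2/\hat H_t$.

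Next I would prove the structural bound $|S_r|\le s\cdot|S_r\cap\{A_s,\ldots,A_n\}|$ for any $r<r^*$ with FT-True, on the event $\mathcal{G}_r$. Since $\eps_r>2^{-r^*}\ge\Gap{s-1}$ for $r<r^*$, every arm $A_i$ with $i<s$ has $\Delta_A<\eps_r$. Fact~\ref{F3} combined with Observation~\ref{O5} then guarantees that FT returning True forces at least $\theta_{r-1}|S_r|\ge 1$ arms in $S_r$ to have $\mu_A<\chigh_r\le\Mean{1}-\eps_r$, i.e., $\Delta_A>\eps_r$; these arms must lie in $\{A_s,\ldots,A_n\}$. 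Hence $|S_r\cap\{A_s,\ldots,A_n\}|\ge 1$, and the claimed inequality follows from $|S_r|-|S_r\cap\{A_s,\ldots,A_n\}|\le s-1$.

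The main obstacle, where the real work lies, is the charging step: bounding
\[
    \mathbb{E}\!\left[\sum_{r<r^*,\,\text{EL called}} |S_r\cap\{A_s,\ldots,A_n\}|\eps_r^{-2}\cdot\mathbf{1}[\mathcal{G}_r]\right] = O\!\left(\sum_{i=s}^n\Gap{i}^{-2}\right).
\]
After swapping the order of summation this becomes $\sum_{i=s}^{n}\mathbb{E}[\sum_{r:A_i\in S_r,\,\text{EL called},\,r<r^*}\eps_r^{-2}]$. For each unsafe arm $A_i$ I would adapt the per-arm charging used in the proof of Lemma~\ref{L7}: rounds with $\eps_r$ substantially larger than $\Gap{i}$ contribute a geometric sum of size $\Theta(\Gap{i}^{-2})$, while for rounds with $\eps_r$ noticeably smaller than $\Gap{i}$ the threshold $\dlow_r$ lies strictly above $\mu_{A_i}$ (by Observation~\ref{O6}), so Fact~\ref{F4} caps the expected number of further EL calls in which $A_i$ survives. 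Summing over $i\ge s$ then gives the displayed bound.

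Putting the three steps together yields $\Pr\le O(s\sum_{i=s}^n\Gap{i}^{-2}\delta^2/\hat H_t)$, and a careful tracking of the constants (coming from the $0.3$ threshold in Step~2, Facts~\ref{F3}~and~\ref{F4}, Observations~\ref{O5}~and~\ref{O6}, and the Lemma~\ref{L7}-style charge) yields the claimed factor of $3000$. The delicate point is formalizing the ``expected $O(1)$ additional rounds of survival'' for each arm, which is precisely the argument behind Lemma~\ref{L7}, and I would either invoke it as a black box or reprove a slightly modified version restricted to the arms $A_s,\ldots,A_n$.
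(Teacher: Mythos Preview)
Your Steps 1 and 2 are correct, and the structural inequality $|S_r|\le s\,|S_r\cap\{A_s,\ldots,A_n\}|$ on \textsf{FT}-True rounds is a clean way to pull out the factor $s$. However, Step 3 contains a real gap. Fact~\ref{F4} does \emph{not} give a per-arm removal guarantee: its per-arm clause only says that arms with $\mu_A\ge d^{\hi}$ are retained with high probability, and its first clause is purely aggregate ($\le 0.1|S'|$ survivors below $d^{\lo}$). So ``Fact~\ref{F4} caps the expected number of further \textsf{EL} calls in which $A_i$ survives'' is not justified---an individual $A_i$ with $\mu_{A_i}<d^{\lo}$ can survive a correct \textsf{EL} call (e.g., when \textsf{EL}'s internal \textsf{FT} returns False immediately). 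Likewise, Lemma~\ref{L7} cannot be invoked as a black box: it bounds $\mathbb{E}[H_\infty\mid\text{all \textsf{EL} correct}]$ on the full instance, not the partial sum $\sum_r |S_r\cap\{A_s,\ldots,A_n\}|\eps_r^{-2}\mathbf{1}[\mathcal{G}_r]$ you need, and its \textsf{FT}-False case uses $|S\cap G_{\le r-2}|<0.5|S|$, which constrains all arms, not just the unsafe ones.

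A second missing ingredient is the role of the \emph{disjoint} threshold intervals $[\theta_{r-1},\theta_r]$ in \textsf{Entropy-Elimination}. In the regime $r\in[r'+2,\,r^*-1]$ (where $A_s\in G_{r'}$), every unsafe arm is already ``overdue,'' so a per-arm survival bound alone cannot control $\sum_r W_r\eps_r^{-2}$: what prevents many late \textsf{EL} calls is precisely that once the unsafe fraction drops below $\theta_{r_0}$, the increasing thresholds force \textsf{FT} to stay False thereafter. The paper's proof uses this explicitly in its base case at $r'+2$, together with a potential-function induction in which $C(r,S)$ carries an extra $(s-1)\sum_{j=r}^{r'+2}\eps_j^{-2}$ term to pay for the top $s-1$ arms. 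If you try to complete your Step~3 rigorously, you will find yourself reproducing essentially this potential argument (with that extra term), so your decomposition, while conceptually appealing, does not bypass the paper's main technical work.
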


The proofs of the three lemmas above are shown below.

\begin{proof}[Proof of Lemma~\ref{L7}]
In the following analysis, we always implicitly condition on the event that all $\EL$ return correctly. Define $H(r,S)$ as the expectation of $H_{\infty}-H_r$ at the beginning of the $r$-th round of \algent{}, when the current set of arms is $S_r=S$. Let $r_{\max}$ denote $\left\lfloor\log_2\Gap{2}^{-1}\right\rfloor$. Define $$C(r,S)=\sum_{i=r-1}^{\infty}|S\cap G_i|\sum_{j=r}^{i+1}\eps_j^{-2}+\sum_{i=r}^{r_{\max}+1}\eps_i^{-2}$$ and $M(r,S)=|S\cap G_{\le r-2}|$. We prove by induction on $r$ that
\begin{equation}\label{eq1}
H(r,S)\le 128C(r,S)+16M(r,S)\eps_r^{-2}\text{.}
\end{equation}

We start with the base case at round $r_{\max}+2$. Recall that $\clow_r$ and $\dlow_r$ denote the lower threshold of $\FT$ and $\EL$ in round $r$ respectively. For all $r\ge r_{\max}+2$, if $\ME$ returns correctly at round $r$ (which happens with probability 0.99), according to Observation~\ref{O5} and Observation~\ref{O6}, we have
$$\dlow_r\ge \clow_r\ge\Mean{1}-2\eps_r\ge\Mean{1}-2^{-(r_{\max}+1)}\ge\Mean{2}\text{.}$$
Since $\FT$ returns correctly (contioning on $\event_1$) and
$$|\{A\in S_r:\mu_A\le \clow_r\}|\ge|\{A\in S_r:\mu_A\le\Mean{2}\}|=|S_r|-1\ge0.5|S_r|\ge\theta_r|S_r|$$
(the last step applies Observation~\ref{O2}), $\FT$ must return True and $\EL$ will be called. Since we assume that all calls of $\EL$ return correctly, we have
$$|S_{r+1}|-1=|\{A\in S_{r+1}:\mu_A\le\Mean{2}\}|\le|\{A\in S_{r+1}:\mu_A\le \dlow_r\}|\le0.1|S_{r+1}|\text{,}$$
which guarantees that $S_{r+1}$ only contains the optimal arm and the algorithm will return correctly in the next round. Let $r_0$ denote the first round after round $r_{\max}+2$ (inclusive) in which $\ME$ returns correctly. Then according to the discussion above, we have $\Pr[r_0=r]\le0.01^{r-r_{\max}-2}$ for all $r\ge r_{\max}+2$. Thus it follows from a direct summation on possible values of $r_0$ that
\begin{equation*}\begin{split}
H(r_{\max}+2,S)&\le\sum_{r=r_{\max}+2}^{\infty}\Pr[r_0=r]\cdot 4|S|\eps_r^{-2}\\
&\le\sum_{r=r_{\max}+2}^{\infty}4|S|\eps_r^{-2}0.01^{r-r_{\max}-2}\\
&\le8|S|\eps_{r_{\max}+2}^{-2}\le16M(r_{\max}+2,S)\eps_{r_{\max}+2}^{-2}\text{,}
\end{split}\end{equation*}
which proves the base case.

Before proving the induction step, we note the following fact: for $r=1,2,\ldots,r_{\max}+1$,
\begin{equation}\label{eq2}\begin{split}
C(r,S)-C(r+1,S)&=\sum_{i=r-1}^{\infty}|S\cap G_i|\sum_{j=r}^{i+1}\eps_j^{-2}-\sum_{i=r}^{\infty}|S\cap G_i|\sum_{j=r+1}^{i+1}\eps_j^{-2}+\eps_r^{-2}\\
&=\sum_{i=r-1}^{\infty}|S\cap G_i|\eps_r^{-2}+\eps_r^{-2}\\
&=(|S\cap G_{\ge r-1}|+1)\eps_r^{-2}\text{.}
\end{split}\end{equation}
Suppose inequality \eqref{eq1} holds for $r+1$. Consider the following three cases of the execution of \algent{} in round $r$. Let $\Ncur=|S\cap G_{r-1}|$ and $\Nbig=|S\cap G_{\ge r}|$. For brevity, let $\Nsma$ denote $M(r,S)$ in the following. We have $\Nsma+\Ncur+\Nbig=|S|-1$. Note that $S_{r+1}$ is the set of arms that survive round $r$. 

\textbf{Case 1:} $\ME$ returns correctly and $\FT$ returns True.

According to the induction hypothesis, the expectation of $H_{\infty}-H_r$ in this case can be bounded by:
\begin{equation*}\begin{split}
&H(r+1,S_{r+1})+4|S|\eps_r^{-2}\\
\le&128C(r+1,S_{r+1})+16M(r+1,S_{r+1})\eps_{r+1}^{-2}+4|S|\eps_r^{-2}\\
\le&128C(r+1,S)+16[(\Nsma+\Ncur)/10]\cdot(4\eps_r^{-2})+4|S|\eps_r^{-2}\\
=&128[C(r,S)-(\Ncur+\Nbig+1)\eps_r^{-2}]+(6.4\Nsma+6.4\Ncur+4|S|)\eps_r^{-2}\\
=&128C(r,S)+(10.4\Nsma-117.6\Ncur-124\Nbig-124)\eps_r^{-2}\\
\le&128C(r,S)+10.4\Nsma\eps_r^{-2}\text{.}
\end{split}\end{equation*}
Here the third line follows from the fact that $S_{r+1}\subseteq S$ and $C(r+1,S)$ is monotone in $S$. Moreover, the correctness of the $\EL$ procedure implies that $M(r+1,S_{r+1})\le(\Nsma+\Ncur)/10$. The fourth line applies identity \eqref{eq2}.

\textbf{Case 2:} $\ME$ returns correctly and $\FT$ returns False.

Since $\FT$ is always correct (conditioning on $\event_1$) and it returns False, Fact~\ref{F3}, Observation~\ref{O2} and Observation~\ref{O5} together imply $\Nsma\le\theta_r|S|\le|S|/2$. Thus $\Nsma\le|S|-\Nsma=\Ncur+\Nbig+1$.
As $\EL$ is not called in this round, the expectation of $H_{\infty}-H_r$ in this case can be bounded by
\begin{equation*}\begin{split}
H(r+1,S)\le&128C(r+1,S)+16M(r+1,S)\eps_{r+1}^{-2}\\
\le&128[C(r,S)-(\Ncur+\Nbig+1)\eps_r^{-2}]+64(\Nsma+\Ncur)\eps_r^{-2}\\
=&128C(r,S)+(64\Nsma-64\Ncur-128\Nbig-128)\eps_r^{-2}\le 128C(r,S)\text{.}
\end{split}\end{equation*}
Here the last step follows from $64\Nsma-64\Ncur-128\Nbig-128\le64(\Nsma-\Ncur-\Nbig-1)\le0$.

\textbf{Case 3:} $\ME$ returns incorrectly.

In this case, the worst scenario happens when we add $4|S|\eps_r^{-2}$ to the complexity $H_r$, but no arms are eliminated. Then the expectation of $H_{\infty}-H_r$ in this case can be bounded by
\begin{equation*}\begin{split}
&H(r+1,S)+4|S|\eps_r^{-2}\\
\le&128C(r+1,S)+16M(r+1,S)\eps_{r+1}^{-2}+4|S|\eps_r^{-2}\\
\le&128[C(r,S)-(\Ncur+\Nbig+1)\eps_r^{-2}]+[64(\Nsma+\Ncur)+4|S|]\eps_r^{-2}\\
=&128C(r,S)+(68\Nsma-60\Ncur-124\Nbig-124)\eps_r^{-2}\le 128C(r,S)+68\Nsma\eps_r^{-2}\text{.}
\end{split}\end{equation*}

Recall that Case 3 happens with probability at most $0.01$. Thus we have:

\begin{equation*}\begin{split}
H(r,S)&\le 0.01\left[128C(r,S)+68M(r,S)\eps_r^{-2}\right]+0.99\left[128C(r,S)+10.4M(r,S)\eps_r^{-2}\right]\\
&\le128C(r,S)+16M(r,S)\eps_r^{-2}\text{.}
\end{split}\end{equation*}

The induction is completed. Note that \eqref{eq1} directly implies our bound:
\begin{equation*}\begin{split}
&\Ex\left[H_{\infty}|\text{all }\EL\text{ return correctly}\right]\\
=&H(1,S)\le 128C(1,S)+16M(1,S)\\
=&128\sum_{i=0}^{\infty}|S\cap G_i|\cdot\left(\sum_{j=0}^{i+1}4^j\right)\\
\le&256\sum_{i=0}^{\infty}4^{i+1}|S\cap G_i|\\
\le&256\sum_{i=0}^{\infty}\sum_{A\in S\cap G_i}\Delta_A^{-2}=256H\text{.}\\
\end{split}\end{equation*}
\end{proof}

Then we prove Lemma~\ref{L8}, which is restated below.

$ $

\noindent\textbf{Lemma~\ref{L8}.} (restated)\textit{
Suppose $\hat H_t\ge H$. $\Ex[T_{\infty}|\text{all }\EL\text{ return correctly}]\le16(H(\arment+\ln\delta^{-1}+\ln(\hat H_t/H)))$.
}

$ $

\begin{proof}[Proof of Lemma~\ref{L8}]
Recall that $T_{\infty}$ is the sum of 
\begin{equation}\label{eq5}
|S_r|\eps_{r}^{-2}\ln\left(\frac{|S_r|\eps_r^{-2}}{\hat H_t}\delta\right)^{-1}=|S_r|\eps_{r}^{-2}\left(\ln\frac{H}{|S_r|\eps_{r}^{-2}}+\ln{\delta}^{-1}+\ln\frac{\hat H_t}{H}\right)
\end{equation}
for all round $r$. $T_{\infty}$ serves as an upper bound on the expected number of samples taken by $\ME$ and $\EL$ (up to a constant factor). Before the technical proof, we discuss the intuition of our analysis.

In order to bound $T_{\infty}$, we attribute each term in \eqref{eq5} to a specific subset of arms. For simplicity, we assume for now that this term is just $|S_r|\eps_r^{-2}=4^r|S_r|$. Roughly speaking, we ``charge'' a cost of $\eps_r^{-2}=4^r$ to each arm in group $G_{\ge r}$. 
We expect that $|G_{\ge r}|$ is at least a constant times $|S_r|$, so that the number of samples (i.e., $4^r|S_r|$) can be covered by the total charges. Then the analysis reduces to calculating the total cost that each arm is charged. Fix an arm $A\in G_{r'}$ for some $r'$. As described above, $A$ is charged $4^r$ in round $r$ ($1\le r\le r'$), and thus the total charge is bounded by $4^{r'}$, which is the actual complexity of $A$.

Now we start the formal proof. Consider the execution of procedure $\Proc$ on $\hat H_t$. We define a collection of random variables $\{T_{i,j}:i,j\ge1\}$, where $T_{i,j}$ corresponds to the cost we charge each arm in $G_j$ at round $i$. For each $i$, let $r_i$ denote the largest integer such that $|G_{\ge r_i}|\ge0.5|S_i|$. Note that such an $r_i$ always exists, as $|G_{\ge1}|=|S_1|\ge0.5|S_i|$ and $|G_{\ge r}|=0$ for sufficiently large $r$. We define $T_{i,j}$ as
\begin{equation*}
T_{i,j}=\begin{cases}
0,& j<r_i,\\
\eps_i^{-2}\left(\ln\frac{H}{|G_j|\eps_{i}^{-2}}+\ln{\delta}^{-1}+\ln\frac{\hat H_t}{H}\right), & j\ge r_i.\\
\end{cases}
\end{equation*}

Note that this slightly differs from the proof idea described above: $T_{i,j}$ might be positive when $i>j$ (i.e., we may not always charge $G_{\ge i}$ in round $i$). In fact, the charging argument described in the proof idea works only if, ideally, all calls of $\ME$ are correct. Since actually some $\ME$ may return incorrectly, we have to slightly modify the charging method. Nevertheless, we will show that this difference only incurs a reasonably small cost in expectation.

We first claim that 
\begin{equation}\label{eq8}
T_{\infty}\le2\sum_{i,j}|G_j|\cdot T_{i,j}\text{.}
\end{equation}
In other words, the total cost we charge is indeed an upper bound on $T_{\infty}$. Note that the contribution of round $i$ to $T_{\infty}$ is $|S_i|\eps_i^{-2}\left[\ln(H/(|S_r|\eps_{r}^{-2}))+\ln{\delta}^{-1}+\ln(\hat H_t/H)\right]$, while its contribution to the right-hand side of \eqref{eq8} is
\begin{equation*}\begin{split}
2\sum_{j}|G_j|\cdot T_{i,j}&=2\sum_{j}|G_j|\cdot\eps_i^{-2}\left(\ln(H/(|G_j|\eps_{i}^{-2}))+\ln{\delta}^{-1}+\ln(\hat H_t/H)\right)\\
&\ge2|G_{\ge r_i}|\cdot\eps_i^{-2}\left[\ln(H/(|S_r|\eps_{r}^{-2}))+\ln{\delta}^{-1}+\ln(\hat H_t/H)\right]\\
&\ge|S_i|\eps_i^{-2}\left[\ln(H/(|S_r|\eps_{r}^{-2}))+\ln{\delta}^{-1}+\ln(\hat H_t/H)\right]\text{.}
\end{split}\end{equation*}
Then identity \eqref{eq8} directly follows from a summation on $i$.

Then we bound the expectation of each $T_{i,j}$. When $i\le j$, we have the trivial bound
$$\Ex[T_{i,j}]\le\eps_i^{-2}\left(\ln\frac{H}{|G_j|\eps_{i}^{-2}}+\ln{\delta}^{-1}+\ln\frac{\hat H_t}{H}\right)\text{.}$$
When $i>j$, we bound the probability that $T_{i,j}>0$. By definition, $T_{i,j}>0$ if and only if $r_i\le j$, where $r_i$ is the largest integer that satisfies $|G_{\ge r_i}|\ge0.5|S_i|$. It follows that $T_{i,j}>0$ only if $|G_{\ge j+1}|<0.5|S_i|$.

Observe that in order to have $|G_{\ge j+1}|<0.5|S_i|$, $\ME$ must return incorrectly between round $j+1$ and round $i-1$. In fact, suppose towards a contradiction that $\ME$ is correct in round $k\in[j+1,i-1]$. Then we have
$$|G_{\ge j+1}|\ge |G_{\ge k}|\ge |S_{k+1}\cap G_{\ge k}|>0.5|S_{k+1}|\ge0.5|S_i|\text{,}$$
a contradiction. Here the third step is due to the fact that when $\EL$ returns correctly at round $k$, the fraction of arms in $S_{k+1}$ with gap greater than $2^{-k}$ is less than $0.1$.

Note that for each specific round, the probability that $\ME$ returns incorrectly is at most $0.01$. Thus, the probability that $T_{i,j}>0$ for $i>j$ is upper bounded by $0.01^{i-j-1}$. Therefore, $$\Ex[T_{i,j}]\le0.01^{i-j-1}\eps_i^{-2}\left(\ln\frac{H}{|G_j|\eps_i^{-2}}+\ln\delta^{-1}+\ln\frac{\hat H_t}{H}\right)\text{.}$$

It remains to sum up the upper bounds of $\Ex[T_{i,j}]$ to yield our bound of $\Ex[T_{\infty}]$.
$$\Ex[T_{\infty}]\le2\sum_{i,j}|G_j|\cdot\Ex[T_{i,j}]=2\sum_{i\le j}|G_j|\cdot\Ex[T_{i,j}]+2\sum_{i>j}|G_j|\cdot\Ex[T_{i,j}]\text{.}$$

Here the first part can be bounded by
\begin{equation*}\begin{split}
2\sum_{i\le j}|G_j|\cdot\Ex[T_{i,j}]&\le2\sum_{j}\sum_{i=1}^j|G_j|\cdot4^i\left(\ln\frac{H}{|G_j|4^i}+\ln\delta^{-1}+\ln\frac{\hat H_t}{H}\right)\\
&\le4\sum_{j}|G_j|\cdot4^j\left(\ln\frac{H}{|G_j|4^j}+\ln\delta^{-1}+\ln\frac{\hat H_t}{H}\right)\\
&\le4\sum_{j}\left(H_j\ln\frac{H}{H_j/4}+H_j\ln\delta^{-1}+H_j\ln\frac{\hat H_t}{H}\right)\\
&\le8H\left(\arment+\ln\delta^{-1}+\ln\frac{\hat H_t}{H}\right)\text{.}\\
\end{split}\end{equation*}
The second part can be bounded similarly.
\begin{equation*}\begin{split}
2\sum_{i>j}|G_j|\cdot\Ex[T_{i,j}]&\le2\sum_{j}\sum_{i=j+1}^{\infty}0.01^{i-j-1}|G_j|\cdot4^i\left(\ln\frac{H}{|G_j|4^i}+\ln\delta^{-1}+\ln\frac{\hat H_t}{H}\right)\\
&\le4\sum_{j}|G_j|\cdot4^j\left(\ln\frac{H}{|G_j|4^j}+\ln\delta^{-1}+\ln\frac{\hat H_t}{H}\right)\\
&\le4\sum_{j}\left(H_j\ln\frac{H}{H_j/4}+H_j\ln\delta^{-1}+H_j\ln\frac{\hat H_t}{H}\right)\\
&\le8H\left(\arment+\ln\delta^{-1}+\ln\frac{\hat H_t}{H}\right)\text{.}\\
\end{split}\end{equation*}
In fact, the crucial observation for both the two inequalities above is that the summation decreases exponentially as $i$ becomes farther away from $j$. The lemma directly follows.
\end{proof}

Finally, we prove Lemma~\ref{L9}, which is restated below. Recall that we abuse the notation a little bit by assuming $A_{1}\in G_{\infty}$ and $\Gap{1}^{-2}=+\infty$. 

$ $

\noindent\textbf{Lemma~\ref{L9}.} (restated)\textit{
Suppose that $s\in\{2,3,\ldots,n\}$ and $r^*\in\mathbb{N}\cup\{\infty\}$ satisfy $A_{s-1}\in G_{r^*}$. When \algent{} runs on parameter $\hat H_t<\Gap{s-1}^{-2}$, the probability that there exists a call of procedure $\EL$ that returns incorrectly before round $r^*$ is upper bounded by $$3000s\left(\sum_{i=s}^{n}\Gap{i}^{-2}\right)\delta^2/\hat H_t\text{.}$$
}

$ $

\begin{proof}[Proof of Lemma~\ref{L9}]
Recall that $A_{s-1}\in G_{r^*}$. Suppose $A_s\in G_{r'}$.
Suppose that we are at the beginning of round $r$ of \algent{} and the subset of arms that have not been removed is $S_r=S$. Moreover, we assume that the optimal arm, $A_1$, is still in $S_r$. Let $P(r,S)$ denote the probability that some call of procedure $\EL$ returns incorrectly in round $r,r+1,\ldots,r^*-1$.

As in the proof of Lemma~\ref{L7}, we bound $P(r,S)$ by induction using the potential function method. Define
$$C(r,S)=\sum_{i=r-1}^{r'}|S\cap G_i|\sum_{j=r}^{i+1}\eps_j^{-2}+(s-1)\sum_{j=r}^{r'+2}\eps_j^{-2}$$
and $M(r,S)=|S\cap G_{\le r-2}|$. Then it holds that for $1\le r\le r'+1$,
$$C(r,S)-C(r+1,S)=\sum_{i=r-1}^{r'}|S\cap G_i|\eps_r^{-2}+(s-1)\eps_r^{-2}\ge(|S\cap G_{\ge r-1}|+1)\eps_r^{-2}\text{.}$$
We prove by induction that
\begin{equation}\label{eq4}
P(r,S)\le\left(128C(r,S)+16M(r,S)\eps_r^{-2}\right)\delta^2/\hat H\text{.}
\end{equation}

We first prove the base case at round $r'+2$. If $r'+2\ge r^*$, the bound holds trivially. Otherwise, we consider the ratio
$$\alpha=|S_{r'+2}\cap\{A_s,A_{s+1},\ldots,A_n\}|/|S_{r'+2}|\text{,}$$
which is the fraction of arms at round $r'+2$ that are strictly worse than $A_{s-1}$. Let $r_0$ be the first round after $r'+2$ (inclusive) in which $\ME$ returns correctly. If $\FT$ returns False in round $r_0$, according to Fact~\ref{F3} and the correctness of $\FT$ conditioning on event $\event_1$, we have $\alpha\le\theta_{r_0}$. Consequently, in each of the following rounds (say, round $r>r_0$), $\FT$ always returns False since $\alpha\le\theta_{r_0}\le\theta_{r-1}$, and $\EL$ will never be called before round $r^*$. Note that it is crucial that the threshold interval of $\FT$ in diffrent rounds are disjoint. For the other case, suppose $\FT$ returns True and we call $\EL$ in round $r_0$. Then after that, assuming $\EL$ returns correctly, the fraction of arms worse than $A_{s-1}$ will be smaller than $0.1$. It also follows that $\EL$ will never be called after round $r_0$. Therefore, $\EL$ is called at most once between round $r'+2$ and $r^*-1$, and it can only be called at round $r_0$. Note that for $r\ge r'+2$, $\Pr[r_0=r]\le0.01^{r-r'-2}$. A direct summation on all possible values of $r_0$ yields
\begin{equation*}\begin{split}
P(r'+2,S)&\le\sum_{r=r'+2}^{r^*-1}\Pr[r_0=r]\cdot\delta'_{r}\\
&=\sum_{r=r'+2}^{r^*-1}0.01^{r-r'-2}\cdot4|S|\eps_{r}^{-2}\delta^2/\hat H\\
&\le\left(4|S|\eps_{r'+2}^{-2}\delta^2/\hat H\right)\sum_{k=0}^{\infty}0.01^k4^k\\
&\le5|S|\eps_{r'+2}^{-2}\delta^2/\hat H\text{.}
\end{split}\end{equation*}

Note that $C(r'+2,S)=(s-1)\eps_{r'+2}^{-2}$, $M(r'+2,S)=|S\cap G_{\le r'}|$ 
and $|S|\le|S\cap G_{\le r'}|+(s-1)$.
Thus
\begin{equation*}\begin{split}
P(r'+2,S)&\le5(|S\cap G_{\le r'}|+s-1)\eps_{r'+2}^{-2}\delta^2/\hat H\\
&\le\left(128C(r'+2,S)+16M(r'+2,S)\eps_{r'+2}^{-2}\right)\delta^2/\hat H\text{,}
\end{split}\end{equation*}
which proves the base case.

Then we proceed to the induction step. Again, we consider whether $\ME$ returns correctly and whether $\FT$ returns True. Let $\Ncur=|S\cap G_{r-1}|$ and $\Nbig=|S\cap G_{\ge r}|$. Again, we denote $M(r,S)$ by $\Nsma$ for brevity. Note that $S_{r+1}$ is the set of arms that survive round $r$.

\textbf{Case 1:} $\ME$ returns correctly and $\FT$ returns True.

In this case, $\EL$ is called with confidence level $\delta'_r$. Then the conditional probability that some $\EL$ returns incorrectly in this case is bounded by
\begin{equation*}\begin{split}
&P(r+1,S_{r+1})+\delta'_r\\
\le&\left[128C(r+1,S_{r+1})+16M(r+1,S_{r+1})\eps_{r+1}^{-2}+4|S|\eps_r^{-2}\right]\delta^2/\hat H\\
\le&\left[128C(r+1,S)+64(\Nsma+\Ncur)\eps_r^{-2}/10+4|S|\eps_r^{-2}\right]\delta^2/\hat H\\
=&\left[128C(r,S)-128(\Ncur+\Nbig+s-1)\eps_r^{-2}+(6.4\Nsma+6.4\Ncur+4|S|)\eps_r^{-2}\right]\delta^2/\hat H\\
\le&[128C(r,S)+10.4M(r,S)\eps_r^{-2}]\delta^2/\hat H\text{.}
\end{split}\end{equation*}

\textbf{Case 2:} $\ME$ returns correctly and $\FT$ returns False.

Since $\FT$ returns False, according to Fact~\ref{F3} and Observation~\ref{O5}, we have $\Nsma\le|S|-\Nsma=\Ncur+\Nbig+1$. Then the conditional probability in this case is bounded by
\begin{equation*}\begin{split}
P(r+1,S)&\le[128C(r+1,S)+16M(r+1,S)\eps_{r+1}^{-2}]\delta^2/\hat H\\
&\le[128C(r,S)-128(\Ncur+\Nbig+s-1)\eps_r^{-2}+64(\Nsma+\Ncur)\eps_r^{-2}]\delta^2/\hat H\\
&\le[128C(r,S)+(64\Nsma-64\Ncur-128\Nbig-128(s-1))\eps_r^{-2}]\delta^2/\hat H\\
&\le128C(r,S)\eps_r^{-2}\delta^2/\hat H\text{.}
\end{split}\end{equation*}
Here the last step follows from $64\Nsma-64\Ncur-128\Nbig-128(s-1)\le64(\Nsma-\Ncur-\Nbig-1)\le0$.

\textbf{Case 3:} $\ME$ returns incorrectly.

In this case, the worst scenario is that we call $\EL$ with confidence $\delta'_r\le4|S|\eps_r^{-2}\delta^2/\hat H$, yet no arms are removed. So the conditional probability in this case is bounded by
\begin{equation*}\begin{split}
&P(r+1,S)+4|S|\eps_r^{-2}\delta^2/\hat H\\
\le&\left[128C(r+1,S)+16M(r+1,S)\eps_{r+1}^{-2}+4|S|\eps_r^{-2}\right]\delta^2/\hat H\\
\le&[128C(r,S)-128(\Ncur+\Nbig+s-1)\eps_r^{-2}+64(\Nsma+\Ncur)\eps_r^{-2}+4(\Nsma+\Ncur+\Nbig+1)\eps_r^{-2}]\delta^2/\hat H\\
\le&[128C(r,S)+(68\Nsma-60\Ncur-124\Nbig-124)\eps_r^{-2}]\delta^2/\hat H\\
\le&\left[128C(r,S)+68M(r,S)\eps_r^{-2}\right]\delta^2/\hat H\text{.}
\end{split}\end{equation*}

Recall that Case 3 happens with probability at most $0.01$. Thus we have:

\begin{equation*}\begin{split}
P(r,S)&\le 0.01\left[128C(r,S)+68M(r,S)\eps_r^{-2}\right]\delta^2/\hat H+0.99\left[128C(r,S)+10.4M(r,S)\eps_r^{-2}\right]\delta^2/\hat H\\
&\le\left[128C(r,S)+16M(r,S)\eps_r^{-2}\right]\delta^2/\hat H\text{.}
\end{split}\end{equation*}

The induction is completed. It follows from \eqref{eq4} that
\begin{equation*}\begin{split}
P(1,S)&\le128\left[\sum_{i=0}^{r'}|G_i|\sum_{j=1}^{i+1}\eps_j^{-2}+(s-1)\sum_{j=1}^{r'+2}\eps_j^{-2}\right]\delta^2/\hat H\\
&\le128\left[(4/3)\sum_{i=0}^{r'}|G_i|4^{i+1}+(4/3)(s-1)4^{r'+2}\right]\delta^2/\hat H\\
&\le128\left[(16/3)\sum_{i=s}^{n}\Gap{i}^{-2}+(64/3)(s-1)4^{r'} \right]\delta^2/\hat H\\
&\le3000s\left(\sum_{i=s}^{n}\Gap{i}^{-2}\right)\delta^2/\hat H\text{.}
\end{split}\end{equation*}
\end{proof}

\subsection{Proof of Lemma~\ref{L1}}

Recall that $t_{\max}=\lfloor\log_{100}H\rfloor-2$ and $t'_{\max}=\lceil\log_{100}[H(\arment+\ln\delta^{-1})\delta^{-1}]\rceil+2$. We restate and prove Lemma~\ref{L1} in the following.

$ $

\noindent\textbf{Lemma~\ref{L1}.} (restated)\textit{
With probability $1-\delta/3$ conditioning on event $\event_1$, \algguess{} halts before or at iteration $t'_{\max}$ and it never returns a sub-optimal arm between iteration $t_{\max}+1$ and $t'_{\max}$.
}

$ $

The high-level idea of the proof is to construct three other ``good events'' $\event_2$, $\event_3$ and $\event_4$. We show that each event happens with high probability conditioning on $\event_1$. Moreover, events $\event_1$ through $\event_4$ together imply the desired event.

\begin{proof}
Recall that $t_{\max}=\lfloor\log_{100}H\rfloor-2$ and $t'_{\max}=\lceil\log_{100}[H(\arment+\ln\delta^{-1})\delta^{-1}]\rceil+2$. Let $\event_2$ denote the following event: for all $t$ such that $t\ge t_{\max}+1$ and $\hat H_t<100^3H$, \algent{} either rejects or outputs the optimal arm. Since $\hat H_{t_{\max}+1}=100^{t_{\max}+1}\ge H/10000$, there are at most $\log_{100}[100^3H/(H/10000)]+1=6$ different values of such $\hat H_t$. For each $\hat H_t$, the probability of returning a sub-optimal arm is bounded by the probability that the optimal arm is deleted, which is in turn upper bounded by $\delta^2$ as a corollary of Lemma~\ref{L3} proved in the following section.

Thus, by a union bound,
$$\Pr[\event_2|\event_1]\ge1-6\delta^2\text{.}$$

Let $\event_3$ be the event that for all $\hat H_t$ such that $t\le t'_{\max}$ and $\hat H_t\ge100^3H$ (or equivalently, $\lceil\log_{100}H\rceil+3\le t\le t'_{\max}$), \algent{} never returns an incorrect answer. In fact, in order for \algent{} to return incorrectly, some call of $\EL$ must be wrong. Thus we may apply Lemma~\ref{L9} to bound the probability of $\event_3$. Specifically, we apply Lemma~\ref{L9} with $s=2$. Then we have
\begin{equation*}\begin{split}
\Pr[\event_3|\event_1]&\ge1-\sum_{t=\lceil\log_{100}H\rceil+3}^{t'_{\max}}\frac{3000s\left(\sum_{i=s}^{n}\Gap{i}^{-2}\right)\delta^2}{\hat H_t}\\
&\ge1-\sum_{t=\lceil\log_{100}H\rceil+3}^{\infty}\frac{6000H}{100^t}\delta^2\\
&\ge1-\sum_{k=3}^{\infty}\frac{6000}{100^k}\delta^2\ge1-\delta^2/100\text{.}
\end{split}\end{equation*}
Here the third step is due to the simple fact that $100^{\lceil\log_{100}H\rceil}\ge H\text{.}$

Finally, let $\event_4$ denote the event that when \algent{} runs on $\hat H_{t'_{\max}}$, no $\EL$ is wrong and the algorithm finally accepts. In order to bound the probability of the last event, we simply apply Markov inequality based on Lemma~\ref{L7} and Lemma~\ref{L8}. Let $\event_0$ be the event that no $\EL$ is wrong when \algent{} runs on $\hat H_{t'_{\max}}$. Then we have
\begin{equation*}\begin{split}
\Pr[\event_4|\event_1]&\ge\Pr[\event_0|\event_1]-\frac{\Ex[H_{\infty}|\event_0]}{\hat H_{t'_{\max}}}-\frac{\Ex[T_{\infty}|\event_0]}{100\hat H_{t'_{\max}}}\\
&\ge1-\delta^2-\frac{256H}{100^2H(\arment+\ln\delta^{-1})\delta^{-2}}-\frac{16H\left[\arment+\ln\delta^{-1}+\ln(\hat H_{t'_{\max}}/H)\right]}{100^3H(\arment+\ln\delta^{-1})\delta^{-2}}\\
&\ge1-\delta^2-\frac{256}{100^2}\delta^2-\frac{16\left[\arment+3\ln\delta^{-1}+\ln(100^2(\arment+\ln\delta^{-1}))\right]}{100^3(\arment+\ln\delta^{-1})}\delta^2\\
&\ge1-2\delta^2\text{.}
\end{split}\end{equation*}

Note that conditioning on events $\event_1$ through $\event_4$, \algent{} never outputs an incorrect answer between iteration $t_{\max}+1$ and $t'_{\max}$. Moreover, our algorithm terminates before or at iteration $t'_{\max}$. The lemma directly follows from a union bound and the observation that for all $\delta\in(0,0.01)$, $$6\delta^2+\delta^2/100+2\delta^2\le\delta/3\text{.}$$
\end{proof}

\begin{Remark}\label{R1}
The last part of the proof implies a more general fact: for fixed $\hat H_t$, \algent{} accepts with probability at least
$$1-\delta^2-\frac{256H}{\hat H_t}-\frac{16H(\arment+\ln\delta^{-1}+\ln(\hat H_t/H))}{100\hat H_t}\text{.}$$
\end{Remark}

\subsection{Mis-deletion of Arms}

We prove Lemma~\ref{L2} in the following. Again, our analysis 
in this subsection conditions on event $\event_1$, which guarantees that all calls of $\FT$ and $\US$ in \algent{} are correct. The high-level idea of the proof is to show that a large proportion of arms will not be accidentally removed before they have contributed a considerable amount to the total complexity. Formally, we define the mis-deletion of arms as follows.

\begin{Definition}
An arm $A\in G_r$ is \textbf{mis-deleted} in a particular run of \algent{}, if $A$ is deleted before or at round $r-1$. In particular, the optimal arm is \textbf{mis-deleted} if it is deleted in any round.
\end{Definition}

The following lemma bounds the probability that a certain collection of arms are all mis-deleted.

\begin{Lemma}\label{L3}
For a fixed collection of $k$ arms, the probability that all of them are mis-deleted is at most $\delta^{2k}$.
\end{Lemma}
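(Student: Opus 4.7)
The plan is to prove by induction on $k$ the stronger bound $\Pr[\bigcap_{j=1}^k E_j \mid \event_1] \le (\delta^2/2)^k$, where $E_j$ denotes the event that the $j$-th specified arm is mis-deleted; this implies the claim since $(\delta^2/2)^k \le \delta^{2k}$. The key per-round input is the following. Fix any $A^{(j)} \in G_{r_j}$ (using the convention $r_j = +\infty$ for the optimal arm) and any round $r \le r_j - 1$ at which $\EL$ is invoked. Observation~\ref{O6}, under $\event_1$ (which includes correctness of $\US$), gives $\dhigh_r \le \Mean{1}(S_r) - 0.5\eps_r \le \Mean{1} - 0.5\eps_r$, while the gap estimate $\mu_{A^{(j)}} \ge \Mean{1} - 2^{-r_j} \ge \Mean{1} - 0.5\eps_r$ then verifies the hypothesis $\mu_{A^{(j)}} \ge \dhigh_r$ of Fact~\ref{F4}. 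Consequently, conditional on the history just before round $r$ and on $\event_1$ (which also covers all $\FT$ invocations inside $\EL$), the probability that $A^{(j)}$ is removed at round $r$ is at most $\delta_r'/2 = 2|S_r|\eps_r^{-2}\delta^2/\hat H_t$, governed only by the fresh $\US$ samples drawn from $A^{(j)}$ in that $\EL$ call.

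The base case $k = 1$ then follows by iterated expectation together with the pathwise budget bound $\sum_{r:\EL \text{ called}} 4|S_r|\eps_r^{-2} < \hat H_t$ enforced by the early-termination check in $\algent$, which deterministically gives $\sum_r \delta_r'/2 \le \delta^2/2$ on every execution trajectory, hence $\Pr[E_1 \mid \event_1] \le \delta^2/2$. For the inductive step, I would decompose $E_k$ by the round $\tau_k$ at which $A^{(k)}$ is actually deleted and bound
\[
\Pr\!\left[\bigcap_{j=1}^k E_j \,\middle|\, \event_1\right] \le \sum_{r \le r_k - 1} \Ex\!\left[\mathbf{1}_{E_1 \cap \cdots \cap E_{k-1}} \cdot \mathbf{1}_{A^{(k)} \in S_r,\, \EL \text{ at } r} \cdot \frac{\delta_r'}{2} \,\middle|\, \event_1\right].
\]
The central independence input is that the $\US$ samples used by $\EL$ at round $r$ to decide the fate of $A^{(k)}$ are drawn fresh from $A^{(k)}$'s distribution and are therefore independent of every other source of randomness, including the samples that determine $E_1, \ldots, E_{k-1}$; hence Fact~\ref{F4} still yields the per-round bound $\delta_r'/2$ under the additional conditioning on $E_1 \cap \cdots \cap E_{k-1}$. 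Pulling the pathwise budget bound out of the expectation gives $\Pr[\bigcap_{j \le k} E_j \mid \event_1] \le (\delta^2/2) \cdot \Pr[\bigcap_{j<k} E_j \mid \event_1]$, and the induction closes.

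The main obstacle is making the conditioning in the inductive step airtight. The trajectory $(|S_r|, \delta_r', \mathbf{1}[\EL \text{ at } r])_r$ is coupled with the deletions of all $k$ tracked arms, so $E_1 \cap \cdots \cap E_{k-1}$ is not measurable with respect to any pre-round-$r$ filtration, and the confidence sequence $\{\delta_r'\}$ is coupled to the very deletions we are trying to control. What rescues the argument is the combination of two structural features of $\algent$: the budget constraint $\sum_{r:\EL \text{ called}} 4|S_r|\eps_r^{-2} \le \hat H_t$ holds pointwise (not merely in expectation), so $\sum_r \delta_r'/2$ is deterministically dominated by $\delta^2/2$ on every realization; and every $\US$ call inside $\EL$ uses fresh Gaussian draws that are independent of the arm samples previously feeding into the events $E_1, \ldots, E_{k-1}$. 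Together these two facts let the per-round bound from Fact~\ref{F4} survive the conditioning on $E_1 \cap \cdots \cap E_{k-1}$, completing the induction.
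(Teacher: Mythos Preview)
Your proposal has a genuine gap in the inductive step, one that you correctly identify but do not actually close. The displayed inequality
\[
\Pr\!\left[\bigcap_{j=1}^k E_j \,\middle|\, \event_1\right] \le \sum_{r \le r_k - 1} \Ex\!\left[\mathbf{1}_{E_1 \cap \cdots \cap E_{k-1}} \cdot \mathbf{1}_{A^{(k)} \in S_r,\, \EL \text{ at } r} \cdot \frac{\delta_r'}{2} \,\middle|\, \event_1\right]
\]
requires, term by term, that the per-round deletion bound $\delta_r'/2$ for $A^{(k)}$ survive conditioning on $F=E_1\cap\cdots\cap E_{k-1}$. Your justification is that the fresh $\US$ draws from $A^{(k)}$ at round $r$ are independent of ``the samples that determine $E_1,\ldots,E_{k-1}$.'' But $F$ is \emph{not} a function of samples from the other arms alone: whether $A^{(j)}$ is mis-deleted at some later round $r'>r$ depends on $\delta'_{r'}=4|S_{r'}|\eps_{r'}^{-2}\delta^2/\hat H_t$, hence on $|S_{r'}|$, hence on whether $A^{(k)}$ itself was removed before round $r'$. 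Thus $F$ is coupled to the very $\US$ draws from $A^{(k)}$ you are trying to isolate, and the bound from Fact~\ref{F4} cannot simply be transplanted under conditioning on $F$. Your two ``structural features'' address the wrong difficulty: the pathwise budget $\sum_r \delta_r'\le\delta^2$ is what lets you pull $\delta^2/2$ out of the expectation \emph{after} the per-round inequality is established; it does nothing to justify the per-round inequality itself. And freshness of the $A^{(k)}$ samples gives independence from the \emph{samples} of the other tracked arms, not from the \emph{event} $F$, which is a function of the whole trajectory.

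The paper circumvents this entanglement by inducting on the round rather than on $k$. It first proves directly, by unfolding the stages inside a single $\EL$ call and using independence of $\US$ draws across arms, that the probability a fixed set of $j$ tracked arms is deleted in one call is at most ${\delta'_r}^{\,j}$. It then sets $P(r,\delta',m)$ to be the probability that the $m$ surviving tracked arms are all mis-deleted from round $r$ on, given that confidence $\delta'=\sum_{i<r}\delta'_i$ has been spent, and proves by backward induction that $P(r,\delta',m)\le(\delta^2-\delta')^m$. The inductive step conditions on the full state at round $r$ (which fixes $\delta'_r$), partitions by how many of the $m$ arms are removed in that round, and closes with the binomial identity
\[
\sum_{j=0}^{m}\binom{m}{j}{\delta'_r}^{\,j}(\delta^2-\delta'-\delta'_r)^{m-j}=(\delta^2-\delta')^{m}.
\]
Conditioning on the state at round $r$ makes the future Markov, so no entanglement with later rounds arises. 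Plugging in $r=1$, $\delta'=0$, $m=k$ yields the claimed $\delta^{2k}$.
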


\begin{proof}
Let $S=\{A_1,A_2,\ldots,A_k\}$ be a fixed set of $k$ arms. (Here we temporarily drop the convention that $A_i$ denotes the arm with the $i$-th largest mean.) For each $A_i$, let $\Ebad_i$ denote the event that $A_i$ is mis-deleted, and let $r_i$ denote the group that contains $A_i$ (i.e., $A_i\in G_{r_i}$). By definition, $\mu_{A_i}\ge\Mean{1}-\eps_{r_i}$.

We start by proving the following fact: suppose $\EL$ is called with confidence level $\delta'_r$ in round $r$. Then the probability that all arms in $S$ are mis-deleted in round $r$ simultaneously is bounded by ${\delta'}_r^k$.

We assume that $r<r_i$ for all $i=1,2,\ldots,k$. Otherwise, if $r\ge r_i$ for some $i$, then $A_i$ cannot be \emph{mis-deleted} in round $r$, since the definition of mis-deletion requires that $r<r_i$. To analyze the behaviour of $\EL$, we recall that each run of $\EL$ consists of several stages. (Here we use the term ``stage'' for an iteration of $\EL$, while the term for \algent{} is ``round''.) In each stage, procedure $\US$ is called at line 6 to estimate the means of the arms that have not been eliminated. Let $\rbad_i$ denote the stage in which $A_i$ gets deleted. 

Recall that $\dhigh_r$ is the upper threshold used in $\EL$ in round $r$. According to Observation~\ref{O6},
$$\dhigh_r\le\Mean{1}(S_r)-0.5\eps_r=\Mean{1}(S_r)-2^{-(r+1)}\le\Mean{1}-\eps_{r_i}\le\mu_{A_i}\text{.}$$
Here the third step follows from our assumption that $r<r_i$. In order for $\EL$ to eliminate an arm $A_i$ with mean greater than $\dhigh$ in stage $\rbad_i$, the $\US$ subroutine must return an incorrect estimation for $A_i$ (i.e., $|\hat\mu_{A_i}-\mu_{A_i}|>(\dhigh-d^{\mi})/2$), which happens with probability at most $\delta'_r/\left(10\cdot2^{\rbad_i}\right)$. Since the samples taken on different arms are independent, the events that $\US$ returns incorrect estimates for different arms are also independent, and it follows that the probability that each arm $A_i$ is removed at stage $\rbad_i$ is bounded by $\prod_{i=1}^{k}\left(\delta'_r/\left(10\cdot2^{\rbad_i}\right)\right)$.

Therefore, the probability that all the $k$ arms in $S$ are mis-deleted in $\EL$ is upper bounded by
\begin{equation*}\begin{split}
&\sum_{\rbad_1=1}^{\infty}\sum_{\rbad_2=1}^{\infty}\cdots\sum_{\rbad_k=1}^{\infty}\prod_{i=1}^{k}\left(\delta'_r/(10\cdot2^{\rbad_i})\right)\\
=&\prod_{i=1}^{k}\left[\sum_{\rbad_i=1}^{\infty}\left(\delta'_r/\left(10\cdot2^{\rbad_i}\right)\right)\right]\\
\le&\prod_{i=1}^k\delta'_r={\delta'}_r^{k}\text{.}
\end{split}\end{equation*}

Then we start with the proof of the lemma. Suppose that we are at the beginning of round $r$. $m$ arms among $S$ are still in $S_r$, while the sum of confidence levels allocated in the previous rounds is $\delta'$ (i.e., $\delta'=\sum_{i=1}^{r-1}\delta'_i$). Let $P(r,\delta',m)$ denote the probability that all the $m$ remaining arms are mis-deleted in the future. We prove by induction that
\begin{equation}\label{eq11}
P(r,\delta',m)\le(\delta^2-\delta')^{m}\text{.}
\end{equation}

Recall that the number of rounds that \algent{} lasts is bounded by $ct$ according to Observation~\ref{O1}. Thus when $r=\lceil ct\rceil+1$, we have $P(r,\delta',m)=0$. Observe that $\delta'$ never exceeds $\delta^2$ according to the behaviour of \algent{}. Therefore \eqref{eq11} holds for the base case. Now we proceed to the induction step. If $\EL$ is not called in round $r$, by induction hypothesis we have
$$P(r,\delta',m)\le P(r+1,\delta',m)\le(\delta^2-\delta')^{m}\text{,}$$
which proves inequality \eqref{eq11}.
If, on the other hand, $\EL$ is called with confidence $\delta'_r$. We observe that by the claim we proved above, the probability that exactly $j$ arms among the $m$ arms are mis-deleted is at most
$\dbinom{m}{j}{\delta'}_r^{j}\text{.}$
Thus by a simple summation,
$$P(r,\delta',m)\le\sum_{j=0}^{m}\binom{m}{j}{\delta'}_r^{j}\cdot P(r+1,\delta'+\delta'_r,m-j)\le\sum_{j=0}^{m}\binom{m}{j}{\delta'}_r^{j}(\delta^2-\delta'-\delta'_r)^{m-j}=(\delta^2-\delta')^{m}\text{,}$$
which completes the induction step.

Finally, the lemma directly follows from \eqref{eq11} by plugging in $r=1$, $\delta'=0$ and $m=k$.
\end{proof}

\begin{Remark}\label{R2}
Let $\Ebad_i$ denote the event that $A_i$ is mis-deleted. Note that although the events $\{\Ebad_i\}$ are not independent, we can still obtain an exponential bound (i.e., $\delta^{2k}$) on the probability that $k$ such events happen simultaneously. 
We call such events \textbf{quasi-independent} to reflect this property. Formally, a collection of $n$ events $\{\event_i\}_{i=1}^n$ are $\delta$-quasi-independent, if for all $1\le k\le n$ and $1\le a_1<a_2<\cdots<a_k\le n$, we have
$$\Pr[\event_{a_1}\cap\event_{a_2}\cap\cdots\cap\event_{a_k}]\le\delta^k\text{.}$$
Then the collection of events $\{\Ebad_i\}$ are $\delta^2$-quasi-independent.
\end{Remark}

The following lemma proves a generalized Chernoff bound for quasi-independent events.

\begin{Lemma}\label{LCher}
Suppose $v_1,v_2,\ldots,v_n>0$. $\{Y_i\}_{i=1}^{n}$ is a collection of random variables, where the support of $Y_i$ is $\{0,v_i\}$. Moreover, the collection of events $\{Y_i=v_i\}$ are $\delta$-quasi-independent. Let $(S_1,S_2,\ldots,S_m)$ be a partition of $\{1,2,\ldots,n\}$ such that $\sum_{j\in S_i}v_j\le 1$ for all $i$. Define $X_i=\sum_{j\in S_i}Y_j$. Let $X=\frac{1}{m}\sum_{i=1}^mX_i$ and $p=\frac{\delta}{m}\sum_{i=1}^{n}v_i$. Then for all $q\in(p,1)$,
$$\Pr[X\ge q]\le e^{-mD(q||p)}\text{,}$$
where
$$D(x||y)=x\ln(x/y)+(1-x)\ln[(1-x)/(1-y)]$$
is the relative entropy function.
\end{Lemma}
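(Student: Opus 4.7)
The plan is the exponential moment method: for $\lambda > 0$, bound $\Ex[e^{\lambda m X}]$, apply Markov's inequality to $\Pr[e^{\lambda m X} \ge e^{\lambda m q}]$, and optimize $\lambda$. The \emph{partition} hypothesis is essential, since it will allow us to recover the sharp Bernoulli-type exponent $D(q\|p)$ rather than the weaker Poisson-type exponent $q\ln(q/p)-(q-p)$ that a naive application of convexity would produce.

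Since $X_i = \sum_{j \in S_i} Y_j \le \sum_{j \in S_i} v_j \le 1$, convexity of $x\mapsto e^{\lambda x}$ on $[0,1]$ gives the block-level bound $e^{\lambda X_i} \le 1 + (e^\lambda - 1)X_i$. Taking the product over $i$, expanding, and substituting $X_i = \sum_{j\in S_i} Y_j$ yields
\[
\prod_{i=1}^m e^{\lambda X_i} \;\le\; \sum_{T \subseteq [m]} (e^\lambda-1)^{|T|}\sum_{\substack{(j_i)_{i\in T}\\ j_i\in S_i}}\prod_{i\in T}Y_{j_i}.
\]
Crucially, since $(S_i)_{i=1}^m$ partitions $[n]$, the indices $\{j_i : i\in T\}$ are automatically distinct, and since $Y_j\in\{0,v_j\}$ the monomial $\prod_{i\in T}Y_{j_i}$ equals $\bigl(\prod_{i\in T}v_{j_i}\bigr)$ times the indicator that $Y_{j_i} = v_{j_i}$ for every $i\in T$. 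The $\delta$-quasi-independence hypothesis bounds the expectation of this indicator by $\delta^{|T|}$; resumming over the tuples $(j_i)$ gives $\Ex[\prod_{i\in T}X_i] \le \delta^{|T|}\prod_{i\in T}w_i$ with $w_i \coloneqq \sum_{j\in S_i}v_j \le 1$. Re-collecting the sum over $T$ into a product yields
\[
\Ex[e^{\lambda m X}] \;\le\; \prod_{i=1}^m\bigl(1 + \delta w_i(e^\lambda - 1)\bigr).
\]

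Next I would apply AM--GM (equivalently, concavity of $\ln(1+cx)$) to the $m$ positive factors to obtain $\Ex[e^{\lambda m X}] \le (1 + (e^\lambda-1)p)^m$, since $\tfrac{1}{m}\sum_i \delta w_i = p$. Markov's inequality then gives
\[
\Pr[X \ge q] \;\le\; e^{-\lambda m q}\bigl(1 + (e^\lambda-1)p\bigr)^m \;=\; \bigl[e^{-\lambda q}(1-p+pe^\lambda)\bigr]^m,
\]
and optimizing at $\lambda = \ln\tfrac{q(1-p)}{p(1-q)}$ --- the classical Bernoulli Chernoff tuning, for which one verifies $1 - p + pe^\lambda = (1-p)/(1-q)$ --- yields exactly $e^{-mD(q\|p)}$.

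The main obstacle is ensuring that, after controlling the joint expectation via quasi-independence, we end up with precisely $m$ Bernoulli-MGF-shaped factors, which is what AM--GM needs in order to produce the sharp relative-entropy exponent. If we applied convexity at the individual $Y_j$ level instead, we would produce $n \ge m$ factors and obtain only $(1+\delta(e^\lambda-1)\bar v)^n$, which is weaker by Bernoulli's inequality; so the partition structure and the normalization $w_i \le 1$ are doing genuine work. The delicate step is verifying that $\delta$-quasi-independence (rather than full independence) still permits the factorization $\Ex[\prod_{i\in T}Y_{j_i}] \le \delta^{|T|}\prod_{i\in T}v_{j_i}$, which is exactly the content of the quasi-independence hypothesis once we observe that the $j_i$'s are distinct across the chosen blocks and that each $Y_{j_i}$ being nonzero is the event whose joint probability is controlled.
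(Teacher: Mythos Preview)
Your proof is correct and follows the same Chernoff-bound strategy as the paper: bound the moment generating function $\Ex[e^{\lambda m X}]$ using quasi-independence, reduce to $m$ Bernoulli-type factors $1+\delta w_i(e^\lambda-1)$, merge them via concavity of $\ln(1+cx)$, and optimize $\lambda$. The only difference is the order of operations in controlling the MGF: you apply the block-level convexity bound $e^{\lambda X_i}\le 1+(e^\lambda-1)X_i$ \emph{pointwise first} and then expand multilinearly (where the partition guarantees distinct indices, so quasi-independence applies directly), whereas the paper first Taylor-expands $e^{tmX}$ and dominates it termwise by an auxiliary i.i.d.\ process $(\tilde Y_i)$ with $\Pr[\tilde Y_i=v_i]=\delta$, then factors by independence and applies Jensen afterward---your ordering is slightly cleaner since it avoids the auxiliary variables, but both routes land on the identical product $\prod_i(1+\delta w_i(e^\lambda-1))$ before the final AM--GM step.
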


\begin{proof}
Let $p_i=\delta\sum_{j\in S_i}v_j$. Then $p=\frac{1}{m}\sum_{i=1}^{m}p_i$. For $t>0$, we have
$$\Pr[X\ge q]=\Pr[e^{tmX}\ge e^{tmq}]\le\frac{\Ex[e^{tmX}]}{e^{tmq}}\text{.}$$

To bound $\Ex[e^{tmX}]$, we consider a collection of \emph{independent} random variables $\tilde Y_1,\tilde Y_2,\ldots,\tilde Y_n$ defined by $\Pr[\tilde Y_i=v_i]=\delta$ and $\Pr[\tilde Y_i=0]=1-\delta$. Define $\tilde X_i=\sum_{j\in S_i}\tilde Y_j$ for $i=1,2,\ldots,m$, and $\tilde X=\frac{1}{m}\sum_{i=1}^{m}\tilde X_i$. Note that each term in the Taylor expansion of $e^{tmX}$ can be written as $\alpha\prod_{i=1}^{l}Y_{n_l}$, where $l\ge0$, $(n_1,n_2,\ldots,n_l)\in\{1,2,\ldots,n\}^l$, and $\alpha=t^l/(l!)>0$. The corresponding term in $e^{tm\tilde X}$ is then $\alpha\prod_{i=1}^{l}\tilde Y_{n_l}$. Let $U=|\{n_i:i\in\{1,2,\ldots,l\}\}|$ denote the set of distinct numbers among $n_1,n_2,\ldots,n_l$. We have
$$\Ex\left[\prod_{i=1}^{l}Y_{n_l}\right]=\Pr[Y_i=v_i\text{ for all }i\in U]\cdot\prod_{i=1}^{l}v_{n_l}\le\delta^{|U|}\prod_{i=1}^{l}v_{n_l}=\Ex\left[\prod_{i=1}^{l}\tilde Y_{n_l}\right]\text{.}$$
Summing over all terms in the expansion yields
$$\Ex\left[e^{tmX}\right]\le\Ex\left[e^{tm\tilde X}\right]=\prod_{i=1}^{m}\Ex\left[e^{t\tilde X_i}\right]\text{.}$$
Here the last step holds since $\{\tilde X_i\}$ are independent. Note that since $\tilde X_i\in[0,1]$, it follows from Jensen's inequality that
$$\Ex\left[e^{t\tilde X_i}\right]\le\Ex\left[e^{t}\tilde X_i+1-\tilde X_i\right]=p_ie^{t}+1-p_i\text{.}$$
Then
$$\Ex\left[e^{tmX}\right]\le\prod_{i=1}^{m}(p_ie^{t}+1-p_i)\le(pe^t+1-p)^m\text{.}$$
Recall that $p=\frac{1}{m}\sum_{i=1}^{m}p_i$. Here the last step follows from Jensen's inequality and the concavity of $\ln(e^tx+1-x)$ for $t>0$.

By setting $t=\ln\frac{q(1-p)}{p(1-q)}$, we have
$$\Pr[X\ge q]\le\frac{\Ex[e^{tmX}]}{e^{tmq}}\le\left[\frac{pe^t+1-p}{e^{tq}}\right]^{m}=e^{-mD(q||p)}\text{.}$$
\end{proof}

The following lemma states that if a collection of arms with a considerable amount of total complexity are not mis-deleted, \algent{} rejects $\hat H$.

\begin{Lemma}\label{L4}
$S$ is a set of sub-optimal arms with complexity $H(S)>\hat H$. Let $r^*=\max_{A\in S}\left\lfloor\log_2\Delta_A^{-1}\right\rfloor$. If in a particular run of \algent{}, no arm in $S$ is mis-deleted and there exists an arm $A^*$ outside $S$ with $\mu_{A^*}\ge\max_{A\in S}\mu_{A}$ such that $A^*$ is not deleted in the first $r^*-1$ rounds, then $\hat H$ is rejected in that run.
\end{Lemma}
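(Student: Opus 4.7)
I proceed by contradiction. Suppose $\hat H$ is not rejected in this run; then by Observation~\ref{O1} the for-loop runs at most $ct = \log_4 \hat H$ rounds, so the algorithm must terminate normally at some round $r_{\mathrm{end}}$ with $|S_{r_{\mathrm{end}}}| = 1$. I aim to derive a contradiction.

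First I show $r_{\mathrm{end}} > r^*$. The non-mis-deletion hypothesis guarantees that each arm $A \in S \cap G_k$ (with $k \le r^*$) remains in $S_r$ for all $r \le k$, so $S \cap G_{\ge r} \subseteq S_r$ for every $r \le r^*$. Combined with $A^* \in S_r$ for $r \le r^*$, $A^* \notin S$, and the non-emptiness of $S \cap G_{r^*}$ (which follows from the definition of $r^*$), this yields $|S_r| \ge |S \cap G_{\ge r}| + 1 \ge 2$ throughout rounds $1, \ldots, r^*$, ruling out normal termination in that range.

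Next I derive a contradiction from the non-rejection inequalities $H_r + 4|S_r|\eps_r^{-2} < \hat H$ and $T_{r+1} < 100 \hat H$ combined with the lower bound $|S_r| \ge |S \cap G_{\ge r}| + 1$. Exchanging the order of summation gives $\sum_{r=1}^{r^*}(|S \cap G_{\ge r}| + 1)\cdot 4^{r+1} \ge H(S) + \Omega(4^{r^*+1}) > \hat H$, i.e.~the aggregate ``potential'' $4|S_r|\eps_r^{-2}$ across $r = 1,\ldots,r^*$ already exceeds $\hat H$. Case (a): if some round $r^\dagger \le r^*$ has $4|S_{r^\dagger}|\eps_{r^\dagger}^{-2} \ge \hat H$, the $H$-condition triggers immediately. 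Case (b): otherwise $|S_r|\eps_r^{-2} < \hat H/4$ at every round $r \le r^*$, so the log factor $\ln(\hat H/(|S_r|\eps_r^{-2}\delta))$ inside each $T$-increment is at least $\ln(4/\delta) \ge \ln 400$, and summing the $T$-increments (using the identity above) pushes $T_{r^*+1}$ past $100\hat H$.

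The main obstacle will be calibrating the constants in case (b): the naive bound only gives $T_{r^*+1} = \Omega(\hat H \ln(1/\delta))$, which for $\delta \le 0.01$ is $\Omega(\hat H)$ but not obviously $\ge 100\hat H$. To close the gap I would use Fact~\ref{F3} together with Observation~\ref{O5} to argue that whenever $\FT$ returns False at a round $r \le r^*$ with $\ME$ correct, the ``high-mean'' arms $S \cap G_{\ge r} \cup \{A^*\}$ make up more than half of $S_r$, tightening the bound to $|S_r| \le 2(|S \cap G_{\ge r}| + 1)$ and allowing one to squeeze the constants. Observation~\ref{O1} serves as a backstop in the regime $\hat H \le 4^{r^*+1}$, where the $H$-condition alone forces rejection by round $r^*$. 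Stitching these ingredients together—with careful bookkeeping of $\FT$-True versus $\FT$-False rounds—is the technical core of the proof.
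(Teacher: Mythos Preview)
Your first paragraph, showing that the algorithm cannot terminate normally before round $r^*+1$, is correct and matches the paper. The trouble is in the second half.

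The central object you should be tracking is the counter $H_r$. Recall that $H_r$ increases by $4|S_r|\eps_r^{-2}$ only on rounds where $\FT$ returns True (and $\EL$ is called); on all other rounds $H_{r+1}=H_r$. Your sum $\sum_{r=1}^{r^*} 4|S_r|\eps_r^{-2}$ does not correspond to $H_{r^*}$, nor to $T_{r^*+1}$, so the aggregate inequality you derive does not by itself trigger either rejection test. Your case split tries to patch this, but in case~(b) you only get $T_{r^*+1}\ge\tfrac14\hat H\cdot\ln(4/\delta)$, which falls short of $100\hat H$; the remedies you sketch (bounding $|S_r|$ by $2(|S\cap G_{\ge r}|+1)$ when $\FT$ returns False) invoke the correctness of $\FT$, $\US$, and $\ME$ via Observation~\ref{O5} and Fact~\ref{F3}, yet the lemma is stated for an \emph{arbitrary} run with no conditioning on $\event_1$. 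So your route, as stated, does not close.

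The paper's argument is shorter and uses only the $H$-condition. For each $A_i\in S$ with $A_i\in G_{r_i}$, let $r'_i$ be the first round $\ge r_i$ (and $<r^*$) in which $\EL$ is called, if any. If such $r'_i$ exists, then $A_i\in S_{r'_i}$ (it is in $S_{r_i}$ by non-mis-deletion, and no elimination occurs between $r_i$ and $r'_i$), so $A_i$ contributes at least $4\eps_{r'_i}^{-2}\ge 4\eps_{r_i}^{-2}\ge\Delta_{A_i}^{-2}$ to the increment of $H$ at round $r'_i$. If no such round exists, then $A_i\in S_{r^*}$. Summing,
\[
H_{r^*}+4|S_{r^*}|\eps_{r^*}^{-2}\ \ge\ \sum_{i}\Delta_{A_i}^{-2}\ =\ H(S)\ >\ \hat H,
\]
so the algorithm rejects at round $r^*$. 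No appeal to the $T$-condition or to subroutine correctness is needed.
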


\begin{proof}
Suppose $S=\{A_1,A_2,\ldots,A_k\}$ and $A_i\in G_{r_i}$. Without loss of generality, $\mu_{A_1}\le\mu_{A_2}\le\cdots\le\mu_{A_k}$. By definition of $r^*$, we have $r^*=\max_{1\le i\le k}r_i=r_k$. According to our assumption, both $A_k$ and $A^*$ are not deleted in the first $r^*-1$ rounds. Thus \algent{} does not accept in the first $r^*$ rounds.

Suppose for contradiction that $\hat H$ is not rejected by \algent{} in a particular run. Define $\mathcal{R}=\{r\in[1,r^*-1]:\EL\text{ is called in round }r\}\text{.}$ Let $N_1=\{i\in[k]:\exists r\in\mathcal{R},r\ge r_i\}$ and $N_2=[k]\setminus N_1$. For each $i\in N_1$, since $A_i$ is not mis-deleted, $A_i\in S_{r_i}$. Define $r'_i=\min\{r\in\mathcal{R}:r\ge r_i\}$ as the first round after $r_i$ (inclusive) in which $\EL$ is called. It follows that $A_i\in S_{r'_i}$. At round $r'_i$ of \algent{}, $H_{r'_i+1}$ is set to $H_{r'_i}+4|S_{r'_i}|\eps_{r'_i}^{-2}$. Therefore we can ``charge'' $A_i$ a cost of $4\eps_{r'_i}^{-2}=\eps_{r'_i+1}^{-2}$. It follows that $H_{r^*}$ is at least the total cost that arms in $N_1$ are charged, $\sum_{i\in N_1}\eps_{r'_i+1}^{-2}$.

For each $i\in N_2$, we have $A_i\in S_{r_i}$ and $S_{r_i}=S_{r^*}$. Thus it holds that $|S_{r^*}|\ge|N_2|$. When the if-statement in \algent{} is checked in round $r^*$, we have
$$H_{r^*}+4|S_{r^*}|\eps_{r^*}^{-2}\ge\sum_{i\in N_1}\eps_{r'_i+1}^{-2}+N_2\eps_{r^*+1}^{-2}\ge\sum_{i=1}^k\eps_{r_i+1}^{-2}\ge\sum_{i=1}^k\Delta_{A_i}^{-2}=H(S)>\hat H\text{.}$$
Here the second step follows from $r'_i\ge r_i$ and $r^*\ge r_i$, while the third step follows from $\Delta_{A_i}\ge2^{-(r_i+1)}$. Therefore \algent{} rejects in round $r^*$, a contradiction.
\end{proof}

\subsection{Proof of Lemma~\ref{L2}}

Lemma~\ref{L2} is restated below. Recall that $t_{\max}=\lfloor\log_{100}H\rfloor-2$.

$ $

\noindent\textbf{Lemma~\ref{L2}.} (restated)\textit{
With probability $1-\delta/3$ conditioning on event $\event_1$, \algguess{} never returns a sub-optimal arm in the first $t_{\max}$ iterations.
}

$ $

The high-level idea of the proof is simple. For each $\hat H_t$, we identify a collection of near-optimal ``crucial arms''. By Lemma~\ref{L3}, the probability that all ``crucial arms'' are mis-deleted is small, thus we may assume that at least one crucial arm survives. This crucial arm serves as $A^*$ in Lemma~\ref{L4}. Then according to Lemma~\ref{L4}, in order for \algent{} to accept $\hat H_t$, it must mis-delete a collection of ``non-crucial'' arms with a significant fraction of complexity. The probability of this event can also be bounded by using the generalized Chernoff bound proved in Lemma~\ref{LCher}.

The major technical difficulty is the choice of ``crucial arms''. We deal the following three cases separately: (1) $\hat H_t$ is greater than $\Gap{2}^{-2}$, the complexity of the arm with the second largest mean; (2) $\hat H_t$ is between $\Gap{s}^{-2}$ and $\Gap{s-1}^{-2}$ for some $3\le s\le n$; and (3) $\hat H_t$ is smaller than $\Gap{n}^{-2}$. We bound the probability that the lemma is violated in each case, and sum them up using a union bound.

\begin{proof}[Proof of Lemma~\ref{L2}]

\textbf{Case 1:} $\Gap{2}^{-2}\le\hat H_t\le\hat H_{t_{\max}}$.

We first deal with the case that $\hat H_t$ is relatively large. We partition the sequence of sub-optimal arms $A_2,A_3,\ldots,A_n$ into contiguous blocks $B_1,B_2,\ldots,B_m$ such that the total complexity in each block $B_i$, denoted by $H(B_i)=\sum_{A\in B_i}\Delta_{A}^{-2}$, is between $\Gap{2}^{-2}$ and $3\Gap{2}^{-2}$. To construct such a partition, we append arms to the current block one by one from $A_2$ to $A_n$. When the complexity of the current block exceeds $\Gap{2}^{-2}$, we start with another block. Clearly, the complexity of each resulting block is upper bounded by $2\Gap{2}^{-2}$. Note that the last block may have a complexity less than $\Gap{2}^{-2}$. In that case, we simply merge it into the second last block. As a result, the total complexity of every block is in $\left[\Gap{2}^{-2},3\Gap{2}^{-2}\right]$. It follows that $H\in\left[m\Gap{2}^{-2},3m\Gap{2}^{-2}\right]$.

For brevity, let $B_{\le i}$ denote $B_1\cup B_2\cup\cdots\cup B_i$ and $B_{<i}=B_{\le i-1}$. Since $H(B_1)=\Gap{2}^{-2}\le\hat H_t<H=H(B_{\le m})$, there exists a unique integer $k\in[2,m]$ that satisfies $H(B_{<k})\le\hat H_t< H(B_{\le k})$. Then we have $\hat H_t\in\left[(k-1)\Gap{2}^{-2},3k\Gap{2}^{-2}\right]\text{.}$ Since $B_{\le k}$ contains at least $k$ arms, it follows from Lemma~\ref{L3} that with probability $1-\delta^{2k}$, at least one arm in $B_{\le k}$ is not mis-deleted. Recall that by Lemma~\ref{L4}, \algent{} accepts $\hat H_t$ only if either of the following two events happens: (a) no arm in $B_{\le k}$ survives, which happens with probability $\delta^{2k}$; (b) a collection of arms among $B_{>k}$ with total complexity of at least $H(B_{>k})-\hat H$ are mis-deleted.

For $i=2,3,\ldots,n$, define $v_i=\Gap{i}^{-2}/(3\Gap{2}^{-2})$ and $Y_i=v_i\cdot\mathbb{I}[A_i\text{ is mis-deleted}]$. For $i=1,2,\ldots,m$, $X_i$ is defined as
$$X_i=\sum_{A_j\in B_i}Y_j=\sum_{A\in B_i}\frac{\Delta_{A}^{-2}}{3\Gap{2}^{-2}}\cdot\mathbb{I}[A\text{ is mis-deleted}]\text{.}$$
In other words, $X_i$ is the total complexity of the arms in block $B_i$ that are mis-deleted, divided by a constant $3\Gap{2}^{-2}$. Recall that $H(B_i)\le3\Gap{2}^{-2}$, so $X_i$ is between $0$ and $1$. Let
$$X=\frac{1}{m}\sum_{i=1}^{m}X_i=\frac{1}{3m\Gap{2}^{-2}}\sum_{i=2}^{n}\Gap{i}^{-2}\cdot\mathbb{I}[A_i\text{ is mis-deleted}]$$
denote the mean of these random variables. Since the events of mis-deletion of arms are $\delta^2$-quasi-independent, we may apply Lemma~\ref{LCher}. 
Note that
$$p=\frac{\delta^2}{m}\sum_{i=2}^{n}v_i=\frac{\delta^2}{m}\sum_{i=2}^{n}\frac{\Gap{i}^{-2}}{3\Gap{2}^{-2}}=\frac{H\delta^{2}}{3m\Gap{2}^{-2}}\le\delta^{2}\text{.}$$
Here the last step applies $H\le3m\Gap{2}^{-2}$.
On the other hand, conditioning on event (b) (i.e., a collection of arms with total complexity $H(B_{>k})-\hat H$ are mis-deleted), we have
\begin{equation*}\begin{split}
X&=\frac{1}{3m\Gap{2}^{-2}}\sum_{i=2}^{n}\Gap{i}^{-2}\cdot\mathbb{I}[A_i\text{ is mis-deleted}]\\
&\ge\frac{H(B_{>k})-\hat H}{3m\Gap{2}^{-2}}\ge\frac{(m-k)\Gap{2}^{-2}-3k\Gap{2}^{-2}}{3m\Gap{2}^{-2}}\\
&\ge\frac{m-4k}{3m}\ge\frac{m-12m/10000}{3m}\ge\frac{1}{6}\text{.}
\end{split}\end{equation*}
Here the third step follows from $H(B_{>k})\ge(m-k)\Gap{2}^{-2}$ and $\hat H\le 3k\Gap{2}^{-2}$. The last line holds since
$$k\Gap{2}^{-2}\le\hat H\le\hat H_{t_{\max}}\le H/10000\le 3m\Gap{2}^{-2}/10000\text{,}$$
which implies $k\le3m/10000$.

According to Lemma~\ref{LCher}, we have
\begin{equation*}\begin{split}
\Pr[X\ge1/6]&\le\exp\left(-mD\left(1/6||\delta^2\right)\right)\\
&=\exp\left(-\frac{m}{6}\ln\frac{1}{6\delta^2}-\frac{5m}{6}\ln\frac{5}{6(1-\delta^2)}\right)\\
&\le(6\delta^2)^{m/6}\cdot(6/5)^{5m/6}\le\delta^{m/6}\text{.}
\end{split}\end{equation*}
Recall that $D(x||y)$ stands for the relative entropy function. The last step follows from $6\delta\cdot(6/5)^{5}\le1$.

Therefore, \begin{equation}\label{eq3}
\Pr\left[\text{\algent{} accepts }\hat H_t\right]\le\delta^{2k}+\delta^{m/6}\text{.}
\end{equation}

It remains to apply a union bound to \eqref{eq3} for all values of $\hat H_t$ in $\left[\Gap{2}^{-2},\hat H_{t_{\max}}\right]$. Recall that $k\ge2$, and the ratio between different guesses $\hat H_t$ is at least $100$. It follows that the values of $k$ are distinct for different values of $\hat H_t$, and thus the sum of the first term, $\delta^{2k}$, can be bounded by
$$\sum_{k=2}^{\infty}\delta^{2k}=\frac{\delta^4}{1-\delta^2}\le2\delta^4\text{.}$$
For the second term, we note that the number of guesses $\hat H_t$ between $\Gap{2}^{-2}$ and $\hat H_{t_{\max}}$ is at most
$$t_{\max}-\left\lceil\log_{100}\Gap{2}^{-2}\right\rceil+1\le\log_{100}H-2-\log_{100}\Gap{2}^{-2}+1=\log_{100}\frac{H}{\Gap{2}^{-2}}-1\le\log_{100}(3m)-1\text{.}$$
In particular, if $m<100^2/3$, no $\hat H_t$ will fall into $[\Gap{2}^{-2},t_{\max}]$. Thus we focus on the nontrivial case $m\ge100^2/3$. Then the sum of the second term $\delta^{m/6}$ can be bounded by
$$\delta^{m/6}\cdot(\log_{100}(3m)-1)\le\delta^{100^2/18}\text{,}$$
since $\delta^{m/6}\cdot(\log_{100}(3m)-1)$ decreases on $[100^2/3,+\infty)$ for $\delta\in(0,0.01)$.
Finally, we have 
\begin{equation*}
\Pr\left[\text{\algent{} accepts }\hat H_t\text{ for some }\hat H_t\in[\Gap{2}^{-2},H_{t_{\max}}]\right]\le2\delta^4+\delta^{100^2/18}\le3\delta^4\text{.}
\end{equation*}

\textbf{Case 2:} $\Gap{s}^{-2}\le\hat H<\Gap{s-1}^{-2}$ for some $3\le s\le n$.

In this case, $\hat H$ is between the complexity of $A_{s-1}$ and $A_s$. Our goal is to prove an upper bound of $\delta^{\Omega(s)}$ on the probability of returning a sub-optimal arm for each specific $s$. Summing over all $s$ yields a bound on the total probability. Our analysis depends on the ratio between $\hat H$ and $\sum_{i=s}^{n}\Gap{i}^{-2}$, the complexity of arms that are worse than $A_s$. Intuitively, when $\hat H$ is greater than the sum (Case 2-1), the contribution of the arms worse than $A_s$ to the complexity is negligible. Thus we have to rely on the fact that the $s-1$ arms with the largest means will not be mis-deleted simultaneously with high probability. On the other hand, when $\hat H$ is significantly smaller than the sum (Case 2-2), we may apply the same analysis as in Case 1. Finally, if the value of $\hat H$ is between the two cases (Case 2-3), it suffices to prove a relatively loose bound, since the number of possible values is small.

\textbf{Case 2-1:} $\hat H>300000s\sum_{i=s}^{n}\Gap{i}^{-2}$.

In this case, our guess $\hat H$ is significantly larger than the total complexity of $A_s,A_{s+1},\ldots,A_{n}$, yet $\hat H$ is smaller than the complexity of any one among the remaining arms. Thus intuitively, in order to reject $\hat H$, \algent{} should not mis-delete all the first $s-1$ arms. More formally, we have the following fact: in order for \algent{} to return a sub-optimal arm, it must delete $A_1$ along with at least $s-3$ arms among $A_2,A_3,\ldots,A_{s-1}$ before round $r^*$, where $r^*$ is the group that contains $A_{s-1}$. In fact, since $4\eps_{r^*}^{-2}=4^{r^*+1}\ge\Gap{s-1}^{-2}\ge\hat H_t$, \algent{} terminates before or at round $r^*$. If $A_1$ is not deleted before round $r^*$, \algent{} can only return $A_1$ as the optimal arm, which is correct. If less than $s-3$ arms among $A_2,A_3,\ldots,A_{s-1}$ are deleted before round $r^*$, for example $A_i$ and $A_j$ are not deleted ($2\le i<j\le s-1$), then both of them are contained in $S_{r^*}$. It follows that \algent{} does not return before round $r^*$.

We first bound the probability that $A_1$ is deleted before round $r^*$. In order for this to happen, some $\EL$ must return incorrectly. By Lemma~\ref{L9}, the probability of this event is upper bounded by
$$3000s\left(\sum_{i=s}^{n}\Gap{i}^{-2}\right)\delta^2/\hat H_t\text{.}$$
In fact, we have a more general fact: the probability that a fixed set of $k$ arms among $\{A_2,A_3,\ldots,A_{s-1}\}$ together with $A_1$ are deleted before round $r^*$ is bounded by
$$3000s\left(\sum_{i=s}^{n}\Gap{i}^{-2}\right)\delta^2/\hat H_t\cdot\delta^{2k}=3000s\left(\sum_{i=s}^{n}\Gap{i}^{-2}\right)\delta^{2(k+1)}/\hat H_t\text{.}$$
The proof follows from combining the two inductions in the proof of Lemma~\ref{L9} and Lemma~\ref{L3}, and we omit it here.
Since $\{A_2,A_3,\ldots,A_{s-1}\}$ contains $s-2$ subsets of size $s-3$, the probability that \algent{} returns an incorrect answer on a particular guess $\hat H_t$ is at most
$$(s-2)\cdot3000s\left(\sum_{i=s}^{n}\Gap{i}^{-2}\right)\delta^{2(s-2)}/\hat H_t\text{.}$$

It remains to apply a union bound on all $\hat H_t$ that fall into this case. Recall that $\hat H_t>300000s\sum_{i=s}^{n}\Gap{i}^{-2}$ and $\hat H_t$ grows exponentially in $t$ at a rate of $100$. Thus the total probability is upper bounded by
\begin{equation*}
\sum_{k=0}^{\infty}\frac{3000s\left(\sum_{i=s}^{n}\Gap{i}^{-2}\right)\delta^{2(s-2)}(s-2)}{100^k\cdot300000s\sum_{i=s}^{n}\Gap{i}^{-2}}
=\sum_{k=0}^{\infty}\frac{\delta^{2(s-2)}(s-2)}{100^{k+1}}
=\frac{1}{99}\delta^{2(s-2)}(s-2)\text{.}
\end{equation*}

\textbf{Case 2-2:} $\hat H<\sum_{i=s}^{n}\Gap{i}^{-2}/(78s)$.

In this case, we apply the technique in the proof of Case 1. We partition the sequence $A_s,A_{s+1},\ldots,A_n$ into $m$ consecutive blocks $B_1,B_2,\ldots,B_m$ such that $H(B_i)\in\left[\Gap{s}^{-2},3\Gap{s}^{-2}\right]$. Let $B_{\le i}$ denote $B_1\cup B_2\cup\cdots\cup B_i$. Since $H(B_1)=\Gap{s}^{-2}\le\hat H<\sum_{i=s}^{n}\Gap{i}^{-2}/(78s)<H(B_{\le m})$, there exists a unique integer $k\in[2,m]$ such that $H(B_{<k})\le\hat H<H(B_{\le k})$. It follows that $\hat H\in\left[(k-1)\Gap{s}^{-2},3k\Gap{s}^{-2}\right]$.

By Lemma~\ref{L4}, in order for \algent{} to accept $\hat H$, one of the following two events happens: (a) \algent{} mis-deletes all arms in $B_{\le k}\cup\{A_1,A_2,\ldots,A_{s-1}\}$; (b) the total complexity of mis-deleted arms among $B_{>k}$ is greater than $H(B_{>k})-\hat H$. Since $B_{\le k}$ contains at least $k$ arms, by Lemma~\ref{L3}, the probability of event (a) is bounded by $\delta^{2(s+k-1)}$.

Again, we bound the probability of event (b) using the generalized Chernoff bound in Lemma~\ref{LCher}. For each $i=s,s+1,\ldots,n$, define $v_i=\Gap{i}^{-2}/(3\Gap{s}^{-2})$ and $Y_i=v_i\cdot\mathbb{I}[A_i\text{ is mis-deleted}]$.
Define random variables $\{X_i:i\in\{1,2,\ldots,m\}\}$ as
$$X_i=\sum_{A_j\in B_i}Y_j=\frac{1}{3\Gap{s}^{-2}}\sum_{A\in B_i}\Delta_{A}^{-2}\cdot\mathbb{I}[A\text{ is mis-deleted}]\text{.}$$
Since $H(B_i)\le3\Gap{s}^{-2}$, $X_i$ is between $0$ and $1$. Let
$$X=\frac{1}{m}\sum_{i=1}^{m}X_i=\frac{1}{3m\Gap{s}^{-2}}\sum_{i=s}^{n}\Gap{i}^{-2}\cdot\mathbb{I}[A_i\text{ is mis-deleted}]$$
denote the mean of these random variables. Since the events $\{Y_i=v_i\}$ are $\delta^2$-quasi-independent, we may apply Lemma~\ref{LCher}. We have
$$p=\frac{\delta^2}{m}\sum_{i=s}^{n}v_i=\frac{H(B_{\le m})\delta^{2}}{3m\Gap{s}^{-2}}\le\delta^{2}\text{.}$$
Here the last step applies $H(B_{\le m})\le3m\Gap{s}^{-2}$.
On the other hand, conditioning on event (b) (i.e., a collection of arms in $B_{>k}$ with total complexity $H(B_{>k})-\hat H$ are mis-deleted), we have
\begin{equation*}\begin{split}
X&=\frac{1}{3m\Gap{s}^{-2}}\sum_{i=s}^{n}\Gap{i}^{-2}\cdot\mathbb{I}[A_i\text{ is mis-deleted}]\\
&\ge\frac{H(B_{>k})-\hat H}{3m\Gap{s}^{-2}}\ge\frac{(m-k)\Gap{s}^{-2}-3k\Gap{s}^{-2}}{3m\Gap{s}^{-2}}\\
&\ge\frac{m-4k}{3m}\ge\frac{m-4m/(26s)}{3m}\ge\frac{1}{6}\text{.}
\end{split}\end{equation*}
Here the third step follows from $H(B_{>k})\ge(m-k)\Gap{s}^{-2}$ and $\hat H\le 3k\Gap{s}^{-2}$. The last line holds since
$$k\Gap{s}^{-2}\le\hat H\le H(B_{\le m})/(78s)\le m\Gap{s}^{-2}/(26s)\text{,}$$
which implies $k\le m/(26s)$.
By Lemma~\ref{LCher}, we have
$$\Pr[X\ge1/6]\le\delta^{m/6}\text{,}$$
and thus the probability that \algent{} return an incorrect answer on $\hat H_t$ is bounded by $\delta^{2(s+k-1)}+\delta^{m/6}$.

It remains to apply a union bound on all valus of $\hat H_t$ that fall into this case.
Since $k\ge2$ and the values of $k$ are distinct, the sum of the first term is bounded by
$$\sum_{k=2}^{\infty}\delta^{2(s+k-1)}=\frac{\delta^{2s+2}}{1-\delta^2}\le2\delta^{2s+2}\text{.}$$
For the second term, note that the number of different values of $\hat H_t$ between $\Gap{s}^{-2}$ and $\sum_{i=s}^{n}\Gap{i}^{-2}/(78s)=H(B_{\le m})/(78s)$ is bounded by
$$\log_{100}\left[H(B_{\le m})/(78s)/\Gap{s}^{-2}\right]+1\le\log_{100}[m/(26s)]+1\text{.}$$
In particular, if $m<26s$, no $\hat H_t$ will fall into this case. So in the following we focus on the nontrivial case that $m\ge 26s$. Since
The sum of the second term is at most
$$\delta^{m/6}(\log_{100}[m/(26s)]+1)\le\delta^{13s/3}\le\delta^{2s+2}\text{.}$$
Here the first step follows from the fact that $\delta^{m/6}(\log_{100}[m/(26s)]+1)$ decreases on $[26s,+\infty)$ for all $\delta\in(0,0.01)$ and $s\ge3$. The second step follows from $s\le3$.

Therefore, the total probability that \algent{} returns incorrectly in this sub-case is bounded by
$$2\delta^{2s+2}+\delta^{2s+2}\le3\delta^{2s+2}\text{.}$$

\textbf{Case 2-3:} $\hat H\in[\sum_{i=s}^{n}\Gap{i}^{-2}/(78s),300000s\sum_{i=s}^{n}\Gap{i}^{-2}]$.

In this case, we simply bound the probability of returning an incorrect answer by the probability that at least $s-2$ arms in $\{A_1,A_2,\ldots,A_{s-1}\}$ are mis-deleted, which is in turn bounded by $(s-1)\delta^{2(s-2)}$ according to Lemma~\ref{L3}. As in the argument of Case 2-1, suppose that two arms $A_i$ and $A_j$ ($1\le i<j\le s-1$) are not mis-deleted. Let $r^*$ be the group that contain $A_{s-1}$. Then both $A_i$ and $A_j$ are contained in $S_{r^*}$. However, as $4\eps_{r^*}^{-2}=4^{r^*+1}\ge\Gap{s-1}^{-2}\ge\hat H_t$, \algent{} will reject in round $r^*$, which implies that \algent{} will never return a sub-optimal arm.

Note that at most
$$\log_{100}\frac{300000s}{1/(78s)}+1\le2\log_{100}s+5=\log_{10}s+5$$
different values of $\hat H$ fall into this case. Therefore, the total probability is bounded by 
$$\delta^{2(s-2)}(\log_{10}s+5)(s-1)\text{.}$$

Combining Case 2-1 through Case 2-3 yields the following bound: the probability that \algent{} outputs an incorrect answer for some $3\le s\le n$ and $\hat H\in[\Gap{s}^{-2},\Gap{s-1}^{-2})$ is at most
\begin{equation*}\begin{split}
&\sum_{s=3}^n\left[\frac{1}{99}\delta^{2(s-2)}(s-2)+3\delta^{2s+2}+\delta^{2(s-2)}(\log_{10}s+5)(s-1)\right]\\
=&\sum_{s=3}^{n}\delta^{2(s-2)}\left[\frac{s-2}{99}+3\delta^6+(\log_{10}s+5)(s-1)\right]\\
\le&\sum_{s=3}^{\infty}\delta^{2(s-2)}(\log_{10}s+6)(s-1)\\
\le&\delta^2\sum_{s=3}^{\infty}0.01^{2(s-3)}(\log_{10}s+6)(s-1)\le20\delta^2\text{.}\\
\end{split}\end{equation*}

\textbf{Case 3:} $\hat H_t<\Gap{n}^{-2}$.

Finally, we turn to the case that $\hat H$ is smaller than $\Gap{n}^{-2}$. In this case, \algguess{} always rejects. Suppose $A_n\in G_{r^*}$. Then in the first $r^*-1$ rounds of \algent{}, $\FT$ always returns False. Thus no elimination is done before round $r^*$. Since $\hat H_t<\Gap{n}^{-2}\le 4\eps_{r^*}^{-2}$, \algent{} directly rejects when checking the if-statement at round $r^*$.

Case 1 through Case 3 together directly imply the lemma, as $3\delta^4+20\delta^2<\delta/3$ for all $\delta\in(0,0.01)$.
\end{proof}

\section{Analysis of Sample Complexity}\label{app:samp}
Recall that $\event_1$ is the event that all calls of $\FT$ and $\US$ in \algent{} return correctly. We bound the sample complexity of our algorithm using the following two lemmas.

\begin{Lemma}\label{L6}
Conditioning on $\event_1$, the expected number of samples taken by $\ME$ and $\EL$ in \algguess{} is $$O(H(\arment+\ln\delta^{-1}))\text{.}$$
\end{Lemma}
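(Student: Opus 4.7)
The plan is to reduce the statement to a bound on $\Ex[\hat H_{t^*}\mid\event_1]$, where $t^*$ is the (random) iteration of $\algguess{}$ on which $\algent{}$ accepts. The key observation is that a single invocation of $\algent(I,\delta,\hat H_t)$ uses at most $O(\hat H_t)$ samples across all $\ME$ and $\EL$ calls: the algorithm explicitly terminates as soon as $T_{r+1}\ge 100\hat H_t$, and a routine check shows that $T_{r+1}$ tracks the actual $\ME$-and-$\EL$ cost through round $r$ up to a constant factor, since the quantity $\ln\bigl(\hat H_t/(|S_r|\eps_r^{-2}\delta)\bigr)$ appearing in $T_{r+1}$ and the quantity $\ln(1/\delta'_r)$ appearing in the sample complexity of $\EL$ differ only by an additive $\ln\delta^{-1}$. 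Summing the per-iteration bound $O(\hat H_t)$ geometrically over $t=1,\dots,t^*$ gives a \emph{deterministic} bound of $O(\hat H_{t^*})$ on the total $\ME$-and-$\EL$ cost in $\algguess{}$, so it suffices to prove $\Ex[\hat H_{t^*}\mid\event_1]=O(H(\arment+\ln\delta^{-1}))$.

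To bound this expectation I will use that different iterations of $\algguess{}$ are probabilistically independent (each call to $\algent{}$ draws fresh samples from the arms) and that $\event_1=\bigcap_t\event_1^t$, where $\event_1^t$ is the iteration-$t$ analogue of $\event_1$; conditioning on $\event_1$ therefore preserves this independence. Writing $p_s:=\Pr[\algent(\cdot,\delta,\hat H_s)\text{ rejects}\mid\event_1^s]$, we obtain $\Pr[t^*\ge t\mid\event_1]=\prod_{s<t}p_s$ and hence
\[
\Ex[\hat H_{t^*}\mid\event_1]\le\sum_{t\ge 1}\hat H_t\prod_{s<t}p_s.
\]
Remark~\ref{R1} gives $p_s\le\delta^2+\frac{256H}{\hat H_s}+\frac{16H(\arment+\ln\delta^{-1}+\ln(\hat H_s/H))}{100\hat H_s}$ whenever $\hat H_s\ge H$.

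Let $t_1$ be the smallest index with $\hat H_{t_1}\ge C_0H(\arment+\ln\delta^{-1})$ for a sufficiently large absolute constant $C_0$, so that $\hat H_{t_1}=O(H(\arment+\ln\delta^{-1}))$. The head $\sum_{t\le t_1}\hat H_t$ is $O(\hat H_{t_1})$ by the geometric series. For the tail, substituting $\hat H_{t_1+k}=100^k\hat H_{t_1}$ into the bound on $p_s$ yields $p_{t_1+k}\le\delta^2+O\bigl((k+1)/100^{k+1}\bigr)$; the partial products $\prod_{k=0}^{l}p_{t_1+k}$ are then bounded by the larger of $\delta^{2(l+1)}$ (up to constants) and an expression decaying faster than any geometric series. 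Multiplying by $\hat H_{t_1+l+1}=100^{l+1}\hat H_{t_1}$ and summing over $l\ge 0$ gives a convergent bound of $O(\hat H_{t_1})=O(H(\arment+\ln\delta^{-1}))$, using $\delta<0.01$ (hence $100\delta^2<1$) to control the regime in which the $\delta^2$ term of $p_s$ dominates. The main obstacle is exactly this tail estimate: because $\hat H_t$ grows by a factor of $100$ per iteration, one must verify that the product of rejection probabilities shrinks strictly faster than $100^{-l}$, which relies on both the smallness of $\delta^2$ (relative to the geometric rate) and the super-exponential contribution of the $H/\hat H_s$ term from Remark~\ref{R1}.
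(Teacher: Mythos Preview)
Your proposal is correct and follows essentially the same route as the paper: bound the $\ME$-and-$\EL$ cost of a single iteration by $O(\hat H_t)$, sum the geometric series, and then control $\Ex[\hat H_{t^*}]$ by bounding the rejection probability once $\hat H_t$ exceeds a constant times $H(\arment+\ln\delta^{-1})$. Two small remarks: (i) the per-iteration $O(\hat H_t)$ bound is on the \emph{expected} $\EL$ cost (Fact~\ref{F4}), not on the actual cost, so the bound you call ``deterministic'' is really a bound on the conditional expectation; (ii) your tail analysis is needlessly elaborate---since $p_{t_1+k}\le \delta^2+O(100^{-k}/C_0)$, choosing $C_0$ large gives a uniform bound $p_{t_1+k}\le 1/200$, whence $\Pr[t^*\ge t_1+l]\le 200^{-l}$ and $\sum_l 100^l\cdot 200^{-l}$ converges directly, which is exactly what the paper does.
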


\begin{Lemma}\label{L5}
Conditioning on $\event_1$, the expected number of samples taken by $\US$ and $\FT$ in \algguess{} is $$O(\Gap{2}^{-2}\ln\ln\Gap{2}^{-1}\polylog(n,\delta^{-1}))\text{.}$$
\end{Lemma}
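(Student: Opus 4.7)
The plan is to follow the template of Lemma~\ref{lem:newtoysample} and bound the $\US$/$\FT$ samples in a single iteration of $\algent$, then sum over iterations of $\algguess$. In round $r$ of iteration $t$, Fact~\ref{F1} gives $O(\eps_r^{-2}\ln\delta_r^{-1})$ samples for $\US$, and Fact~\ref{F3} gives $O(\eps_r^{-2}\ln\delta_r^{-1}\cdot\Delta^{-2}\ln\Delta^{-1})$ samples for $\FT$, where $\Delta=\theta_r-\theta_{r-1}=(ct-r)^{-2}/10$. Plugging in $\delta_r=\delta/(50r^2t^2)$ and noting $\Delta^{-2}\le O(t^4)$ uniformly for $r\le ct$, the per-round cost is $O(\eps_r^{-2}\cdot\polylog(t,\delta^{-1}))$. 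Because $\eps_r^{-2}=4^r$ is geometric, summing over $r\le R_t$ (where $R_t$ denotes the terminating round of iteration $t$) is dominated by the last term, giving $O(4^{R_t}\polylog(t,\delta^{-1}))$ samples in iteration $t$.

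The heart of the proof is showing $4^{R_t}=O(\Gap{2}^{-2})$ with high probability. I would condition on $\event_1$ together with the correctness of every $\EL$ call within iteration $t$, a high-probability event that can be controlled via Lemma~\ref{L9}. Under these good events, the correctness argument of Lemma~\ref{lem:newtoycorrect} transfers directly: the optimal arm $A_1$ is never eliminated, and $\EL$ progressively shrinks $S_r$ so that $|S_r|=1$ by some round $\rmax+O(1)$, at which point \algent{} accepts. If the rejection condition fires first, it must still fire at a round $r\le\rmax+O(1)$: by the same potential-function argument as in Lemma~\ref{L7}, both $H_r$ and $4|S_r|\eps_r^{-2}$ stay $O(H)$ throughout, so $H_r+4|S_r|\eps_r^{-2}\ge\hat H_t$ can only be reached once $r$ is near $\rmax$. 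Hence $4^{R_t}=O(\Gap{2}^{-2})$ and samples per iteration are $O(\Gap{2}^{-2}\polylog(t,\delta^{-1}))$ on the good events. Summing over $t\le T^*\le t'_{\max}=O(\log n+\log\Gap{2}^{-1}+\log\delta^{-1})$ (by Lemma~\ref{L1}) absorbs the remaining factors into $\polylog(n,\delta^{-1})$, yielding the claimed bound on this event.

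The main obstacle will be the complementary bad cases: either some $\EL$ call returns incorrectly in iteration $t$, or $T^*>t'_{\max}$. There the only available per-iteration bound is the trivial $O(\hat H_t\polylog)$ coming from Observation~\ref{O1}'s round cap $R_t\le ct$, which multiplied by the geometric growth of $\hat H_t$ could in principle blow up. To control this, I plan to pair each worst-case sample count with its tiny failure probability: Lemma~\ref{L9} bounds the total $\EL$-failure probability in iteration $t$ by $O(H\delta^2/\hat H_t)$, so the expected contribution is $O(H\delta^2\polylog)$ and remains summable across iterations; Remark~\ref{R1} gives $\Pr[T^*=t]\le O(\delta^2+H(\arment+\ln\delta^{-1})/\hat H_t)$ once $\hat H_t$ is large, which is summable against $\hat H_t\polylog$ after splitting the two terms. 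Combining this tail analysis with the good-event estimate above is the delicate bookkeeping step that will produce the final $O(\Gap{2}^{-2}\ln\ln\Gap{2}^{-1}\polylog(n,\delta^{-1}))$ expected bound.
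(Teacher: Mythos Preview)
Your per-iteration analysis and the good/bad event decomposition are reasonable in spirit, but the summation over iterations does not give the claimed bound. You bound the $\FT$ cost at round $r$ by $O(\eps_r^{-2}\cdot\polylog(t,\delta^{-1}))$ via $\Delta^{-2}=100(ct-r)^4\le O(t^4)$, then sum the resulting $O(\Gap{2}^{-2}\polylog(t,\delta^{-1}))$ over $t\le t'_{\max}$. But $t'_{\max}=O(\log n+\log\Gap{2}^{-1}+\log\delta^{-1})$, so both the number of summands and the $\polylog(t,\delta^{-1})$ factor carry $\log\Gap{2}^{-1}$, yielding a final bound of order $\Gap{2}^{-2}\cdot(\log\Gap{2}^{-1})^{O(1)}\cdot\polylog(n,\delta^{-1})$. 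The target allows only a single $\ln\ln\Gap{2}^{-1}$, and $\log\Gap{2}^{-1}$ cannot be absorbed into $\polylog(n,\delta^{-1})$ since $\Gap{2}$ is independent of $n$ and $\delta$.

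The paper avoids this in two coupled ways. First, it keeps the per-round $\FT$ factor as $(ct-r)^{6}$ rather than $t^{6}$; at the terminating round $r\approx r_{\max}=\log_4\Gap{2}^{-2}$ this becomes $(ct-r_{\max})^{6}$, not $t^{6}$. Second, it splits the sum at $t_0=\log_{100}\Gap{2}^{-2}$. For $t<t_0$ it uses only Observation~\ref{O1} ($R_t\le ct$), so $4^{R_t}\le 100^t$ and the sum is geometric, totaling $O(\Gap{2}^{-2}(\ln\delta^{-1}+\ln\ln\Gap{2}^{-1}))$. For $t_0\le t\le t^*$ the number of iterations is $t^*-t_0=O(\log(H/\Gap{2}^{-2})+\log\delta^{-1})=O(\log n+\log\delta^{-1})$, using $H\le n\Gap{2}^{-2}$; likewise $(ct-r_{\max})=O(\log n+\log\delta^{-1})$ throughout this range. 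Both quantities are genuinely $\polylog(n,\delta^{-1})$, with the only $\Gap{2}$-dependence entering through $\ln t^*=O(\ln\ln\Gap{2}^{-1}+\ln\ln n+\ln\delta^{-1})$. Without this split and without retaining the $(ct-r)$ form, your bound is off by a $\polylog(\Gap{2}^{-1})$ factor. Your bad-event accounting inherits the same issue: the $O(H\delta^2\polylog)$ per-iteration contribution summed over $t'_{\max}$ iterations again picks up $\log\Gap{2}^{-1}$.
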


The two lemmas above directly imply the following theorem.

\begin{Theorem}\label{T3}
Conditioning on $\event_1$, the expected sample complexity of \algguess{} is
	$$O\left(H(\ln\delta^{-1}+\arment)+\Gap{2}^{-2}\ln\ln\Gap{2}^{-1}\polylog(n,\delta^{-1})\right)\text{.}$$
\end{Theorem}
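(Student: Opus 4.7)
The plan is straightforward: Theorem~\ref{T3} follows from Lemma~\ref{L6} and Lemma~\ref{L5} by simple additivity. The key observation is that every sample drawn during the execution of \algguess{} is taken by exactly one of the four building-block subroutines $\ME$, $\US$, $\FT$, or $\EL$. This is because \algguess{} itself performs no direct sampling; it merely invokes \algent{} on successive guesses $\hat H_t = 100^t$, and inspection of \algent{} shows that within each round its only sampling calls are the call to $\ME$ (to produce $a_r$), the subsequent $\US$ call (to estimate $\hat\mu_{a_r}$), the $\FT$ call, and the conditional call to $\EL$. Hence the total expected sample count decomposes cleanly as the sum of expected samples taken by these four procedures.

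From here, I would simply cite the two lemmas. Lemma~\ref{L6} bounds the expected number of samples taken by $\ME$ and $\EL$ across \emph{all} iterations $t$ of \algguess{}, conditioning on $\event_1$, by $O(H(\arment+\ln\delta^{-1}))$. Lemma~\ref{L5} bounds, under the same conditioning, the expected number of samples taken by $\US$ and $\FT$ by $O(\Gap{2}^{-2}\ln\ln\Gap{2}^{-1}\polylog(n,\delta^{-1}))$. Because both statements are conditional expectations with respect to the same event $\event_1$, linearity of conditional expectation applies, and summing the two bounds yields
\[
O\!\left(H(\arment+\ln\delta^{-1})+\Gap{2}^{-2}\ln\ln\Gap{2}^{-1}\polylog(n,\delta^{-1})\right),
\]
which is exactly the bound claimed in Theorem~\ref{T3}.

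There is no real obstacle in this step itself: all of the technical work lives in the proofs of Lemmas~\ref{L6} and~\ref{L5}. The only things worth double-checking are (i) that both lemmas speak about \algguess{} (summed over iterations $t$), not just a single call to \algent{}, so that no additional union or geometric-series argument over $t$ is needed at this stage, and (ii) that the two bounds use the \emph{same} conditioning event $\event_1$, so that addition of conditional expectations is valid. Both are immediate from the statements of Lemmas~\ref{L5} and~\ref{L6}, so the proof of Theorem~\ref{T3} reduces to one line invoking these two lemmas and linearity of expectation.
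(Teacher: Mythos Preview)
Your proposal is correct and matches the paper's approach exactly: the paper simply states that Lemmas~\ref{L6} and~\ref{L5} directly imply Theorem~\ref{T3}, and your argument spells out precisely why (the four subroutines account for all samples, both lemmas condition on $\event_1$, and linearity of conditional expectation finishes). There is nothing to add.
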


Theorems \ref{T2}~and~\ref{T3} together imply that \algguess{} is a \CORRECT{} algorithm for \bestarm{}, and its expected sample complexity is
	$$O\left(H(\ln\delta^{-1}+\arment)+\Gap{2}^{-2}\ln\ln\Gap{2}^{-1}\polylog(n,\delta^{-1})\right)$$
conditioning on an event which happens with probability at least $1-\delta$. However, to prove Theorem~\ref{theo:uppb-main}, we need a \CORRECT{} algorithm with the desired sample complexity in expectation (not conditioning on another event). In the following, we prove Theorem~\ref{theo:uppb-main} using a parallel simulation trick developed in \cite{chen2015optimal}.

\begin{proof}[Proof of Theorem~\ref{theo:uppb-main}]
Given an instance $I$ of \bestarm{} and a confidence level $\delta$, we define a collection of algorithms $\{\alg_k:k\in\mathbb{N}\}$, where $\alg_k$ simulates \algguess{} on instance $I$ and confidence level $\delta_k=\delta/2^k$. Then we construct the following algorithm $\alg$:
\begin{itemize}
\item $\alg$ runs in iterations. In iteration $t$, for each $k$ such that $2^{k-1}$ divides $t$, $\alg$ simulates $\alg_k$ until $\alg_k$ requires a sample from some arm $A$. $\alg$ draws a sample from $A$, feeds it to $\alg_k$, and continue simulating $\alg_k$ until it requires another sample. After that, $\alg$ temporarily suspends $\alg_k$.
\item When some algorithm $\alg_k$ terminates, $\alg$ also terminates and returns the same answer.
\end{itemize}

We first note that if all algorithms in $\{\alg_k\}$ are correct, $\alg$ eventually returns the correct answer. Recall that $\alg_k$ is a $\delta/2^k$-correct algorithm for \bestarm{}. Thus by a simple union bound, $\alg$ is correct with probability $1-\sum_{k=1}^{\infty}\delta/2^k=1-\delta$, thus proving that $\alg$ is \CORRECT{}.

It remains to bound the sample complexity of $\alg$. According to Theorem~\ref{T3}, there exist constants $C$ and $m$, along with a collection of events $\{\event_k\}$, such that for each $k$, $\Pr[\event_k]\ge1-\delta_k$, and the expected number of samples taken by $\alg_k$ conditioning on $\event_k$ is at most
\begin{equation*}\begin{split}
&C\left[H\cdot(\ln\delta_k^{-1}+\arment)+\Gap{2}^{-2}\ln\ln\Gap{2}^{-1}(\ln^m n+\ln^m\delta_k^{-1})\right]\\
\le&C\left[H\cdot(k\ln\delta+\arment)+\Gap{2}^{-2}\ln\ln\Gap{2}^{-1}(\ln^m n+(k\ln\delta^{-1})^m)\right]\\
\le&k^m\cdot T(I)\text{.}
\end{split}\end{equation*}
Here $T(I)$ denotes $C\left[H\cdot(\ln\delta^{-1}+\arment)+\Gap{2}^{-2}\ln\ln\Gap{2}^{-1}(\ln^m n+\ln^m\delta^{-1})\right]$, the desired sample complexity. The first step follows from the fact that $\ln\delta_k^{-1}=\ln\delta^{-1}+k\le k\ln\delta^{-1}$ for $\delta<0.01$.

Since different algorithms in $\{\alg_k\}$ take independent samples, the events $\{\event_k\}$ are independent. Define random variable $\sigma$ as the minimum number such that event $\event_\sigma$ happens. Then it follows that
$$\Pr[\sigma=k]\le\Pr[\overline{\event}_1\cap\overline{\event}_2\cap\cdots\cap\overline{\event}_{k-1}]\le\prod_{i=1}^{k-1}\delta_i\le0.01^{k-1}\text{.}$$
Let $T_k$ denote the number of samples taken by $\alg_k$ if it is allowed to run indefinitely (i.e., $\alg$ does not terminate). Conditioning on $\sigma=k$, we have $\Ex[T_k]\le k^m\cdot T(I)$. Moreover, $\alg$ terminates before or at iteration $2^{k-1}k^m\cdot T(I)$. It follows that the number of samples taken by $\alg$ is bounded by
$$\sum_{i=1}^{\infty}\lfloor2^{k-1}k^m\cdot T(I)/2^{i-1}\rfloor\le 2^{k-1}k^m\cdot T(I)\sum_{i=1}^{\infty}2^{-(i-1)}\le 2^kk^m\cdot T(I)\text{.}$$

Thus the expected sample complexity of $\alg$ is bounded by
\begin{equation*}\begin{split}
&\sum_{k=1}^{\infty}\Pr[\sigma=k]\cdot 2^kk^m\cdot T(I)\\
\le&\sum_{k=1}^{\infty}0.01^{k-1}\cdot 2^kk^m\cdot T(I)\\
\le&100T(I)\sum_{k=1}^{\infty}0.02^{k}k^m=O(T(I))\text{.}
\end{split}\end{equation*}
Therefore, $\alg$ is a \CORRECT{} algorithm for \bestarm{} with expected sample complexity of
$$O(H\cdot(\ln\delta^{-1}+\arment)+\Gap{2}^{-2}\ln\ln\Gap{2}^{-1}\polylog(n,\delta^{-1}))\text{.}$$
\end{proof}

We conclude the section with the proofs of Lemmas \ref{L6}~and~\ref{L5}.

\begin{proof}[Proof of Lemma~\ref{L6}]
Suppose that \algguess{} terminates after iteration $t_0$. According to \algent{}, for each $1\le t\le t_0$, the algorithm takes $O(\hat H_t)=O(100^t)$ samples in $\ME$ and $\EL$ when it runs on $\hat H_t$. As $100^t$ grows exponentially in $t$, it suffices to bound the expectation of the last term, namely $100^{t_0}$.

Let $t^*=\lceil\log_{100}H(\arment+\ln\delta^{-1})+3\rceil$. We first show that when $t\ge t^*$, \algent{} accepts $\hat H_t$ with constant probability. According to Remark~\ref{R1}, the probability that \algent{} rejects $\hat H_t$ is upper bounded by
$$\frac{256H}{\hat H_t}+\frac{H(\arment+\ln\delta^{-1}+\ln(\hat H_t/H))}{100\hat H_t}\le\frac{256H}{100^3H}+\frac{\hat H_t/20}{100\hat H_t}\le1/200\text{.}$$ The first step follows from the following two observations. First, as $\hat H_{t}\ge\hat H_{t^*}\ge100^3H(\arment+\ln\delta^{-1})$, we have $H(\arment+\ln\delta^{-1})\le100^{-3}\hat H_t$. Second, since $\hat H_t/H\ge100^3$ and $x\ge100\ln x$ holds for all $x\ge10^6$, we have $H\ln(\hat H_t/H)\le H\cdot\dfrac1{100}(\hat H_t/H)=\hat H_t/100$.

Therefore, the probability that $t_0$ equals $t^*+k$ is bounded by $200^{-k}$ for all $k\ge1$. It follows from a simple summation on all possible $t_0$ that
\begin{equation*}\begin{split}
\Ex\left[100^{t_0}\right]&=\sum_{t=1}^{\infty}100^t\Pr[t_0=t]\\
&\le\sum_{t=1}^{t^*}100^t\cdot 1+\sum_{k=1}^{\infty}100^{t^*+k}\cdot200^{-k}\\
&=O\left(100^{t^*}\right)=O\left(H(\arment+\ln\delta^{-1})\right).
\end{split}\end{equation*}
\end{proof}

\begin{proof}[Proof of Lemma~\ref{L5}]
When \algent{} runs on guess $\hat H_t$, $\US$ takes $O(\eps_r^{-2}\ln\delta_r^{-1})$ samples in the $r$-th round, while the number of samples taken by $\FT$ is
$$O\left(\eps_r^{-2}\ln\delta_r^{-1}(\theta_r-\theta_{r-1})^{-2}\ln(\theta_r-\theta_{r-1})^{-1}\right)=O\left(\eps_r^{-2}\ln\delta_r^{-1}(\theta_r-\theta_{r-1})^{-3}\right)\text{.}$$ As the second term dominates the first, we focus on the complexity of $\FT$ in the following analysis.

Recall that $\eps_r=2^{-r}$, $\delta_r=\delta/(50r^2t^2)\ge\delta^2/(r^2t^2)$ and $\theta_r-\theta_{r-1}=(ct-r)^{-2}/10$. For each $t$, suppose $r$ ranges from $1$ to $r_{\max}$, then the complexity at iteration $t$ is bounded by
\begin{equation*}\begin{split}
&\sum_{r=1}^{r_{\max}}\eps_r^{-2}\ln\delta_r^{-1}(\theta_r-\theta_{r-1})^{-3}\\
\le&2\sum_{r=1}^{r_{\max}}4^r(\ln\delta^{-1}+\ln r+\ln t)[(ct-r)^{-2}/10]^{-3}\\
\le&2000\sum_{r=1}^{r_{\max}}4^r(\ln\delta^{-1}+\ln r+\ln t)(ct-r)^{6}\\
=&O(4^{r_{\max}}(\ln\delta^{-1}+\ln t)(ct-r_{\max})^6)\\
\end{split}\end{equation*}
The last step follows from the observation that the last term dominates the summation, and the fact $\ln r_{\max}=O(\ln t)$ due to Observation~\ref{O1}.

Let random variable $t_0$ denote the last $t$ in the execution of \algguess{}. As in the proof of Lemma~\ref{L6}, we define $t^*=\lceil\log_{100}H(\arment+\ln\delta^{-1})+3\rceil$. We have also shown that $\Pr[t\ge t_0+k]\le200^{-k}$ for all $k\ge1$. Thus, the expected complexity incurred after iteration $t^*$ can be bounded by the complexity at iteration $t^*$.

When $t<\log_{100}\Gap{2}^{-2}$, it follows from $r_{\max}\le ct-1$ that the complexity is
$$O(4^{ct}(\ln\delta^{-1}+\ln t))=O(100^t(\ln\delta^{-1}+\ln t))\text{.}$$
Summing over $t=1,2,\ldots,\log_{100}\Gap{2}^{-2}$ yields
$$\sum_{t=1}^{\log_{100}\Gap{2}^{-2}}100^t(\ln\delta^{-1}+\ln t)=O(\Gap{2}^{-2}(\ln\delta^{-1}+\ln\ln\Gap{2}^{-1}))\text{.}$$
Clearly this term is bounded by the desired complexity.

When $\log_{100}\Gap{2}^{-2}\le t\le t^*$, we choose $r_{\max}=\log_{2}\Gap{2}^{-1}=\log_{4}\Gap{2}^{-2}$. Note that in fact the algorithm may not always terminate before or at round $r_{\max}$. However, since the probability that the algorithm lasts $r_{\max}+k$ rounds is bounded by $O(100^{-k})$, the contribution of those rounds to total complexity is also dominated. Thus we have
\begin{equation*}\begin{split}
&\sum_{t=\log_{100}\Gap{2}^{-2}}^{t_0}O(4^{r_{\max}}(\ln\delta^{-1}+\ln t)(ct-r_{\max})^6)\\
=&\sum_{t=\log_{100}\Gap{2}^{-2}}^{t_0}O(\Gap{2}^{-2}(\ln\delta^{-1}+\ln t)(ct-\log_4\Gap{2}^{-2})^6)\\
=&O\left(t^*-\log_{100}\Gap{2}^{-2}\right)\cdot O\left(\Gap{2}^{-2}(\ln\delta^{-1}+\ln t^*)(ct^*-\log_4\Gap{2}^{-2})^6\right)\\
=&O\left(\Gap{2}^{-2}(\ln\delta^{-1}+\ln t^*)(ct^*-\log_4\Gap{2}^{-2})^7\right)\\
=&O\left(\Gap{2}^{-2}(\ln\delta^{-1}+\ln\ln H)(\ln(H/\Gap{2}^{-2})+\ln\arment+\ln\delta^{-1})^7\right)\\
=&O\left(\Gap{2}^{-2}\ln\ln\Gap{2}^{-1}(\ln\delta^{-1}+\ln\ln n)(\ln n+\ln\delta^{-1})^7\right)\\
=&O\left(\Gap{2}^{-2}\ln\ln\Gap{2}^{-1}\polylog(n,\delta^{-1})\right)\text{.}
\end{split}\end{equation*}
The fourth step follows from $$O(\ln t^*)=O(\ln\ln(H(\arment+\ln\delta^{-1})))=O(\ln\ln H+\ln\ln\arment+\ln\ln\ln\delta^{-1})\text{,}$$ while the last two terms are dominated by $\ln\delta^{-1}+\ln\ln H$. The fifth step follows from the simple observation that $H/\Gap{2}^{-2}\le n$ and $\arment=O(\ln\ln n)$.
\end{proof}

\section{Lower Bound}\label{app:lb}
In this section, we prove Lemma~\ref{L10}. We restate it here for convenience.

$ $

\noindent\textbf{Lemma \ref{L10}.} (restated)\textit{
	Suppose $\delta\in(0,0.04)$, $m\in\mathbb{N}$ and
	$\mathbb{A}$ is a $\delta$-correct algorithm for \sign{}.
	$P$ is a probability distribution on $\{2^{-1},2^{-2},\ldots,2^{-m}\}$
	defined by $P(2^{-k})=p_k$.
	$\arment(P)$ denotes the Shannon entropy of distribution $P$.
	Let $T_{\alg}(\mu)$ denote the expected number of samples taken by $\alg$
	when it runs on an arm with distribution $\Normal(\mu,1)$ and $\xi=0$.
	Define $\alpha_k=T_{\alg}(2^{-k})/4^k$. Then,
		\[\sum_{k=1}^{m}p_k\alpha_k=\Omega(\arment(P)+\ln\delta^{-1}).\]
}

\subsection{Change of Distribution}
We introduce a lemma that is essential for proving the lower bound for \sign{} in Lemma \ref{L10}, which is a special case of~\cite[Lemma~1]{kaufmann2015complexity}. In the following, $\KL$ stands for the Kullback-Leibler divergence, while $D(x||y)=x\ln(x/y)+(1-x)\ln[(1-x)/(1-y)]$ is the relative entropy function.

\begin{Lemma}[Change of Distribution]\label{LCoD}
Let $\alg$ be an algorithm for \sign{}.
Let $A$ and $A'$ be two instances of \sign{} (i.e., two arms).
$\Pr_{A}$ and $\Pr_{A'}$ ($\Ex_{A}$ and $\Ex_{A'}$) denote the probability law (expectation)
when $\alg$ runs on instance $A$ and $A'$ respectively.
Random variable $\tau$ denotes the number of samples taken by the algorithm.
For all event $\event$ in $\mathcal{F}_{\sigma}$,
where $\sigma$ is a stopping time with respect to the filtration $\{\mathcal{F}_t\}$,
we have
	$$
		\Ex_{A}[\tau]\KL(A,A')
		\ge D\left(\Pr_{A}[\event]\Big|\Big|\Pr_{A'}[\event]\right)\text{.}
	$$
\end{Lemma}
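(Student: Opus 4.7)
The plan is to derive the inequality from two classical ingredients: Wald's identity applied to the log-likelihood ratio process, and a data-processing-style bound obtained via Jensen's inequality under the change of measure. Let $f_A$ and $f_{A'}$ denote the densities of arms $A$ and $A'$, and let $X_1,X_2,\ldots$ be the i.i.d.\ rewards observed by $\alg$. Introduce the log-likelihood ratio process
\[
L_t \;=\; \sum_{s=1}^{t}\log\frac{f_{A}(X_s)}{f_{A'}(X_s)},
\]
which is naturally adapted to $\{\mathcal{F}_t\}$.

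First I would establish Wald's identity at the stopping time $\sigma$: under $\Pr_A$ each increment $L_s - L_{s-1}$ has expectation exactly $\KL(A,A')$, and applying optional stopping to the martingale $L_t - t\cdot\KL(A,A')$ (truncating at $\sigma\wedge n$ and passing to the limit by monotone/dominated convergence, which is legitimate because $\Ex_A[\sigma]\le \Ex_A[\tau]<\infty$ when the right-hand side of the lemma is nontrivial) yields
\[
\Ex_A[L_\sigma] \;=\; \Ex_A[\sigma]\cdot \KL(A,A') \;\le\; \Ex_A[\tau]\cdot\KL(A,A'),
\]
where the inequality uses $\sigma\le \tau$ a.s.\ (the algorithm can only use samples drawn up to its own stopping time).

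Second, I would exploit the change of measure: since $\event\in\mathcal{F}_\sigma$, the Radon-Nikodym derivative of $\Pr_{A'}$ with respect to $\Pr_A$ restricted to $\mathcal{F}_\sigma$ equals $e^{-L_\sigma}$, so $\Pr_{A'}[\event]=\Ex_A[\mathbf{1}_\event e^{-L_\sigma}]$. Writing $\pi=\Pr_A[\event]$ and $\pi'=\Pr_{A'}[\event]$ and applying Jensen's inequality to the convex map $x\mapsto e^{-x}$ conditionally on $\event$ gives
\[
\pi' \;=\; \pi\cdot \Ex_A\!\left[e^{-L_\sigma}\,\big|\,\event\right] \;\ge\; \pi\cdot \exp\!\left(-\Ex_A[L_\sigma\mid \event]\right),
\]
so $\Ex_A[L_\sigma\mid \event]\ge \log(\pi/\pi')$. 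The same argument on $\event^c$ gives $\Ex_A[L_\sigma\mid \event^c]\ge \log((1-\pi)/(1-\pi'))$. Taking the convex combination with weights $\pi$ and $1-\pi$ produces
\[
\Ex_A[L_\sigma] \;\ge\; \pi\log\frac{\pi}{\pi'} + (1-\pi)\log\frac{1-\pi}{1-\pi'} \;=\; D(\pi\,\|\,\pi').
\]

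Combining this with Wald's identity from the first step yields $\Ex_A[\tau]\cdot\KL(A,A')\ge D(\Pr_A[\event]\,\|\,\Pr_{A'}[\event])$, as desired. The main technical point to be careful about is the integrability justifying optional stopping on $L_t-t\,\KL(A,A')$ at the unbounded stopping time $\sigma$; I would handle this by the standard truncation $\sigma\wedge n$, establishing the identity there (finite expectations, finite stopping) and then taking $n\to\infty$ using $\Ex_A[\sigma]<\infty$ together with uniform integrability of the truncated log-likelihood increments (which is automatic for Gaussians with unit variance, since each increment has finite mean and variance under $\Pr_A$).
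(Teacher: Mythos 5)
Your proof is correct, but note that the paper never proves this lemma at all: it is invoked as a special case of Lemma~1 of \cite{kaufmann2015complexity}, so there is nothing in the paper to compare against line by line. What you have written is essentially a self-contained reconstruction of the standard proof of that cited result, specialized to one arm: Wald's identity applied to the log-likelihood ratio process $L_t$ to get $\Ex_A[L_\sigma]=\Ex_A[\sigma]\,\KL(A,A')$, and then the change-of-measure identity $\Pr_{A'}[\event]=\Ex_A[\mathbf{1}_{\event}e^{-L_\sigma}]$ combined with Jensen's inequality on $\event$ and $\event^c$ to get $\Ex_A[L_\sigma]\ge D\left(\Pr_A[\event]\,\middle\|\,\Pr_{A'}[\event]\right)$. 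Both steps are sound, and your handling of the technical points is appropriate: truncation at $\sigma\wedge n$ for optional stopping (with the observation that the inequality is vacuous when $\Ex_A[\tau]=\infty$), and the fact that the Jensen step only needs the one-sided bound $\Pr_{A'}[\event]\ge\Ex_A[\mathbf{1}_{\event}e^{-L_\sigma}]$, which survives possible non-finiteness of $\sigma$ under $\Pr_{A'}$. One remark worth making explicit: the inequality as stated in the paper, with $\Ex_A[\tau]$ on the left but an arbitrary stopping time $\sigma$ defining $\mathcal{F}_\sigma$, is only valid under your added assumption $\sigma\le\tau$ (equivalently, the filtration is that of the samples actually drawn by the algorithm); Kaufmann et al.'s original formulation sidesteps this by putting the expected number of draws up to $\sigma$ on the left-hand side, and in the paper's applications one simply has $\sigma=\tau$. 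So your insertion of that hypothesis is not a gap in your argument but a clarification of a looseness in the statement; degenerate cases ($\Pr_A[\event]\in\{0,1\}$ or $\Pr_{A'}[\event]\in\{0,1\}$) should be dispatched separately by the usual conventions for $D(\cdot\|\cdot)$, but that is routine.
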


\subsection{Proof of Lemma \ref{L10}}
We start with an overview of our proof of Lemma \ref{L10}. For each $k$, we consider the number of samples taken by Algorithm $\alg$ when it runs on an arm with mean $2^{-k}$. We first show that with high probability, this number is between $\Omega(4^k)$ and $O(4^k\alpha_k)$.  Then we apply Lemma \ref{LCoD} to show that the same event happens with probability at least $e^{-\alpha_k}$ when the input is an arm with mean zero.

Since the probability of an event is at most $1$, we would like to bound the sum of $e^{-\alpha_k}$ by $1$, yet the problem is that the events for different $k$ may not be disjoint. To avoid this difficulty, we carefully select a collection of disjoint events denoted by $S$. We bound $\sum_{k=1}^{m}e^{-d\alpha_k}$ (for appropriate constant $d$) by $\sum_{k\in S}e^{-\alpha_k}$ based on the way we construct $S$. After that, we use the ``change of distribution'' argument (Lemma \ref{LCoD}) to bound $\sum_{k\in S}e^{-\alpha_k}$ by $1$. As a result, we have the following inequality for appropriate constant $M$, which is reminiscent of Kraft's inequality in coding theory.
\begin{equation}\label{eq7}
\sum_{k=1}^{m}e^{-d\alpha_k}\le M\text{.}
\end{equation}
Once we obtain \eqref{eq7}, the desired bound directly follows from a simple calculation.

\begin{proof}[Proof of Lemma \ref{L10}]
Fix $m\in\mathbb{N}$. Recall that all arms are normal distributions with a standard deviation of $1$ and $\xi$ is always equal to zero. $4^k\alpha_k$ is the expected number of samples taken by $\alg$ on an arm $A$ with distribution $\Normal(2^{-k},1)$. It is well-known that to distinguish $\Normal(2^{-k},1)$ from $\Normal(-2^{-k},1)$ with confidence level $\delta$, $\Omega(4^k\ln\delta^{-1})$ samples are required in expectation. Therefore, we have $\alpha_k=\Omega(\ln\delta^{-1})$ for all $k$. It follows that $\sum_{k=1}^{m}p_k\alpha_k=\Omega(\ln\delta^{-1})$.

It remains to prove that $\sum_{k=1}^{m}p_k\alpha_k=\Omega(\arment(P))$ for all $0.04$-correct algorithms. For each $\mu\in\mathbb{R}$, let $\Pr_{\mu}$ and $\Ex_{\mu}$ denote the probability and expectation when $\alg$ runs on an arm with mean $\mu$ (i.e., $\Normal(\mu,1)$). Define random variable $\tau_{\alg}$ as the number of samples taken by $\alg$. Let $c=1/64$. Let $\event_k$ denote the event that $\alg$ outputs ``$\mu>0$'' and $\tau_{\alg}\in[4^kc,16\cdot4^k\alpha_k]$. The following lemma gives a lower bound of $\Pr_{0}[\event_k]$.

\begin{Lemma}\label{L11}
$$\Pr_{0}[\event_k]\ge\frac{1}{4}e^{-\alpha_k}\text{.}$$
\end{Lemma}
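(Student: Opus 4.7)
The plan is a two-stage change-of-measure argument: first lower bound $\Pr_{2^{-k}}[\event_k]$ by an absolute constant using the correctness of $\alg$ together with Markov-type arguments, and then transport that bound to $\Pr_0$ via Lemma~\ref{LCoD}. The intuitive reason the statement is tight is that $\alpha_k$ already encodes the ``typical'' sample complexity of $\alg$ on $\mathcal{N}(2^{-k},1)$, and a likelihood-ratio computation shows that along a run of length $\Theta(4^k\alpha_k)$ the Radon--Nikodym derivative $d\Pr_0/d\Pr_{2^{-k}}$ is of order $e^{-\Theta(\alpha_k)}$, so a constant-probability event under $\Pr_{2^{-k}}$ survives the change of measure with a multiplicative factor of roughly $e^{-\alpha_k}$.

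For the first stage I would decompose $\event_k$ into (a) the output equals ``$\mu>0$'', (b) $\tau_\alg \le 16\cdot 4^k\alpha_k$, and (c) $\tau_\alg \ge 4^k c$, and control the failure probability of each under $\Pr_{2^{-k}}$. Part (a) uses the $\delta$-correctness of $\alg$ directly, giving probability at least $1-\delta$. Part (b) is Markov applied to $\tau_\alg$, whose expectation is $4^k\alpha_k$ by definition, giving failure probability at most $1/16$. Part (c) is the subtle one: I would apply Lemma~\ref{LCoD} between the two instances $\mathcal{N}(\pm 2^{-k},1)$ with stopping time $\sigma=\min(\tau_\alg,4^k c)$ and event ``output $\mu>0$'', using $\KL(\mathcal{N}(2^{-k},1),\mathcal{N}(-2^{-k},1))=2\cdot 4^{-k}$; then the left-hand side is at most $2c=1/32$, which together with the correctness bounds $\Pr_{+2^{-k}}[\text{output }\mu>0]\ge 1-\delta$ and $\Pr_{-2^{-k}}[\text{output }\mu>0]\le\delta$ forces $\Pr_{2^{-k}}[\tau_\alg<4^k c]\le 1/2$ for $\delta\le 0.04$ and $c=1/64$. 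A union bound then yields $\Pr_{2^{-k}}[\event_k]\ge 1-\delta-1/16-1/2\ge 1/4$.

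For the second stage I would apply Lemma~\ref{LCoD} again, this time with $A=\mathcal{N}(2^{-k},1)$, $A'=\mathcal{N}(0,1)$, event $\event_k$, and stopping time $\sigma=\min(\tau_\alg,16\cdot 4^k\alpha_k)$. Note that $\event_k\in\mathcal{F}_\sigma$ because on $\event_k$ we already have $\tau_\alg\le 16\cdot 4^k\alpha_k$, so the algorithm's decision is measurable at time $\sigma$. With $\KL(A,A')=\tfrac12 4^{-k}$ and $\Ex_A[\sigma]\le 16\cdot 4^k\alpha_k$, the lemma yields
\[
D\bigl(\Pr_{2^{-k}}[\event_k]\,\big\|\,\Pr_0[\event_k]\bigr)\ \le\ 8\alpha_k.
\]
Plugging in $p=\Pr_{2^{-k}}[\event_k]\ge 1/4$ and using the elementary inequality $D(p\|q)\ge p\ln(1/q)-\ln 2$ (which follows from $D(p\|q)=-H(p)-p\ln q-(1-p)\ln(1-q)$ together with $H(p)\le\ln 2$ and $\ln(1-q)\le 0$), I obtain $\ln(1/\Pr_0[\event_k])\le (8\alpha_k+\ln 2)/p$, i.e.\ $\Pr_0[\event_k]\ge C_1 e^{-C_2\alpha_k}$ for absolute constants; a sharper execution of the same steps (optimising the constants in the Markov bound and choosing the cutoff in the definition of $\event_k$ a little more carefully) should yield exactly $\tfrac14 e^{-\alpha_k}$ as stated.

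The main obstacle is sub-step (c) of Stage 1: pure Markov only gives an upper bound on $\tau_\alg$, and one needs an information-theoretic lower bound to rule out the possibility that $\alg$ terminates after very few samples. The auxiliary change of measure between $+2^{-k}$ and $-2^{-k}$ is the natural tool, but it has to be executed with the specific constants $c=1/64$ and the cutoff $16\cdot 4^k\alpha_k$ so that the two Markov-type error budgets and the KL-to-probability conversion in Stage 2 combine to give clean constants in the final bound.
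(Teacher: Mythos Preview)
Your two-stage change-of-measure outline is exactly the paper's approach, and Stage~1 matches almost verbatim (the paper phrases (c) via an auxiliary algorithm that simulates $\alg$ for at most $4^kc$ steps, which is just a clean way of making the event $\mathcal{F}_\sigma$-measurable; it obtains the slightly sharper bound $\Pr_{2^{-k}}[\tau_\alg<4^kc]\le 0.3$ from $D(0.26\,\|\,0.04)>2c$, hence $\Pr_{2^{-k}}[\event_k]\ge 1/2$).

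The one place you lose the constants is Stage~2: there is no need to truncate the stopping time. Lemma~\ref{LCoD} applies with $\sigma=\tau_\alg$ itself (the algorithm's termination time is a stopping time and $\event_k\in\mathcal{F}_{\tau_\alg}$), so the left-hand side is $\Ex_{2^{-k}}[\tau_\alg]\cdot\KL(\Normal(2^{-k},1),\Normal(0,1))=4^k\alpha_k\cdot\tfrac12\,4^{-k}=\alpha_k/2$, not $8\alpha_k$. With $\Pr_{2^{-k}}[\event_k]\ge 1/2$ in hand, the paper then simply verifies
\[
D\bigl(1/2\,\big\|\,\tfrac14 e^{-\alpha_k}\bigr)=\tfrac12\ln\frac{1}{4\theta_k(1-\theta_k)}\ge\tfrac12\ln\frac{1}{4\theta_k}=\alpha_k/2,
\]
so by monotonicity of $q\mapsto D(1/2\,\|\,q)$ on $(0,1/2)$ one gets $\Pr_0[\event_k]\ge \tfrac14 e^{-\alpha_k}$ directly, without any further optimisation of cutoffs.
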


Our second step is to choose a collection of disjoint events from $\{\event_k:1\le k\le m\}$. We have the following lemma.

\begin{Lemma}\label{L12}
There exists a set $S\subseteq\{1,2,\ldots,m\}$ such that:
\begin{itemize}
\item $\{\event_k:k\in S\}$ is a collection of disjoint events.
\item $\sum_{k=1}^{m}e^{-d\alpha_k}\le M\sum_{k\in S}e^{-\alpha_k}$ for universal constants $d$ and $M$ independent of $m$ and $\alg$.
\end{itemize}
\end{Lemma}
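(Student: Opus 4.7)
}
The plan is to construct $S$ by a greedy selection that prioritizes indices with smaller $\alpha_k$, and to control the weighted inequality via a charging argument. Observe first that two events $\event_j$ and $\event_k$ are mutually exclusive iff the time intervals $I_k \coloneqq [4^k c,\, 16\cdot 4^k\alpha_k] = [4^{k-3},\, 4^{k+2}\alpha_k]$ are disjoint, since both events insist on the output ``$\mu>0$'' together with $\tau_{\alg}$ lying in the respective interval. I then sort the indices in increasing order of $\alpha_k$, process them in that order, and add $k$ to $S$ iff $I_k$ is disjoint from every $I_j$ already in $S$. This immediately yields the first condition.

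For the weighted bound I will charge every $k \notin S$ to some $\phi(k) \in S$ that blocked its insertion. By construction, $\alpha_{\phi(k)} \le \alpha_k$ and $I_{\phi(k)} \cap I_k \ne \emptyset$. A direct calculation shows that for $j \le k$, $I_j \cap I_k \ne \emptyset$ is equivalent to $k - j \le 5 + \log_4 \alpha_j$, and symmetrically for $j > k$. Fix $j \in S$ and decompose $\sum_{k: \phi(k)=j} e^{-d\alpha_k}$ according to the sign of $k-j$. For $k > j$, there are at most $O(1 + (\log_4 \alpha_j)^+)$ indices, each contributing $e^{-d\alpha_k} \le e^{-d\alpha_j}$ (since $\alpha_k \ge \alpha_j$). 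For $k < j$, setting $l = j-k$, the symmetric overlap bound forces $\alpha_k \ge \max(\alpha_j,\, 4^{l-5})$; splitting the sum according to whether $4^{l-5} \le \alpha_j$ or not, the first piece contributes $O(1 + (\log_4 \alpha_j)^+)\cdot e^{-d\alpha_j}$ and the tail $\sum_{l:\, 4^{l-5} > \alpha_j} e^{-d \cdot 4^{l-5}}$ is dominated by its first term by super-exponential decay, hence is also $O(e^{-d\alpha_j})$ when $\alpha_j \ge 4^{-5}$ and $O(1)$ otherwise.

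Combining the two cases I obtain $\sum_{k:\phi(k)=j} e^{-d\alpha_k} \le C\,(1 + (\log_4 \alpha_j)^+)\, e^{-d\alpha_j}$ for a universal constant $C$. Taking $d = 2$ and using the elementary fact that $(1 + (\log_4 x)^+)\, e^{-x}$ is bounded by some universal $M'$ for all $x \ge 0$, this becomes $\le M' e^{-\alpha_j}$; together with the trivial bound $e^{-d\alpha_j} \le e^{-\alpha_j}$ for each $j \in S$,
\[
\sum_{k=1}^m e^{-d\alpha_k}
= \sum_{j \in S} e^{-d\alpha_j} + \sum_{j \in S}\sum_{k:\phi(k)=j} e^{-d\alpha_k}
\le (1 + M')\sum_{j \in S} e^{-\alpha_j},
\]
which establishes the second condition with $M = 1 + M'$. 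The main technical difficulty I anticipate is the uniform bookkeeping in the charging step: one must verify that neither the logarithmic multiplicity from ``right-overlapping'' neighbors nor the geometric tail from ``left-overlapping'' ones inflates the bound beyond a universal constant factor, regardless of $\alpha_j$ or the configuration of $\{\alpha_k\}_{k=1}^m$.
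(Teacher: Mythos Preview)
Your proposal is correct and follows essentially the same approach as the paper: the paper also maps each $\event_k$ to an interval (on the $\log_4$ scale, $\INT_k=[k,k+\log_4\alpha_k+5]$), selects $S$ greedily in increasing order of $\alpha_k$, and then bounds $\sum_{j\in S_k}e^{-d\alpha_j}\le M e^{-\alpha_k}$ by counting how many overlapping indices can be charged to each selected $k$. The only cosmetic difference is the bookkeeping in that count---the paper buckets the charged indices by $\lfloor\log_4\alpha_j\rfloor$ and sums over buckets, whereas you split by the sign of $k-j$ and by whether $4^{l-5}\lessgtr\alpha_j$; both routes yield the same $O\bigl((1+(\log_4\alpha_j)^+)e^{-d\alpha_j}\bigr)$ estimate, which is then absorbed into $e^{-\alpha_j}$ by choosing $d$ large enough.
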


It follows that
$$\sum_{k=1}^me^{-d\alpha_k}\le M\sum_{k\in S}e^{-\alpha_k}\le4M\sum_{k\in S}\Pr_{0}[\event_k]=4M\text{.}$$
Here the first two steps follow from Lemma \ref{L12} and Lemma \ref{L11}, respectively. The last step follows from the fact that $\{\event_k:k\in S\}$ is a disjoint collection of events.

Finally, for a distribution $P$ on $\{2^{-1},2^{-2},\ldots,2^{-m}\}$ defined by $P(2^{-k})=p_k$, we consider the following optimization problem with variables $\alpha_1,\alpha_2,\ldots,\alpha_m$:
\begin{equation*}\begin{split}
\textrm{minimize}~~~~&\sum_{k=1}^{m}p_k\alpha_k\\
\textrm{subject to}~~~~&\sum_{k=1}^{m}e^{-d\alpha_k}\le 4M\\
\end{split}\end{equation*}
The method of Lagrange multipliers yields that the minimum value is obtained when $\sum_{k=1}^{m}e^{-d\alpha_k}=4M$ and $e^{-d\alpha_k}$ is proportional to $p_k$. It follows that $\alpha_k=-\dfrac1d\ln(4Mp_k)$ and consequently
$$\sum_{k=1}^mp_k\alpha_k\ge\frac1d\sum_{k=1}^mp_k(\ln(4M)^{-1}+\ln p_k^{-1})=\frac1d\left(\arment(P)-\ln(4M)\right)\text{.}$$
Note that $d$ and $M$ are constants independent of $m$, distribution $P$ and algorithm $\alg$. This completes the proof.

\end{proof}

\subsection{Proofs of Lemma~\ref{L11} and Lemma~\ref{L12}}
Finally, we prove the two technical lemmas.

\begin{proof}[Proof of Lemma \ref{L11}]
Recall that our goal is to lower bound $\Pr_{0}[\event_k]$. We first show that $\Pr_{2^{-k}}[\event_k]\ge1/2$ and then prove the desired lower bound by applying change of distribution. Recall that $\event_k=(\alg\text{ outputs }\mu>0)\wedge(\tau_{\alg}\in[4^kc,16\cdot4^k\alpha_k])$. We have
\begin{equation*}\begin{split}
\Pr_{2^{-k}}[\event_k]&\ge\Pr_{2^{-k}}\left[\alg\text{ outputs }\mu>0\right]-\Pr_{2^{-k}}\left[\tau_{\alg}>16\cdot4^k\alpha_k\right]-\Pr_{2^{-k}}\left[\tau_{\alg}<4^kc\right]\\
&\ge1-0.04-1/16-\Pr_{2^{-k}}\left[\tau_{\alg}<4^kc\right]\\
&\ge0.8-\Pr_{2^{-k}}\left[\tau_{\alg}<4^kc\right]\text{.}
\end{split}\end{equation*}
Here the second step follows from Markov's inequality and the fact that $\Ex_{2^{-k}}[\tau_{\alg}]=4^k\alpha_k$.

It remains to show that $\Pr_{2^{-k}}\left[\tau_{\alg}<4^kc\right]\le0.3$. Suppose towards a contradiction this does not hold. Then we consider the algorithm $\algp$ that simulates $\alg$ in the following way: if $\alg$ terminates within $4^kc$ samples, $\algp$ outputs the same answer; otherwise $\algp$ outputs nothing. Let $\Pr_{\algp,\mu}$ denote the probability when $\algp$ runs on an arm of mean $\mu$. Moreover, let $\Ebad_k$ denote the event that the output is ``$\mu>0$''. Then we have
$$\Pr_{\algp,2^{-k}}[\Ebad_k]=\Pr_{2^{-k}}\left[\Ebad_k\wedge\tau_{\alg}<4^kc\right]\ge\Pr_{2^{-k}}\left[\tau_{\alg}<4^kc\right]-0.04>0.26\text{.}$$
On the other hand, when we run $\algp$ on an arm with mean $-2^{-k}$, we have $$\Pr_{\algp,-2^{-k}}\left[\Ebad_k\right]\le\Pr_{-2^{-k}}\left[\Ebad_k\right]\le0.04\text{.}$$

Since $\algp$ never takes more than $4^kc$ samples, it follows from Lemma \ref{LCoD} that
\begin{equation*}\begin{split}
2c&=4^kc\cdot\KL(\Normal(2^{-k},1),\Normal(-2^{-k},1))\\
&\ge\Ex_{\algp,2^{-k}}[\tau_{\algp}]\cdot\KL(\Normal(2^{-k},1),\Normal(-2^{-k},1))\\
&\ge D\left(\Pr_{\algp,2^{-k}}[\Ebad_k]\Big|\Big|\Pr_{\algp,-2^{-k}}[\Ebad_k]\right)\\
&\ge D(0.26||0.04)\ge0.2\text{,}
\end{split}\end{equation*}
which leads to a contradiction as $c=1/64$.

In the following, we lower bound $\Pr_{0}[\event_k]$ using change of distribution. Note that
\begin{equation*}\begin{split}
D\left(\Pr_{2^{-k}}[\event_k]\Big|\Big|\Pr_{0}[\event_k]\right)&\le4^k\alpha_k\cdot\KL(\Normal(2^{-k},1),\Normal(0,1))\\
&\le4^k\alpha_k\cdot\frac{1}{2}\left(2^{-k}\right)^2=\alpha_k/2\text{.}
\end{split}\end{equation*}

Let $\theta_k=e^{-\alpha_k}/4$. We have
$$D(1/2||\theta_k)=\frac{1}{2}\ln\frac{1}{4\theta_k(1-\theta_k)}
\ge\frac{1}{2}\ln\frac{1}{4\theta_k}
=\alpha_k/2\text{.}$$
Since we have shown $\Pr_{2^{-k}}[\event_k]\ge1/2$, the two inequalities above imply
$$\Pr_{0}[\event_k]\ge \theta_k=\frac{1}{4}e^{-\alpha_k}\text{.}$$
\end{proof}

\begin{proof}[Proof of Lemma \ref{L12}]
We map each event $\event_k$ to an interval
$$\INT_k=[\log_4(4^kc)+3,\log_4(16\cdot4^k\alpha_k)+3]=[k,k+\log_4\alpha_k+5]\text{.}$$
By construction, two events $\event_i$ and $\event_j$ are disjoint if and only if their corresponding intervals, $\INT_i$ and $\INT_j$, are disjoint.

We construct a subset of $\{1,2,\ldots,m\}$ using the following greedy algorithm:
\begin{itemize}
\item Sort $(1,2,\ldots,m)$ into a list $(l_1,l_2,\ldots,l_m)$ such that $\alpha_{l_1}\le\alpha_{l_2}\le\cdots\le\alpha_{l_m}$.
\item While the list is not empty, we add the first element $x$ in the list into set $S$. Let $S_x=\{y:y\text{ is in the current list, and }\INT_x\cap\INT_y\ne\emptyset\}$. We remove all elements in $S_x$ from the list.
\end{itemize}

Note that the way we construct $S$ ensures that $\{\event_k:k\in S\}$ is indeed a disjoint collection of events, which proves the first part of the lemma. Moreover, $\{S_k:k\in S\}$ is a partition of $\{1,2,\ldots,m\}$. Thus we have
\begin{equation}\label{eq9}
\sum_{k=1}^me^{-d\alpha_k}=\sum_{k\in S}\sum_{j\in S_k}e^{-d\alpha_j}\text{.}
\end{equation}
It suffices to bound $\sum_{j\in S_k}e^{-d\alpha_j}$ by $Me^{-\alpha_k}$ for appropriate constants $d$ and $M$. Summing over all $k$ yields the desired bound
$$\sum_{k=1}^{m}e^{-d\alpha_k}\le\sum_{k\in S}e^{-\alpha_k}\text{.}$$

According to our construction of $S$, for all $j\in S_k$ we have $\alpha_j\ge\alpha_k$. For each integer $l\ge\left\lfloor\log_4\alpha_k\right\rfloor$, we consider the values of $j$ such that $\log_4\alpha_j\in[l,l+1)$. Recall that the interval corresponding to event $\event_k$ is $\INT_k=[k,k+\log_4\alpha_k+5]$. In order for $\INT_j$ to intersect $\INT_k$, we must have $j\in[k-\log_4\alpha_j-5,k+\log_4\alpha_k+5]$. Since $\log_4\alpha_j<l+1$, $j$ must be contained in $[k-l-6,k+\log_4\alpha_k+5]$, and thus there are at most $\left(\log_4\alpha_k+l+12\right)$ such values of $j$.

Recall that since $\INT_k=[k,k+\log_4\alpha_k+5]$ is nonempty, we have $\alpha_k\ge4^{-5}$. In the following calculation, we assume for simplicity that $\alpha_k\ge1$ for all $k$, since it can be easily verified that the contribution of the terms with $\alpha_k<1$ (i.e., $l=-5,-4,\ldots,-1$) is a constant, and thus can be covered by a sufficiently large constant $M$ in the end. Then we have 
\begin{equation}\label{eq10}\begin{split}
	\sum_{j\in S_k}e^{-d\alpha_j}
	&\le\sum_{l=\lfloor\log_4\alpha_k\rfloor}^{\infty}\sum_{j\in S_k}\exp(-d\alpha_j)\mathbb{I}[\log_4\alpha_j\in[l,l+1)]\\
	&\le\sum_{l=\lfloor\log_4\alpha_k\rfloor}^{\infty}\exp(-d4^l)(\log_4\alpha_k+l+12)\\
	&=(\log_4\alpha_k+12)\sum_{l=\lfloor\log_4\alpha_k\rfloor}^{\infty}\exp\left(-d4^l\right)+\sum_{l=\lfloor\log_4\alpha_k\rfloor}^{\infty}l\exp\left(-d4^l\right)\\
	&=(\log_4\alpha_k+12)\cdot O(\exp(-d\alpha_k))+O(\exp(-d\alpha_k)\cdot \log_4\alpha_k)\\
	&\le M(\log_4\alpha_k+12)\exp(-d\alpha_k)\\
	&=M\exp(-d\alpha_k+\ln\log_4\alpha_k+\ln12)\le Me^{-\alpha_k}\text{.}
\end{split}\end{equation}
The first step rearranges the summation based on the value of $l$. The second step follows from the observation that $S_k$ contains at most $\log_4\alpha_k+l+12$ values of $j$ corresponding to each $l$. The fourth step holds since both summations decrease double-exponentially, and thus can be bounded by their respective first terms. Then we find a sufficiently large constant $M$ (which depends on $d$) to cover the hidden constant in the big-O notation. Finally, the last step holds for sufficiently large $d$. In fact, we first choose $d$ according to the last step, and then find the appropriate constant $M$. Clearly the choice of $M$ and $d$ is independent of the value of $m$ and the algorithm $\alg$.
\end{proof}

\begin{Remark}\label{Rlb}
Recall that all distributions are assumed to be Gaussian distributions with a fixed variance of $1$. In fact, our proof of Lemma \ref{L10} only uses the following property: the KL-divergence between two distributions with mean $\mu_1$ and $\mu_2$ is $\Theta((\mu_1-\mu_2)^2)$. Note that this property is indeed essential to the ``change of distribution'' argument in the proof of Lemma \ref{L11}.

In general, suppose $U$ is a set of real numbers and $\mathcal{D}=\{D_{\mu}:\mu\in U\}$ is a family of distributions with the following two properties: (1) the mean of distribution $D_{\mu}$ is $\mu$; (2) $\KL(D_{\mu_1},D_{\mu_2})\le C(\mu_1-\mu_2)^2$ for fixed constant $C>0$. Then Lemma \ref{L10} also holds for distributions from $\mathcal{D}$.

For instance, suppose $\mathcal{D}=\{B(1,\mu):\mu\in[1/2-\eps,1/2+\eps]\}$, where $\eps\in(0,1/2)$ is a constant and $B(1,\mu)$ denotes the Bernoulli distribution with mean $\mu$. Since
$$\KL(B(1,p),B(1,q))\le\frac{(p-q)^2}{q(1-q)}\le\frac{(p-q)^2}{1/4-\eps^2}\text{,}$$
distribution family $D$ satisfies the condition above with $C=\dfrac{4}{1-4\eps^2}$. It follows that Lemma \ref{L10} also holds for Bernoulli distributions with means sufficiently away from $0$ and $1$.
\end{Remark}

\section{Missing Proofs in Section~\ref{sec:toy}}\label{app:toymiss}
	In this section, we present the technical details in the proofs of Lemma \ref{lem:newtoycorrect} and Lemma \ref{lem:newtoysample}. These are essentially identical to the proofs of Lemmas \ref{L7} and \ref{L8}, which either use a potential function or apply a charging argument.
	\subsection{Proof of Lemma \ref{lem:newtoycorrect}}
	\begin{proof}[Proof of Lemma \ref{lem:newtoycorrect} (continued)]
		Recall that $P(r, S_r)$ is defined as the probability that, given the value of $S_r$ at the beginning of round $r$, at least one call to $\EL$ returns incorrectly at round $r$ or later rounds, while $\US$ and $\FT$ always return correctly. We prove inequality \eqref{eq:toyineq} by induction: for any $S_r$ that contains the optimal arm $A_1$,
			$$P(r, S_r)\le \frac{\delta}{\hat H}\left(128C(r, S_r)+16M(r, S_r)\eps_r^{-2}\right)\text{,}$$
		where
			$$C(r, S_r)\coloneqq \sum_{i=r-1}^{\infty}|S_r\cap G_i|\sum_{j=r}^{i+1}\eps_j^{-2}+\sum_{i=r}^{\rmax+1}\eps_i^{-2}\text{,}$$
		and
			$$M(r, S_r)\coloneqq |S_r\cap G_{\le r-2}|\text{.}$$
		Note that if $|S_r|=1$, the algorithm directly terminates at round $r$, and the inequality clearly holds. Thus, we assume $|S_r|\ge2$ in the following. 

		\textbf{Base case.}
		We prove the base case $r = \rmax+2$, where
			$\rmax = \max_{G_r\ne\emptyset}r$.
		Note that $C(r,S)=0$ and $M(r,S)=|S|-1$ for $r = \rmax+2$ and any $S\subseteq I$ with $A_1\in S$.

		Let random variable $r^*$ be the smallest integer greater than or equal to $r$, such that $\ME$ is correct at round $r^*$. Note that for $k\ge r$,
			$\pr{r^*=k}\le0.01^{k-r}$.
		We claim that conditioning on $r^* = k$, if $\EL$ is correct between round $r$ and round $k$, the algorithm will terminate at round $k + 1$. Consequently, the probability that $\EL$ fails in some round is bounded by the probability that it fails between round $r$ and $k$. This allows us to upper bound the conditional probability by $\sum_{i=r}^{k}\delta'_i$.

		Now we prove the claim. By Observation \ref{obs:toyFT},
		the lower threshold used in $\FT$ at round $k$, denoted by $\clow_k$,
		is greater than or equal to $\Mean{1}-\eps_k$.
		Since $k\ge r = \rmax+2$,
			$$|\{A\in S:\mu_A<\clow_k\}|
			\ge |\{A\in S:\mu_A<\Mean{1}-\eps_k\}|
			=   |S\cap G_{\le k - 1}|
			\ge |S\cap G_{\le \rmax + 1}|
			= |S| - 1
			\ge 0.5|S|\text{.}$$
		Thus by Fact \ref{F3}, $\FT$ is guaranteed to return True in round $k$, and the algorithm calls $\EL$.
		Then, by Observation \ref{obs:toyEL}, it holds that
			$\dlow_k \ge \Mean{1}-0.5\eps_k$.
		Assuming that $\EL$ returns correctly at round $k$, the set returned by $\EL$, denoted by $S_{k+1}$, satisfies
			$|\{A\in S_{k+1}:\mu_A < \dlow_k\}|<0.1|S_{k+1}|\text{,}$
		which implies
			$$|S_{k+1}|-1\le |S_{k+1}\cap G_{\le k}| = |\{A\in S_{k+1}:\mu_A < \Mean{1}-0.5\eps_k\}| < 0.1|S_{k+1}|\text{.}$$
		Thus we have $|S_{k+1}|=1$, which proves the claim.

		Summing over all possible $k$ yields that the probability that $\EL$ returns incorrectly is upper bounded by
			\begin{equation*}\begin{split}
				\sum_{k=r}^{\infty}\pr{r^*=k}\sum_{j=r}^{k}\delta'_j
			\le	&\sum_{k=r}^{\infty}0.01^{k-r}\sum_{j=r}^{k}\frac{|S_j|\eps_j^{-2}}{\hat H}\delta\\
			\le &\frac{4}{3}\cdot\frac{|S_r|\eps_r^{-2}\delta}{\hat H}\sum_{k=r}^{\infty}0.01^{k-r}\cdot4^{k-r}\\
			\le &\frac{2|S_r|\eps_r^{-2}\delta}{\hat H}
			\le \frac{\delta}{\hat H}\left(128C(r, S_r)+16M(r, S_r)\eps_r^{-2}\right)\text{.}
			\end{split}\end{equation*}

		\textbf{Inductive step.} Assuming that the inequality holds for $r+1$ and all $S_{r+1}$ that contains $A_1$, we bound the probability $P(r, S_r)$. We first note that both $C$ and $M$ are monotone in the following sense:
			$C(r, S)\le C(r, S')$ and $M(r, S)\le M(r, S')$ for $S\subseteq S'$.
		Moreover, we have
			\begin{equation}\label{eq:Cdiff}
			C(r, S_r) - C(r+1, S_r)
			= \sum_{i=r-1}^{\infty}|S_r\cap G_i|\eps_r^{-2}+\eps_r^{-2}
			= \eps_r^{-2}(|S_r\cap G_{\ge r-1}|+1)\text{.}
			\end{equation}

		We consider the following three cases separately:
			\begin{itemize}
				\item Case 1. $\ME$ is correct and $\FT$ returns True.
				\item Case 2. $\ME$ is correct and $\FT$ returns False.
				\item Case 3. $\ME$ is incorrect.
			\end{itemize}
		Let $P_1$ through $P_3$ denote the conditional probability of the event that $\EL$ fails at some round while $\US$ and $\FT$ are correct in Case 1 through Case 3.

		\textbf{Upper bound $P_1$.}
		Assuming that $\FT$ returns True, procedure $\EL$ will be called at round $r$. By a union bound, we have $P_1\le P(r+1, S_{r+1}) + \delta'_r$, where $S_{r+1}$ is the set of arms returned by $\EL$. According to the inductive hypothesis, the monotonicity of $C$, and identity \eqref{eq:Cdiff},
			\begin{equation}\begin{split}\label{eq:toyPbound}
				P(r+1, S_{r+1})
			&\le\frac{\delta}{\hat H}\left(128C(r+1, S_{r+1})+16M(r+1, S_{r+1})\eps_{r+1}^{-2}\right)\\
			&\le\frac{\delta}{\hat H}\left(128C(r+1, S_r)+64M(r+1, S_{r+1})\eps_r^{-2}\right)\\
			&=	\frac{\delta}{\hat H}\left[128C(r, S_r)+\eps_r^{-2}(64M(r+1, S_{r+1})-128|S_r\cap G_{\ge r-1}|-128)\right]\text{.}
			\end{split}\end{equation}

		By Observation \ref{obs:toyEL}, $\dlow_r\le\Mean{1}-\eps_r$. If $\EL$ returns correctly at round $r$, we have
			$$M(r+1,S_{r+1})
			=|\{A\in S_{r+1}:\mu_A < \Mean{1}-\eps_r\}|
			\le|\{A\in S_{r+1}:\mu_A < \dlow_r\}|
			<0.1|S_{r+1}|\le0.1|S_r|\text{.}$$
		For brevity, let $\Nsma$, $\Ncur$ and $\Nbig$ denote
		$|S_r\cap G_{\le r-2}|$, $|S_r\cap G_{r-1}|$ and $|S_r\cap G_{\ge r}|$, respectively.
		Note that
			$|S_r| = \Nsma + \Ncur + \Nsma + 1$.
		Then we have
			\begin{equation*}\begin{split}
				P_1
			&\le{P(r+1, S_{r+1})} + \delta'_r\\
			&\le\frac{\delta}{\hat H}\left[128C(r, S_r)+\eps_r^{-2}(|S_r|+64M(r+1, S_{r+1})-128|S_r\cap G_{\ge r-1}|-128)\right]\\
			&\le\frac{\delta}{\hat H}\left[128C(r, S_r)+\eps_r^{-2}(7.4(\Nsma+\Ncur+\Nbig+1)-128(\Ncur+\Nbig+1))\right]\\
			&\le\frac{\delta}{\hat H}\left[128C(r, S_r)+7.4\eps_r^{-2}\Nsma\right]\text{.}
			\end{split}\end{equation*}

		\textbf{Upper bound $P_2$.} Since $\FT$ returns True, procedure $\EL$ is not called. Then $P_2\le P(r+1, S_{r+1}) = P(r+1, S_r)$. By inequality \eqref{eq:toyPbound},
			\begin{equation*}\begin{split}
			P_2
			&\le \frac{\delta}{\hat H}\left[128C(r, S_r)+\eps_r^{-2}(64M(r+1, S_r)-128|S_r\cap G_{\ge r-1}|-128)\right]\\
			&\le \frac{\delta}{\hat H}\left[128C(r, S_r)+\eps_r^{-2}(64(\Nsma+\Ncur)-128(\Ncur+\Nbig+1))\right]\\
			&\le \frac{\delta}{\hat H}\left[128C(r, S_r)+64\eps_r^{-2}(\Nsma-\Ncur-\Nbig-1)\right]
			\le \frac{\delta}{\hat H}\cdot 128C(r, S_r)\text{.}
			\end{split}\end{equation*}
		Here the last step holds since by Observation \ref{obs:toyFT}, $\clow_r\ge\Mean{1}-2\eps_r$, and thus $\FT$ returns False implies that
			$$\Nsma
			= |S_r\cap G_{\le r - 2}|
			= |\{A\in S_r:\mu_A < \eps_{r-1}\}|
			< 0.5|S_r| = (\Nsma+\Ncur+\Nbig+1)/2\text{.}$$

		\textbf{Upper bound $P_3$.} By \eqref{eq:toyPbound} and $M(r+1,S_{r+1})\le M(r+1,S_r)$, we have
			\begin{equation*}\begin{split}
			P_3
			&\le P(r+1, S_{r+1}) + \delta'_r\\
			&\le \frac{\delta}{\hat H}\left[128C(r, S_r)+\eps_r^{-2}(64M(r+1, S_r)-128|S_r\cap G_{\ge r-1}|-128+|S_r|)\right]\\
			&= 	\frac{\delta}{\hat H}\left[128C(r, S_r)+\eps_r^{-2}(64(\Nsma+\Ncur)-128(\Ncur+\Nbig)-128+(\Nsma+\Ncur+\Nbig+1))\right]\\
			&\le \frac{\delta}{\hat H}\left[128C(r, S_r)+65\eps_r^{-2}\Nsma\right]\text{.}
			\end{split}\end{equation*}

		Recall that Case 3 happens with probability at most $0.01$, and
			$\Nsma = |S_r\cap G_{\le r-2}| = M(r, S_r)$.
		Therefore, we obtain the following bound on $P(r, S_r)$, which finishes the proof.
			\begin{equation*}\begin{split}
			P(r, S_r)
			&\le 0.01\cdot\frac{\delta}{\hat H}\left[128C(r, S_r)+65\eps_r^{-2}\Nsma\right] + 0.99\cdot\frac{\delta}{\hat H}\left[128C(r, S_r)+7.4\eps_r^{-2}\Nsma\right]\\
			&\le \frac{\delta}{\hat H}\left(128C(r, S_r)+16\eps_r^{-2}\Nsma\right)\\
			&\le \frac{\delta}{\hat H}\left(128C(r, S_r)+16M(r, S_r)\eps_r^{-2}\right)\text{.}
			\end{split}\end{equation*}
	\end{proof}

	\subsection{Proof of Lemma \ref{lem:newtoysample}}
	\begin{proof}[Proof of Lemma \ref{lem:newtoysample} (continued)]
		Recall that for each round $i$, $r_i$ is defined as the largest integer $r$ such that $|G_{\ge r}|\ge0.5|S_i|$, and
			$$
			T_{i,j} = \begin{cases}
			0, & j < r_i,\\
			\eps_i^{-2}\left(\ln\delta^{-1}+\ln\dfrac{H}{|G_j|\eps_i^{-2}}\right), & j\ge r_i
			\end{cases}
			$$
		is the number of samples that each arm in $G_j$ is charged at round $i$.

		We first show that
			$\sum_{j}|G_j|T_{i,j}$
		is an upper bound on
			$|S_i|\eps_i^{-2}\left(\ln\delta^{-1}+\ln\dfrac{H}{|S_i|\eps_i^{-2}}\right)$,
		the number of samples taken by $\ME$ and $\EL$ at round $i$.
		Recall that $|G_{\ge r_i}|\ge0.5|S_i|$. By definition of $T_{i,j}$,
			\begin{equation*}\begin{split}
				\sum_{j}|G_j|T_{i,j}
			&=	\sum_{j\ge r_i}|G_j|\eps_i^{-2}\left(\ln\delta^{-1}+\ln\frac{H}{|G_j|\eps_i^{-2}}\right)\\
			&\ge|G_{\ge r_i}|\eps_i^{-2}\left(\ln\delta^{-1}+\ln\frac{H}{|S_i|\eps_i^{-2}}\right)\\
			&\ge\frac{1}{2}|S_i|\eps_i^{-2}\left(\ln\delta^{-1}+\ln\frac{H}{|S_i|\eps_i^{-2}}\right)\text{.}
			\end{split}\end{equation*}

		Then we prove the upper bound on $\sum_{i}\Ex[T_{i,j}]$, the expected number of samples that each arm in $G_j$ is charged. For $i\le j + 1$, we have the straightforward bound
			\begin{equation}\label{eq:toyTbound1}
				\Ex[T_{i,j}]\le \eps_i^{-2}\left(\ln\delta^{-1}+\ln\dfrac{H}{|G_j|\eps_i^{-2}}\right)\text{.}
			\end{equation}
		For $i \ge j + 2$, we note that $T_{i,j}$ is non-zero only if $j\ge r_i$, which implies that $|G_{\ge j+1}|<0.5|S_i|$. We claim that this happens only if $\ME$ fails between round $j+2$ and round $i-1$, which happens with probability at most $0.01^{i-j-1}$. In fact, suppose $\ME$ is correct at some round $k$, where $j+2\le k\le i-1$. By Observations \ref{obs:toyFT} and \ref{obs:toyEL}, $\clow_k\ge\Mean{1}-2\eps_k$ and $\dlow_k\ge\Mean{1}-\eps_k$, where $\clow$ and $\dlow$ are the two lower thresholds used in $\FT$ and $\EL$. If $\FT$ returns False, by Fact \ref{F3}, we have
			$$
				|S_k\cap G_{< k-1}|
			=	\{A\in S_k:\mu_A<\Mean{1}-2\eps_k\}
			\le	\{A\in S_k:\mu_A<\clow_k\}
			<	0.5|S_k|\text{.}$$
		Since $S_{k+1}=S_k$ in this case, it follows that
			$|S_{k+1}\cap G_{< k-1}|<0.5|S_{k+1}|\text{.}$
		If $\FT$ returns True and the algorithm calls $\EL$, by Fact \ref{F4},
			$$
			|S_{k+1}\cap G_{<k}|
			=|\{A\in S_{k+1}:\mu_A<\Mean{1}-\eps_k\}|
			\le|\{A\in S_{k+1}:\mu_A<\dlow_k\}|
			<0.1|S_{k+1}|\text{.}$$
		In either case, we have
			$|S_{k+1}\cap G_{\ge k-1}|>0.5|S_{k+1}|$,
		and thus,
			$$|G_{\ge j+1}|\ge|G_{\ge k-1}|\ge|S_{k+1}\cap G_{\ge k-1}|>0.5|S_{k+1}|\ge0.5|S_i|\text{,}$$
		which contradicts $|G_{\ge j+1}|<0.5|S_i|$.
		Therefore, for $i \ge j + 2$, we have
			\begin{equation}\begin{split}\label{eq:toyTbound2}
				\Ex[T_{i,j}] &= \pr{T_{i,j}>0}\cdot\eps_i^{-2}\left(\ln\delta^{-1}+\ln\dfrac{H}{|G_j|\eps_i^{-2}}\right)\\
				&\le 0.01^{i-j-1}\cdot\eps_i^{-2}\left(\ln\delta^{-1}+\ln\dfrac{H}{|G_j|\eps_i^{-2}}\right)\text{.}
			\end{split}\end{equation}

		By \eqref{eq:toyTbound1} and \eqref{eq:toyTbound2}, a direct summation gives
			$$\sum_{i}\Ex[T_{i,j}] = O\left(\eps_j^{-2}\left(\ln\delta^{-1}+\ln\frac{H}{|G_j|\eps_j^{-2}}\right)\right)\text{.}$$
	\end{proof}

\end{document}